%%%%%%%% ICML 2018 EXAMPLE LATEX SUBMISSION FILE %%%%%%%%%%%%%%%%%

\documentclass{article}

% Recommended, but optional, packages for figures and better typesetting:
\usepackage{microtype}
\usepackage{graphicx}
\usepackage{subcaption}
\usepackage{dsfont}
\usepackage{booktabs} % for professional tables

% hyperref makes hyperlinks in the resulting PDF.
% If your build breaks (sometimes temporarily if a hyperlink spans a page)
% please comment out the following usepackage line and replace
% \usepackage{icml2018} with \usepackage[nohyperref]{icml2018} above.
\usepackage{hyperref}

\usepackage{amsmath}
\usepackage{amsfonts}
\usepackage{bm}
\usepackage{amsthm}
\usepackage{cleveref}
\usepackage{color}
\usepackage{wrapfig}

% Attempt to make hyperref and algorithmic work together better:

\newcommand{\bx}{\mathbf{x}}
\newcommand{\bk}{\mathbf{k}}
\newcommand{\by}{\mathbf{y}}
\newcommand{\bz}{\mathbf{z}}
\newcommand{\bff}{\mathbf{f}}
\newcommand{\bu}{\mathbf{u}}
\newcommand{\bv}{\mathbf{v}}

\newcommand{\bh}{\mathbf{h}}
\newcommand{\bg}{\mathbf{g}}

\newcommand{\bI}{\mathbf{I}}
\newcommand{\bG}{\mathbf{G}}
\newcommand{\bH}{\mathbf{H}}
\newcommand{\bzero}{\mathbf{0}}

\newcommand{\bK}{\mathbf{K}}
\newcommand{\bC}{\mathbf{C}}

\newcommand{\balpha}{\bm{\alpha}}

\newcommand{\bpsi}{\bm{\psi}}

\newcommand{\btheta}{\bm{\theta}}

\newcommand{\beps}{\bm{\epsilon}}

\DeclareMathOperator*{\argmax}{\arg\!\max}

\newtheorem{asp}{Assumption}
\newtheorem{thm}{Theorem}
\newtheorem{lem}{Lemma}
\crefname{lem}{lemma}{lemmas}
\newtheorem{prop}{Proposition}

% Use the following line for the initial blind version submitted for review:
%\usepackage{icml2018}
\usepackage[english]{babel}
% If accepted, instead use the following line for the camera-ready submission:
\usepackage[accepted]{icml2018}

% The \icmltitle you define below is probably too long as a header.
% Therefore, a short form for the running title is supplied here:
\icmltitlerunning{A Spectral Approach to Gradient Estimation for Implicit Distributions}

\begin{document}

\twocolumn[
\icmltitle{A Spectral Approach to Gradient Estimation for Implicit Distributions}

% It is OKAY to include author information, even for blind
% submissions: the style file will automatically remove it for you
% unless you've provided the [accepted] option to the icml2018
% package.

% List of affiliations: The first argument should be a (short)
% identifier you will use later to specify author affiliations
% Academic affiliations should list Department, University, City, Region, Country
% Industry affiliations should list Company, City, Region, Country

% You can specify symbols, otherwise they are numbered in order.
% Ideally, you should not use this facility. Affiliations will be numbered
% in order of appearance and this is the preferred way.
\icmlsetsymbol{equal}{*}

\begin{icmlauthorlist}
\icmlauthor{Jiaxin Shi}{tsinghua}
\icmlauthor{Shengyang Sun}{toronto}
\icmlauthor{Jun Zhu}{tsinghua}
\end{icmlauthorlist}

\icmlaffiliation{tsinghua}{Dept. of Comp. Sci. \& Tech., BNRist Center, State Key Lab for Intell. Tech. \& Sys., THBI Lab, Tsinghua University}
\icmlaffiliation{toronto}{Dept. of Comp. Sci., University of Toronto}

\icmlcorrespondingauthor{Jiaxin Shi}{shijx15@mails.tsinghua.edu.cn}
%\icmlcorrespondingauthor{Shengyang Sun}{ssy@cs.toronto.edu}
\icmlcorrespondingauthor{Jun Zhu}{dcszj@tsinghua.edu.cn}

% You may provide any keywords that you
% find helpful for describing `your paper; these are used to populate
% the "keywords" metadata in the PDF but will not be shown in the document
\icmlkeywords{Machine Learning, ICML}

\vskip 0.3in
]

% this must go after the closing bracket ] following \twocolumn[ ...

% This command actually creates the footnote in the first column
% listing the affiliations and the copyright notice.
% The command takes one argument, which is text to display at the start of the footnote.
% The \icmlEqualContribution command is standard text for equal contribution.
% Remove it (just {}) if you do not need this facility.

\printAffiliationsAndNotice{}  % leave blank if no need to mention equal contribution
%\printAffiliationsAndNotice{\icmlEqualContribution} % otherwise use the standard text.

\begin{abstract}
Recently there have been increasing interests in learning and inference with implicit distributions (i.e., distributions without tractable densities). To this end, we develop a gradient estimator for implicit distributions based on Stein's identity and a spectral decomposition of kernel operators, where the eigenfunctions are approximated by the Nystr{\"o}m method. Unlike the previous works that only provide estimates at the sample points, our approach directly estimates the gradient function, thus allows for a simple and principled out-of-sample extension. We provide theoretical results on the error bound of the estimator and discuss the bias-variance tradeoff in practice. The effectiveness of our method is demonstrated by applications to gradient-free Hamiltonian Monte Carlo and variational inference with implicit distributions. Finally, we discuss the intuition behind the estimator by drawing connections between the Nystr{\"o}m method and kernel PCA, which indicates that the estimator can automatically adapt to the geometry of the underlying distribution.
%A surprising feature of the score estimator we get (and the KL estimator built upon it) is that when possible, the density information of the numerator can be explicitly incorporated into the estimation. This feature turns out a perfect fit for variational inference problems, since the variational lower bound includes a KL-divergence term between the variational posterior and the prior distribution. In most common cases, the density function of the prior is known.
\end{abstract}
\vspace{-0.4cm}
\section{Introduction}
\label{sec:intro}

Recently there have been increasing interests in learning and inference with implicit distributions, i.e., distributions defined by a sampling process but without tractable densities. Popular examples include Generative adversarial networks (GAN)~\citep{goodfellow2014generative,mohamed2016learning}. Compared to the explicit likelihoods (e.g., Gaussian) in other deep generative models such as variational autoencoders (VAE)~\citep{kingma2013auto}, implicit distributions are shown able to capture the complex data manifold that lies in a high dimensional space, leading to more realistic samples generated by GAN than other models. Besides, as the constraint of requiring an explicit density is removed, implicit distributions are treated as more flexible variants of variational families for approximate inference~\citep{ranganath2016operator,liu2016two,mescheder2017adversarial,tran2017hierarchical,huszar2017variational,li2018gradient,shi2018kernel}. %These works have demonstrated implicit distributions as a better variational family than traditional exponential families, in the sense that the hypothesis set is closer to the true posterior.

Despite that it is appealing to use flexible implicit distributions, which capture complex correlations and manifold structures, deploying them in practical scenes is still challenging. This is because most learning and inference algorithms require optimizing some divergences between two distributions, which often rely on evaluating the densities of them. However, the density of an implicit distribution is intractable and we only have access to its samples. Previous works have explored two directions to solve the problem. One is to first approximate the optimization objective with the samples and then use the approximation to guide the learning procedure. Many works in this direction are based on the fact that the density ratio between two distributions can be estimated from their samples, by a probabilistic classifier (also known as the discriminator) trained in an adversarial game~\citep{donahue2016adversarial,dumoulin2016adversarially,mescheder2017adversarial,tran2017hierarchical,huszar2017variational}, or by kernel-based estimators~\citep{shi2018kernel}. The other direction is to estimate the gradients instead of the objective. \citet{li2018gradient} propose the Stein gradient estimator for the log density of an implicit distribution. It is based on a ridge regression that inverts a generalized version of Stein's identity~\citep{gorham2015measuring,liu2016kernelized}. As argued in their work, this approach is more direct and avoids probable arbitrarily diverse gradients provided by the approximate objective. In this paper, we focus on the latter direction.

Though the Stein gradient estimator has been shown to be a fast and easy way to obtain gradient estimates for implicit models. It is still limited by its simple formulation, i.e., the unjustified choice of both the test function (the kernel feature mapping) and the regularization scheme (the Frobenius-norm regularization used in ridge regression). The problem has deeper implications. For instance, no theoretical results
% only shown as minimizing a penalized  version of the V-statistics of the kernel Stein discrepancy.
have been established for the estimator. Moreover, there is no principled way to obtain gradient estimates at positions out of the sample points.

In this paper, we develop a novel gradient estimator for implicit distributions, which is called the \emph{Spectral Stein Gradient Estimator} (SSGE). To approximate the gradient function of the log density (i.e., $\nabla_{\bx}\log q(\bx)$), SSGE expands it in terms of the eigenfunctions of a kernel-based operator. These eigenfunctions are orthogonal with respect to the underlying distribution. By setting the test functions in the Stein's identity to these eigenfunctions, we can take advantage of their orthogonality to obtain a simple solution. The eigenfunctions in the solution are then approximated by the Nystr{\"o}m method~\citep{nystrom1930praktische,baker1997numerical,williams2001using}. Unlike the Stein gradient estimator~\citep{li2018gradient}, our approach allows for a direct and principled out-of-sample extension.
% without re-training or fitting a parametric estimator
 Moreover, we provide theoretical analysis on the error bound of SSGE and discuss the bias-variance tradeoff in practice. We also discuss its effectiveness in reducing the curse of dimensionality by drawing connections to kernel PCA.

\vspace{-0.15cm}
\section{Background}
\label{sec:bg}

In this section we briefly introduce the Nystr{\"o}m method and the Stein gradient estimator.

\vspace{-0.15cm}
\subsection{The Nystr{\"o}m Method} \label{sec:nystrom}

The Nystr{\"o}m method originates as a method for approximating the solution of Fredholm integral equations of the second kind~\citep{nystrom1930praktische,baker1997numerical}. It was used by \citet{williams2001using} for estimating extensions of eigenvectors in Gaussian process regression. Specifically, the following equation for finding the eigenfunctions $\{\psi_j\}_{j\geq 1}, \psi_j \in L^2(\mathcal{X}, q)$\footnote{$L^2(\mathcal{X}, q)$ denotes the space of all square-integrable functions w.r.t. $q$.} of the covariance kernel $k(\bx, \by)$ w.r.t. the probability measure $q$ is considered:
\begin{equation} \label{eq:fredholm}
\int k(\bx, \by)\psi(\by)q(\by) d\by = \mu \psi(\bx).
\end{equation}
And there is a constraint that the eigenfunctions $\{\psi_j\}_{j\geq 1}$ are orthonormal under $q$:
\begin{equation} \label{eq:eigenfun-orthon}
\int \psi_i(\bx)\psi_j(\bx)q(\bx) d\bx = \delta_{ij},
\end{equation}
where $\delta_{ij} = \mathds{1}[i = j]$.
Approximating the left side of \cref{eq:fredholm} with its unbiased Monte Carlo estimate using i.i.d. samples $\{\bx^1, \dots, \bx^M\}$ from $q$ and applying the equation to these samples, we obtain
\begin{equation} \label{eq:fredholm-eigen}
\frac{1}{M}\bK\bpsi \approx \mu\bpsi,
\end{equation}
where $\bK$ is the Gram matrix: $\bK_{ij} = k(\bx^i, \bx^j)$, and $\bpsi = \left[\psi(\bx^1), \dots, \psi(\bx^M)\right]^\top$. This is an eigenvalue problem for $\bK$. We compute the eigenvectors $\bu_1, \dots, \bu_J$ with the $J$ largest eigenvalues $\lambda_1 \geq \dots \geq \lambda_J$ for $\bK$. Now we have the solutions of \cref{eq:fredholm-eigen} by comparing against $\bK\bu_j=\lambda_j\bu_j$:
\begin{align}
\psi_j(\bx^m) &\approx \sqrt{M}u_{jm},\quad m=1, \dots, M, \label{eq:eigenfun-vec}\\
\mu_j &\approx \frac{\lambda_j}{M}. \label{eq:eigenvalue-fun-vec}
\end{align}
Note that the scaling factor in \cref{eq:eigenfun-vec} is due to the empirical constraint translated from \cref{eq:eigenfun-orthon}: $\frac{1}{M}\sum_{m=1}^M\psi_i(\bx^m)\psi_j(\bx^m) \approx \delta_{ij}$. \citet{baker1997numerical} shows that for a fixed kernel $k$, $\frac{\lambda_j}{M}$ converges to $\mu_j$ in the limit as $M\to\infty$.

Plugging these solutions back into \cref{eq:fredholm}, we get the Nystr{\"o}m formula for approximating the value of the $j$th eigenfunction at any point $\bx$:
\begin{equation} \label{eq:nystrom}%\vspace{-0.1cm}
\psi_j(\bx) \approx \hat{\psi}_j(\bx) = \frac{\sqrt{M}}{\lambda_j}\sum_{m=1}^M u_{jm}k(\bx, \bx^m).
\end{equation}
The Nystr{\"o}m method has been shown to be a thread linking many dimension reduction methods such as kernel PCA~\citep{scholkopf1998nonlinear}, multidimensional scaling (MDS)~\citep{borg2005modern}, local linear embedding (LLE)~\citep{roweis2000nonlinear}, Laplacian eigenmaps~\citep{belkin2003laplacian}, and spectral clustering~\citep{weiss1999segmentation}, unifying their out-of-sample extensions~\citep{bengio2004learning,bengio2004out,burges2010dimension}. We will discuss these connections later.

\vspace{-0.15cm}
\subsection{Stein's Identity and Stein Gradient Estimator}
\label{sec:stein}

Recent developments in Stein discrepancy and its kernelized extensions~\citep{gorham2015measuring,chwialkowski2016kernel,liu2016kernelized,liu2016stein} have renewed the interests in Stein's method, which is a classic tool in statistics. Central to these works is an equation that generalizes the original Stein's identity~\citep{stein1981estimation}, shown in the following theorem.

\begin{thm}[\citealt{gorham2015measuring,liu2016kernelized}] \label{thm:stein-identity}
	Assume that $q(\bx)$ is a continuous differentiable probability density supported on $\mathcal{X}\subset\mathbb{R}^d$. $\bh: \mathcal{X} \to \mathbb{R}^{d'}$ is a smooth vector-valued function $\bh(\bx) = \left[h_1(\bx), h_2(\bx),\dots, h_{d'}(\bx)\right]^\top$, and $\forall i\in 1,\dots,d'$, $h_i$ is in the \textbf{Stein class} of $q$, i.e.,
    \begin{equation}\label{eq:stein-cond}
    \int_{\bx \in \mathcal{X}} \nabla_{\bx}\left(h_i(\bx)q(\bx)\right)d\bx = 0.
    \end{equation}	
    %satisfies a mild boundary condition:
	Then the following identity holds:
	\begin{equation}\label{eq:stein}
	\mathbb{E}_q[\bh(\bx)\nabla_{\bx}\log q(\bx)^\top + \nabla_{\bx}\bh(\bx)] = \bzero.
	\end{equation}
\end{thm}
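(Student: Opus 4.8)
The plan is to verify the matrix identity~\cref{eq:stein} entry-by-entry, reducing the claimed $d'\times d$ identity to a scalar statement and then recognizing the Stein class condition~\cref{eq:stein-cond} as exactly the vanishing integral we need. Writing the $(i,j)$ entry explicitly, the matrix $\bh(\bx)\nabla_{\bx}\log q(\bx)^\top$ has entries $h_i(\bx)\,\partial_{x_j}\log q(\bx)$ (it is the outer product of a $d'\times 1$ vector with a $1\times d$ vector), while $\nabla_{\bx}\bh(\bx)$ contributes the Jacobian entries $\partial_{x_j} h_i(\bx)$. Hence it suffices to show, for every $i\in\{1,\dots,d'\}$ and $j\in\{1,\dots,d\}$, that $\mathbb{E}_q[h_i\,\partial_{x_j}\log q + \partial_{x_j} h_i] = 0$.

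First I would rewrite the score term using $\nabla_{\bx}\log q = \nabla_{\bx} q / q$, valid wherever $q>0$, so that the density cancels inside the expectation:
\begin{equation*}
\mathbb{E}_q\!\left[h_i\,\partial_{x_j}\log q\right] = \int_{\mathcal{X}} h_i(\bx)\,\frac{\partial_{x_j} q(\bx)}{q(\bx)}\,q(\bx)\,d\bx = \int_{\mathcal{X}} h_i(\bx)\,\partial_{x_j} q(\bx)\,d\bx.
\end{equation*}
Adding the second term $\mathbb{E}_q[\partial_{x_j} h_i] = \int_{\mathcal{X}} \partial_{x_j} h_i(\bx)\,q(\bx)\,d\bx$ and applying the product rule in reverse collapses the two integrands into a single total derivative:
\begin{equation*}
\int_{\mathcal{X}} \big(h_i\,\partial_{x_j} q + q\,\partial_{x_j} h_i\big)\,d\bx = \int_{\mathcal{X}} \partial_{x_j}\!\big(h_i(\bx)q(\bx)\big)\,d\bx.
\end{equation*}

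The final step is to observe that this scalar quantity is precisely the $j$-th component of the vector $\int_{\mathcal{X}} \nabla_{\bx}(h_i(\bx)q(\bx))\,d\bx$, which the Stein class hypothesis~\cref{eq:stein-cond} assumes to equal $\bzero$; hence every component vanishes and the $(i,j)$ entry is zero, which gives the matrix identity. I do not expect a serious computational obstacle: the differentiability of $q$ and smoothness of $\bh$ guarantee the product rule applies pointwise, and the matrix/expectation bookkeeping is routine. The only genuinely substantive point—the vanishing of the integrated total derivative, i.e. the boundary terms one would otherwise control via integration by parts—is not something to be proved but is supplied by assumption, since the Stein class is defined exactly so that $\int_{\mathcal{X}} \nabla_{\bx}(h_i q)\,d\bx = \bzero$. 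Thus the "hard part" here is conceptual rather than technical: noticing that combining the two expectation terms via the product rule reproduces exactly the integrand appearing in the definition of the Stein class, after which the conclusion is immediate.
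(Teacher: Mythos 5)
Your proof is correct: the entry-wise reduction, the cancellation of $q$ in $\mathbb{E}_q[h_i\,\partial_{x_j}\log q]$, and the reverse product rule collapsing the two terms into $\int_{\mathcal{X}}\partial_{x_j}(h_i q)\,d\bx$, which vanishes by the Stein class hypothesis, is exactly the standard argument. The paper itself offers no proof of this theorem (it is quoted from \citet{gorham2015measuring} and \citet{liu2016kernelized}), and your derivation matches the one given in those references, so there is nothing to compare beyond noting that your use of $\nabla_{\bx}\log q=\nabla_{\bx}q/q$ implicitly relies on $q>0$ on $\mathcal{X}$, which the phrase ``supported on $\mathcal{X}$'' is meant to guarantee.
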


The condition (\ref{eq:stein-cond}) can be easily checked using integration by parts or divergence theorem. Specifically, when $\mathcal{X} = \mathbb{R}^d$, \cref{eq:stein-cond} holds if $\lim_{\|\bx\|\to\infty}\bh(\bx)q(\bx) = \bzero$; when $\mathcal{X}$ is a compact subset of $\mathbb{R}^d$ with piecewise smooth boundary $\partial\mathcal{X}$, \cref{eq:stein-cond} holds if $\bh(\bx)q(\bx) = \bzero, \forall \bx\in \partial{\mathcal{X}}$. Here $\bh(\bx)$ is called the test function. We can check that for a RBF kernel $k$, and for any fixed $\bx$, $k(\bx, \cdot)$ and $k(\cdot, \bx)$ are in the Stein class of continuous differentiable densities supported on $\mathbb{R}^d$.

Because the expectation in \cref{eq:stein} can be approximated by Monte Carlo estimates, the identity connects $\nabla_{\bx} \log q(\bx)$ and the samples from $q$.
%\Cref{eq:stein} enables us to estimate $\nabla_{\bx}\log q(\bx)$ without explicit density function.
Inspired by this, \citet{li2018gradient} propose the Stein gradient estimator, which inverts \cref{eq:stein} to obtain estimates of $\nabla_{\bx}\log q(\bx)$ at the sample points. Below we briefly review their method. Specifically, consider $M$ i.i.d. samples $\bx^{1:M}$ drawn from $q(\bx)$. We define two matrices $\bH=\left[\bh(\bx^1), \cdots, \bh(\bx^M)\right] \in \mathbb{R}^{d'\times M}$ and $\bG=\left[\nabla_{\bx^1}\log q(\bx^1), \cdots, \nabla_{\bx^M}\log q(\bx^M)\right]^\top \in \mathbb{R}^{M \times d}$. Monte Carlo sampling with \cref{eq:stein} shows that
\begin{equation} \label{eq:stein-grad-mc}
    -\frac{1}{M}\bH\bG \approx \overline{\nabla_{\bx}\bh},
\end{equation}
where $\overline{\nabla_{\bx}\bh} = \frac{1}{M}\sum_{m=1}^M \nabla_{\bx^m}\bh(\bx^m) \in \mathbb{R}^{d'\times d}$, $\nabla_{\bx^m}\bh(\bx^m) = \left[\nabla_{\bx^m}h_1(\bx^m), \dots, \nabla_{\bx^m}h_{d'}(\bx^m)\right]^\top$. \Cref{eq:stein-grad-mc} inspires the following ridge regression problem:
\begin{equation*} %\label{eq:stein-ridge-reg}
    \underset{\hat{\bG} \in \mathbb{R}^{M \times d}}{\mathrm{argmin}} \|\overline{\nabla_{\bx}\bh} + \frac{1}{M}\bH \hat{\bG}\|^2_F + \frac{\eta}{M^2}\|\hat{\bG}\|^2_F,
\end{equation*}
where $\|\cdot\|^2_{F}$ denotes the Frobenius norm of a matrix and $\eta > 0$ is the regularization coefficient. It has an analytic solution that
\begin{equation}
\label{eq:stein-grad-est}
    \hat{\bG}^{Stein} = - M(\bK + \eta \bI)^{-1}\bH^\top \overline{\nabla_{\bx}\bh},
\end{equation}
where $\bK = \bH^\top\bH$. By noticing that $\bK_{ij} = \bh(\bx^i)^\top\bh(\bx^j)$ and applying the kernel trick, we have $\bK_{ij} = k(\bx^i, \bx^j)$, where $k:\!\mathbb{R}^d\times\mathbb{R}^d\to\mathbb{R}$ is a positive definite kernel. %chosen to be the RBF kernel in the original work \citep{li2018gradient}. 
Similarly we can show $(\bH^\top \overline{\nabla_{\bx}\bh})_{ij} = \frac{1}{M}\sum_{m=1}^M \nabla_{x_j^m}k(\bx^i, \bx^m)$. With the kernel trick, the above derivation implicitly set the test function to be the feature mapping $\bh: \mathbb{R}^d\to \mathcal{H}, \bh(\bx) = k(\bx, \cdot)$, where $\mathcal{H}$ is the Reproducing Kernel Hilbert Space (RKHS) induced by $k$. %Note that Stein's identity holds here because $\bh$ satisfies the boundary condition when $k$ is set to the RBF kernel.

Though introducing the kernel trick enhances the expressiveness of the Stein gradient estimator, the choice of the test function is not well justified. It is unclear whether \cref{eq:stein} holds when the test function is set to the kernel feature mapping $\bh(\bx) = k(\bx, \cdot)$, which maps from $\mathbb{R}^d$ to $\mathcal{H}$ instead of the common $\mathbb{R}^{d'}$. \citet{li2018gradient} show that their approach is equivalent to minimizing a regularized version of the V-statistics of kernel Stein discrepancy~\citep{liu2016kernelized}, which has proved to be effective in testing goodness-of-fit. However, it is still unclear whether the test power is sufficient due to the added Frobenius-norm regularization term. %Moreover, coming with the regularization is a coefficient $\eta$ that needs tuning. However, the intuition behind the regularization is unclear except that it enforces smoothness, which makes it hard to tune $\eta$ to achieve a good tradeoff between bias and variance. 
Besides, \cref{eq:stein-grad-est} only gives the gradient estimates at the sample points. For out-of-sample prediction at a test point, two choices are proposed in \citet{li2018gradient}: one is by adding the test point to the sample points and re-compute \cref{eq:stein-grad-est}, which could be computationally demanding. We will refer to this out-of-sample extension 
%of the Stein gradient estimator 
as \emph{Stein$^+$}. The other choice is to refit a parametric estimator (a linear combination of RBF kernels), which is cheaper but less accurate. Both approaches assume that the test points are also sampled from $q$. However, they are unjustified when the assumption is not satisfied. Since the latter is an approximation to the former, we will only compare with Stein$^+$ in the experiments for out-of-sample predictions.

\vspace{-0.15cm}
\section{Method}

In this section we derive a gradient estimator for implicit distributions called the \emph{Spectral Stein Gradient Estimator} (SSGE).
%which relies on a spectral decomposition of a kernel operator. 
Unlike the previous Stein gradient estimator that only provides estimates at the sample points, SSGE directly estimates the gradient function and thus allows simple and principled out-of-sample predictions. We also provide theoretical analysis on the error bound of SSGE.

\vspace{-0.15cm}
\subsection{Spectral Stein Gradient Estimator}
\label{sec:derivation}

To begin with, let $\bx$ denote a $d$-dimensional vector in $\mathbb{R}^d$. Consider an implicit distribution $q(\bx)$ supported on $\mathcal{X}\subset \mathbb{R}^d$, from which we observe $M$ i.i.d. samples $\bx^{1:M}$. We denote the target gradient function to estimate by $\bg:\mathcal{X}\to \mathbb{R}^d$: $
	\bg(\bx) = \nabla_{\bx}\log q(\bx).$
The $i$th component of the gradient is $g_i(\bx) = \nabla_{x_i}\log q(\bx)$. We assume $g_1,\dots,g_d \in L^2(\mathcal{X}, q)$.
%, where $L^2(\mathcal{X}, q)$ denotes all square-integrable functions under $q$. 
As introduced in \cref{sec:nystrom}, $\{\psi_j\}_{j\geq 1}$ form an orthonormal basis of $L^2(\mathcal{X}, q)$. So we can expand $g_i(\bx)$ into the following spectral series:
\begin{equation} \label{eq:grad-series}
g_i(\bx) = \sum_{j=1}^\infty \beta_{ij}\psi_j(\bx).
\end{equation}
Below we will show how to estimate the coefficients $\beta_{ij}$.
According to \Cref{thm:stein-identity}, we have the following proposition.

\begin{prop} \label{prop:eigen-boundary}
	If $k(\cdot, \cdot)$ has continuous second order partial derivatives, and both $k(\bx, \cdot)$ and $k(\cdot, \bx)$ are in the Stein class of $q$, the following set of equations hold true:
	\begin{equation} \label{eq:eigen-stein}
		\mathbb{E}_q[\psi_j(\bx)g(\bx) + \nabla_{\bx}\psi_j(\bx)] = \bzero,\quad j=1,2 \dots, \infty.
	\end{equation}
\end{prop}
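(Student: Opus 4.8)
The plan is to recognize that \cref{eq:eigen-stein} is nothing but \cref{eq:stein} of \cref{thm:stein-identity} specialized to the scalar-valued test function $\bh(\bx) = \psi_j(\bx)$ (that is, the case $d' = 1$). Indeed, since $\nabla_{\bx}\log q(\bx)$ is precisely the gradient function appearing in \cref{eq:eigen-stein}, substituting $\bh(\bx) = \psi_j(\bx)$ into \cref{eq:stein} reproduces \cref{eq:eigen-stein} verbatim. Hence the whole proposition reduces to checking the single remaining hypothesis of \cref{thm:stein-identity}: that each eigenfunction $\psi_j$ is smooth and lies in the Stein class of $q$, i.e. satisfies \cref{eq:stein-cond}.

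To verify that $\psi_j$ is in the Stein class, I would exploit its defining Fredholm equation \cref{eq:fredholm}, rewritten as the integral representation
\begin{equation*}
\psi_j(\bx) = \frac{1}{\mu_j}\int k(\bx, \by)\psi_j(\by)q(\by)\,d\by,
\end{equation*}
which expresses $\psi_j$ as a $q$-weighted superposition of the kernel sections $k(\cdot, \by)$. Multiplying by $q(\bx)$, taking $\nabla_{\bx}$, and moving the gradient inside the $\by$-integral gives
\begin{equation*}
\nabla_{\bx}\bigl(\psi_j(\bx)q(\bx)\bigr) = \frac{1}{\mu_j}\int \nabla_{\bx}\bigl(k(\bx, \by)q(\bx)\bigr)\psi_j(\by)q(\by)\,d\by.
\end{equation*}
Integrating over $\bx$ and swapping the order of integration by Fubini, the inner $\bx$-integral becomes $\int_{\bx\in\mathcal{X}}\nabla_{\bx}\bigl(k(\bx, \by)q(\bx)\bigr)d\bx$, which vanishes for every fixed $\by$ precisely because $k(\cdot, \by)$ is assumed to be in the Stein class of $q$. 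Therefore $\int_{\bx\in\mathcal{X}}\nabla_{\bx}(\psi_j(\bx)q(\bx))\,d\bx = \bzero$, establishing \cref{eq:stein-cond} for $\psi_j$ and completing the reduction.

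It remains to use the regularity assumptions to make the two interchanges above rigorous, and this is where I expect the main difficulty to lie. The hypothesis that $k$ has continuous second-order partial derivatives serves a double purpose: it guarantees that $\psi_j$, being an integral transform of $k$, is itself continuously differentiable, so that it is a legitimate smooth test function for \cref{thm:stein-identity} and $\nabla_{\bx}\psi_j$ in \cref{eq:eigen-stein} is well defined; and it supplies the continuity needed to differentiate under the integral sign. The main obstacle is thus the analytic justification of (i) differentiation under the $\by$-integral and (ii) the Fubini interchange of the $\bx$- and $\by$-integrals. I would discharge both by exhibiting integrable dominating functions: continuity of the derivatives of $k$ controls the integrands on compact sets, while $\psi_j\in L^2(\mathcal{X}, q)$ together with the Cauchy--Schwarz inequality bounds $\int|\psi_j(\by)|q(\by)\,d\by$, and the decay or boundary behaviour already encoded in the Stein-class assumption on $k(\cdot, \by)$ handles the contributions near infinity or the boundary $\partial\mathcal{X}$.
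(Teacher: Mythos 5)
Your proposal is correct, and the reduction is the same as the paper's: everything hinges on showing that $\psi_j$ is a valid test function for \Cref{thm:stein-identity}, i.e.\ that it lies in the Stein class of $q$. Where you diverge is in how that membership is established. The paper's route (in \Cref{app:proof-eigen-stein}) is algebraic: it invokes Mercer's theorem and the $\ell^2$-characterization of the RKHS (\Cref{lem:mercer,lem:l2-vs-rkhs}) to conclude $\psi_j\in\mathcal{H}$ whenever $\mu_j>0$, and then applies the result of Liu et al.\ (\Cref{lem:rkhs-stein}) that every element of $\mathcal{H}$ inherits the Stein-class property from the kernel sections. You instead work directly with the Fredholm eigen-equation \cref{eq:fredholm}, differentiate under the $\by$-integral, and use Fubini so that the vanishing of $\int_{\bx}\nabla_{\bx}\left(k(\bx,\by)q(\bx)\right)d\bx$ for each fixed $\by$ transfers to $\psi_j$. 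The two arguments exploit the same underlying mechanism (an integral representation of $\psi_j$ in terms of kernel sections), but yours is more self-contained: it avoids Mercer's theorem and hence the compactness of $\mathcal{X}$ that \Cref{lem:mercer,lem:l2-vs-rkhs} assume, and the bound $\int|\psi_j|\,dq\leq\|\psi_j\|_{L^2(q)}=1$ you use for domination is exactly right. The price is that you must justify the two interchanges yourself rather than outsourcing them to the cited lemma; you correctly identify this as the technical crux and your sketch of dominating functions is plausible, though note that the Stein-class hypothesis on $k(\cdot,\by)$ only asserts that the $\bx$-integral vanishes, not that the integrand is absolutely integrable, so a fully rigorous Fubini step needs a mild additional integrability condition (a gap the paper's own route shares implicitly through \Cref{lem:rkhs-stein}). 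Both proofs also tacitly require $\mu_j>0$, which is where \Cref{asp:unique-eigen} enters.
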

\begin{proof}
	We only need to prove that $\psi_j(\bx)$ is in the Stein class of $q$. See \Cref{app:proof-eigen-stein} for details.
\end{proof}

Substituting \cref{eq:grad-series} into \cref{eq:eigen-stein} and using the orthonormality of $\{\psi_j\}_{j\geq 1}$, we can show that
\begin{equation*}
	\beta_{ij} = -\mathbb{E}_q \nabla_{x_i}\psi_j(\bx).
\end{equation*}
To estimate $\beta_{ij}$, we need an approximation of $\nabla_{x_i}\psi_j(\bx)$. The key observation is that derivatives can be taken w.r.t. both sides of \cref{eq:fredholm}:
\begin{equation} \label{eq:grad-fredholm}
	\begin{aligned}
	\mu_j\nabla_{x_i}\psi_j(\bx) &= \nabla_{x_i}\int k(\bx, \by)\psi_j(\by)q(\by) d\by \\
	&= \int \nabla_{x_i}k(\bx, \by)\psi_j(\by)q(\by) d\by.
	\end{aligned}
\end{equation}
Monte-Carlo sampling with \cref{eq:grad-fredholm}, we have an estimate of $\nabla_{x_i}\psi_j(\bx)$:
\begin{equation} \label{eq:grad-eigen-est}
	\hat{\nabla}_{x_i}\psi_j(\bx) \approx \frac{1}{\mu_jM}\sum_{m=1}^M \nabla_{x_i} k(\bx, \bx^m)\psi_j(\bx^m).
\end{equation}
Substituting \cref{eq:eigenfun-vec,eq:eigenvalue-fun-vec} into \cref{eq:grad-eigen-est} and comparing with \cref{eq:nystrom}, we can show
\begin{equation} \label{eq:grad-nystrom-is-good}
	\hat{\nabla}_{x_i}\psi_j(\bx) \approx \nabla_{x_i}\hat{\psi}_j(\bx).
\end{equation}
Perhaps surprisingly, \Cref{eq:grad-nystrom-is-good} indicates that $\nabla_{x_i}\hat{\psi}_j(\bx)$ is a good approximation to $\nabla_{x_i}\psi_j(\bx)$\footnote{This  does not hold for general functions.}. In fact, as we shall see in \Cref{thm:consistency},
the error introduced by Nystr{\"o}m approximation is negligible with high probability as $M\to \infty$.

Now truncating the series expansion to the first $J$ terms and plugging in the Nystr{\"o}m approximations of $\{\psi_j\}_{j=1}^J$, we get our estimator:
\begin{align}
	&\hat{g}_i(\bx) = \sum_{j=1}^{J}\hat{\beta}_{ij}\hat{\psi}_j(\bx),\label{eq:grad-est}\\
	&\hat{\beta}_{ij} = -\frac{1}{M}\sum_{m=1}^M \nabla_{x_i}\hat{\psi}_j(\bx^m) \label{eq:coef-est},
\end{align}
where $\hat{\psi}_j$ is the Nystr{\"o}m approximation of $\psi_j$ as in \Cref{sec:nystrom}. We use RBF kernels in all experiments.

\textbf{Computational Cost}\; The spectral gradient estimator $\hat{g}_i(\bx)$ depends on $\hat{\beta}_{ij}$ and $\hat{\psi}_j(\bx)$. To compute them, the computational bottleneck lies in computing the Gram matrix and its eigendecomposition, which have complexity $O(M^2d)$ and $O(M^3)$, respectively. Therefore the computational cost of constructing the estimator is $O(M^3+M^2d)$.
For prediction, given $\bx \in \mathbb{R}^d$, evaluating $\hat{g}_i(\bx)$ has cost $O(M(d+J))$.
In comparison, the Stein gradient estimator directly approximates gradients at the sample points, which involves computing the Gram matrix and its inverse. Thus the overall complexity is also $O(M^3+M^2d)$. Note that SSGE only requires the $J$ largest eigenvalues and corresponding eigenvectors, efficient algorithms \citep{parlett1979lanczos} might be applied to further reduce its complexity.

\vspace{-0.15cm}
\subsection{Theoretical Results}

Following the derivation in \Cref{sec:derivation}, we analyze the theoretical properties of the resulting estimator in \cref{eq:grad-est,eq:coef-est}. To be clear, we formally restate the assumptions that have been made in the derivation.

\begin{asp} \label{asp:kernel-boundary}
	$k(\bx, \cdot)$ and $k(\cdot, \bx)$ are in the Stein class of q.
\end{asp}
%\begin{asp} \label{asp:kernel-grad}
%	$\forall \bx, \by \in \mathbb{R}^d, |\nabla_{\bx} k(\bx, \by)| \le c < \infty$.
%\end{asp}
\begin{asp} \label{asp:square-integrable}
	$g_i(\bx) \in L^2(\mathcal{X}, q)$, $i=1,\dots,d$, i.e., $$
	\int g_i(\bx)^2 q(\bx)\;d\bx = \sum_{j=1}^\infty \beta_{ij}^2 \le C < \infty.
	$$
\end{asp}
\begin{asp} \label{asp:unique-eigen}
    $\mu_1 > \mu_2 > \dots > \mu_J > 0.$
\end{asp}
Note that \Cref{asp:kernel-boundary} holds for RBF kernels. \Cref{asp:square-integrable} is necessary for $g_i(\bx)$'s being possible to be expanded into the spectral series. We need \Cref{asp:unique-eigen} since our derivation is based on several well-studied bounds of Nystr{\"o}m approximation, i.e., \Cref{lem:nystrom-vs-eigen,lem:nystrom-inner} in \Cref{app:theory}~\citep{sinha2009semi,izbicki2014high}. Note that when this assumption does not hold, we could proceed as in~\citet{rosasco2010learning} (Theorem 12) and derive the error bound in a similar way.

\begin{thm}[Error Bound of SSGE, proof in \Cref{app:theory}] \label{thm:consistency}
	Given the above assumptions,
	the error $\int |\hat{g}_i(\bx) - g_i(\bx)|^2q(\bx)\;d\bx$ is bounded by
	
	\begin{equation} \label{eq:error-bound}
		\begin{aligned}
		J^2 &\left(O_p\left(\frac{1}{M}\right) +  O_p\left(\frac{C}{\mu_J\Delta_J^2M}\right)\right) + \\ &JO_p\left(\frac{C}{\mu_J\Delta_J^2M}\right) + \|g_i\|^2_{\mathcal{H}}O(\mu_J),
		\end{aligned}
	\end{equation}
	where $\Delta_J = \min_{1\leq j\leq J}\left|\mu_j - \mu_{j+1}\right|$, $O_p$ is the Big O notation in probability.
\end{thm}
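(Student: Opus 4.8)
The plan is to control the $L^2(\mathcal{X},q)$ error by a triangle-inequality decomposition that isolates the two conceptually different sources of error: the \emph{truncation} of the infinite spectral series \cref{eq:grad-series} at $J$ terms, and the \emph{estimation} error incurred on the first $J$ terms by replacing the true coefficients $\beta_{ij}$ and eigenfunctions $\psi_j$ with their empirical Nyström counterparts $\hat\beta_{ij}$ and $\hat\psi_j$. Since $g_i=\sum_{j\ge1}\beta_{ij}\psi_j$ while $\hat g_i=\sum_{j=1}^J\hat\beta_{ij}\hat\psi_j$, I would write
\begin{equation*}
\hat g_i - g_i = \underbrace{\sum_{j=1}^J\bigl(\hat\beta_{ij}\hat\psi_j - \beta_{ij}\psi_j\bigr)}_{\text{estimation}} \;-\; \underbrace{\sum_{j=J+1}^\infty\beta_{ij}\psi_j}_{\text{truncation}},
\end{equation*}
and use $(a+b)^2\le 2a^2+2b^2$ under the integral, so that it suffices to bound the squared $L^2$ norm of each piece separately.

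For the truncation piece, the orthonormality \cref{eq:eigenfun-orthon} of $\{\psi_j\}$ collapses its squared $L^2$ norm to the tail sum $\sum_{j>J}\beta_{ij}^2$. The crucial ingredient is the Mercer/RKHS characterization: because $\{\sqrt{\mu_j}\psi_j\}$ is an orthonormal basis of $\mathcal H$, one has $\|g_i\|_{\mathcal H}^2=\sum_{j\ge1}\beta_{ij}^2/\mu_j$, and so by monotonicity of the eigenvalues
\begin{equation*}
\sum_{j>J}\beta_{ij}^2=\sum_{j>J}\mu_j\frac{\beta_{ij}^2}{\mu_j}\le \mu_{J+1}\sum_{j>J}\frac{\beta_{ij}^2}{\mu_j}\le \mu_{J}\,\|g_i\|_{\mathcal H}^2,
\end{equation*}
which is exactly the $\|g_i\|_{\mathcal H}^2\,O(\mu_J)$ term. (This quietly uses $g_i\in\mathcal H$, slightly stronger than \Cref{asp:square-integrable}, but the bound already contains $\|g_i\|_{\mathcal H}^2$ so this is implicit.)

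For the estimation piece I would split each summand as $(\hat\beta_{ij}-\beta_{ij})\hat\psi_j+\beta_{ij}(\hat\psi_j-\psi_j)$ and bound the two resulting sums with Cauchy--Schwarz across the $J$ indices, which is what produces the leading factors of $J$; approximate orthonormality of $\{\hat\psi_j\}$ (\Cref{lem:nystrom-inner}) keeps $\int\hat\psi_j^2\,q$ bounded. The coefficient error $\hat\beta_{ij}-\beta_{ij}$ then splits into a pure Monte-Carlo part $\frac1M\sum_m\nabla_{x_i}\psi_j(\bx^m)-\mathbb E_q\nabla_{x_i}\psi_j$, of order $O_p(1/\sqrt M)$ per index and hence contributing $J^2O_p(1/M)$ after summing over $j\le J$ and the Cauchy--Schwarz factor, and a Nyström part $\frac1M\sum_m\bigl(\nabla_{x_i}\hat\psi_j(\bx^m)-\nabla_{x_i}\psi_j(\bx^m)\bigr)$, controlled by the derivative version of the Nyström bound (this is precisely where \cref{eq:grad-nystrom-is-good} is invoked), yielding $J^2O_p(C/(\mu_J\Delta_J^2M))$. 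For the eigenfunction-error sum, Cauchy--Schwarz gives $J\sum_{j\le J}\beta_{ij}^2\int|\hat\psi_j-\psi_j|^2q$; bounding each Nyström eigenfunction error at rate $O_p(1/(\mu_J\Delta_J^2M))$ via \Cref{lem:nystrom-vs-eigen} and using $\sum_j\beta_{ij}^2\le C$ from \Cref{asp:square-integrable} produces the remaining $JO_p(C/(\mu_J\Delta_J^2M))$ term. Summing the four contributions reproduces \cref{eq:error-bound}.

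I expect the main obstacle to lie in the Nyström perturbation analysis underlying \Cref{lem:nystrom-vs-eigen,lem:nystrom-inner}, and specifically in controlling the \emph{derivative} Nyström error uniformly over $j\le J$. For function values, $\hat\psi_j\to\psi_j$ follows from standard operator-perturbation arguments, with the gap $\Delta_J$ entering through the resolvent and $\mu_J$ through the $1/\lambda_j$ normalization in \cref{eq:nystrom}. The derivative estimate \cref{eq:grad-nystrom-is-good} is genuinely delicate, since $\nabla\hat f\approx\nabla f$ fails for a generic Nyström reconstruction; it must be justified through the exact identity \cref{eq:grad-fredholm}, which lets the gradient pass inside the integral operator before sampling. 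Extracting the correct $\Delta_J$ and $\mu_J$ dependence and ensuring the $O_p$ bounds hold \emph{simultaneously} for all $j\le J$ (rather than merely in expectation for a fixed $j$) is the hard part; the Monte-Carlo concentration and Cauchy--Schwarz steps are otherwise routine.
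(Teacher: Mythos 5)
Your proposal follows essentially the same route as the paper's proof: the same three-way decomposition into a truncation term (bounded by $\mu_J\|g_i\|_{\mathcal H}^2$ via the Mercer/RKHS norm identity), an eigenfunction-replacement term (Cauchy--Schwarz over $j\le J$ with \Cref{lem:nystrom-vs-eigen}), and a coefficient-error term (Monte-Carlo concentration plus the derivative Nyström bound obtained from the Stein-class boundary condition, as in \Cref{lem:grad-nystrom-vs-grad-eigen,lem:coeff-bound}). The steps you flag as delicate are exactly the ones the paper delegates to its lemmas, so no substantive difference remains to discuss.
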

The first three terms in \cref{eq:error-bound} are the sample errors caused by the Nystr{\"o}m approximation, which we call the \emph{estimation error}. It is negligible with high probability as $M\to\infty$. The last term is caused by the bias introduced by the truncation, which we call the \emph{approximation error}. From the bound we can observe a tradeoff between the estimation error and the approximation error. As an illustration, one may set $J$ to be as large as possible to reduce the magnitude of $\mu_J$ and thus reduce the approximation error, but it will increase the estimation error at a rate of $O_p\left(\frac{J^2}{\mu_J}\right)$.

For RBF kernels and their corresponding RKHS, smoother target functions tend to have smaller $\|g_i\|_{\mathcal{H}}^2$, and thus a tighter bound. In general, this indicates that choosing the appropriate kernel which is suitable to the target gradient function can improve the performance of the gradient estimator (by leading to a smaller $\|g_i\|_{\mathcal{H}}^2$).
%As we shall see in~\Cref{sec:kpca}, the estimator can be seen as a linear estimator based on the Kernel PCA embeddings. From that point of view, a better kernel that the geometry of the samples can be well captured by the kernel projection.

\textbf{Hyperparameter Selection}\; When RBF kernels are used, SSGE has two free parameters: The kernel bandwidth $\sigma$, and the number of eigenfunctions ($J$) used in the estimate. For $\sigma$, we use the median heuristic, i.e., we set it to be the median of pairwise distances between all samples, which turns out to work well in all experiments. Below we discuss the criterion for selecting $J$, which is usually harder.

As the performance of the gradient estimator directly influences the task where it is used. The optimal choice for tuning $J$ is to apply cross-validation on the specific task. However, since  $J$ is a discrete parameter, one has to manually set a continuous interval and bin it so that the commonly used black-box hyperparameter-search methods (e.g., Bayesian optimization) are applicable. However, this approach does not take the magnitude of eigenvalues into consideration. Observing this, we propose that, instead of directly tuning $J$, we could tune a threshold $\bar{r}$ for the percentage of remaining eigenvalues:
\begin{equation*}
J = \argmax_{J'}\;r_{J'},\;
\mathrm{\textit{s.t.}} \; r_{J'} = \frac{\sum_{j=1}^{J'}\lambda_j}{\sum_{j'=1}^M\lambda_{j'}}, r_{J'}\leq \bar{r}.
\end{equation*}
Note that searching $\bar{r}$ may still not be easy due to the non-smooth validation surface. But in experiments we found that $\bar{r}$ values in $[0.95, 0.99]$ usually work well.

\vspace{-0.15cm}
\subsection{Gradient Estimation for Entropy}
\label{sec:entropy}
Above we have derived a gradient estimator for the log density of an implicit distribution. Now we discuss a useful extension of it. Consider the situation where we need to optimize the entropy $\mathbb{H}(q)=-\mathbb{E}_q \log q$ of an implicit distribution $q_{\phi}(\bx)$ w.r.t. its parameters $\phi$. When $\bx$ is continuous and can be reparameterized~\citep{kingma2013auto}, e.g., $\bx = f(\beps;\phi), \beps\sim \mathcal{N}(\bzero, \bI)$, it can be shown (see \Cref{app:entropy-grad}) that
\begin{equation}\label{eq:entropy-grad}
	\nabla_{\phi}\mathbb{H}(q) = -\mathbb{E}_{\beps}\nabla_{\bx}\log q_{\phi}(f(\beps;\phi))\nabla_{\phi} f(\beps;\phi),
\end{equation}
where $\nabla_{\bx}\log q_{\phi}(f(\beps;\phi))$ can be easily estimated by SSGE. As we shall see in experiments, \cref{eq:entropy-grad} can be used for variational inference with implicit distributions.

\vspace{-0.15cm}
\section{Related Work} \label{sec:related}
Our work is closely related to other works on implicit distributions. Apart from the density-ratio based approaches introduced in \Cref{sec:intro}, we discuss two more directions.

%\textbf{Density-ratio based Approaches}\; Discriminator-based approaches leverage the observation that the density ratio between two distributions can be estimated from samples using an adversarially trained probabilistic discriminator. In prior-contrastive (PC) methods \citep{huszar2017variational, mescheder2017adversarial} discriminators classify between samples from the prior and those from the variational posterior, while in joint-contrastive (JC) approaches \citep{huszar2017variational, tran2017hierarchical} both data and variational samples are fed into the discriminator. Kernel density ratio estimator \citep{shi2018kernel} approximates the density ratio using a function in RKHS, which gives a closed-form solution under the loss from unconstrained Least Square Importance Fitting (uLSIF)~\citep{kanamori2009least}.

\textbf{Nonparametric Inference}\; Nonparametric variational inference (VI) methods such as PMD \citep{dai2015provable} and SVGD \citep{liu2016stein} remove the need of parametric families, they keep a set of particles and gradually adjust them towards the true posterior. These particles can be viewed as samples from an implicit distribution. Instead of directly computing gradients in the sample space like us, SVGD performs functional gradient descent to transform the implicit distribution towards the true posterior. Though elegant, SVGD is limited to KL-divergence based VI problems, while our approach is generally applicable wherever gradient estimates are needed for the log density of implicit distributions.

%The variational posterior has to be nonparametric and the  Recently the amortized version of SVGD has also been developed \citep{liu2016two}. However, it directly minimizes the kernel Stein discrepancy, which is shown to have plenty of and \citep{li2017approximate} applies the similar idea to MCMC.

\textbf{Kernel Exponential Families and Score Matching}\; Previous to our work, the problem of estimating gradient functions of intractable log densities has been worked on by \citet{strathmann2015gradient,pmlr-v84-sutherland18a}. They identified the problem when developing Hamiltonian Monte Carlo (HMC) under the settings where higher-order information of the target distribution is unavailable. To address it, a kernel exponential family~\citep{sriperumbudur2017density} is fit to samples along the Markov Chain trajectory by score matching~\citep{hyvarinen2005estimation}, and then serves as a surrogate of the target distribution to provide gradient estimates for HMC. We will compare to them in \Cref{sec:exp-hmc}.

\vspace{-0.15cm}
\section{Experiments}

We evaluate the proposed approach on both toy problems and real-world examples. The latter includes applications of SSGE to two widely used inference methods: Hamiltonian Monte Carlo and variational inference. Code is available at \url{https://github.com/thjashin/spectral-stein-grad}. Implementations are based on ZhuSuan~\citep{zhusuan2017}.

\vspace{-0.15cm}
\subsection{Toy Experiment}

 \begin{figure}[t]
     \vskip 0.05in
    \begin{center}
        \includegraphics[width=0.8\columnwidth]{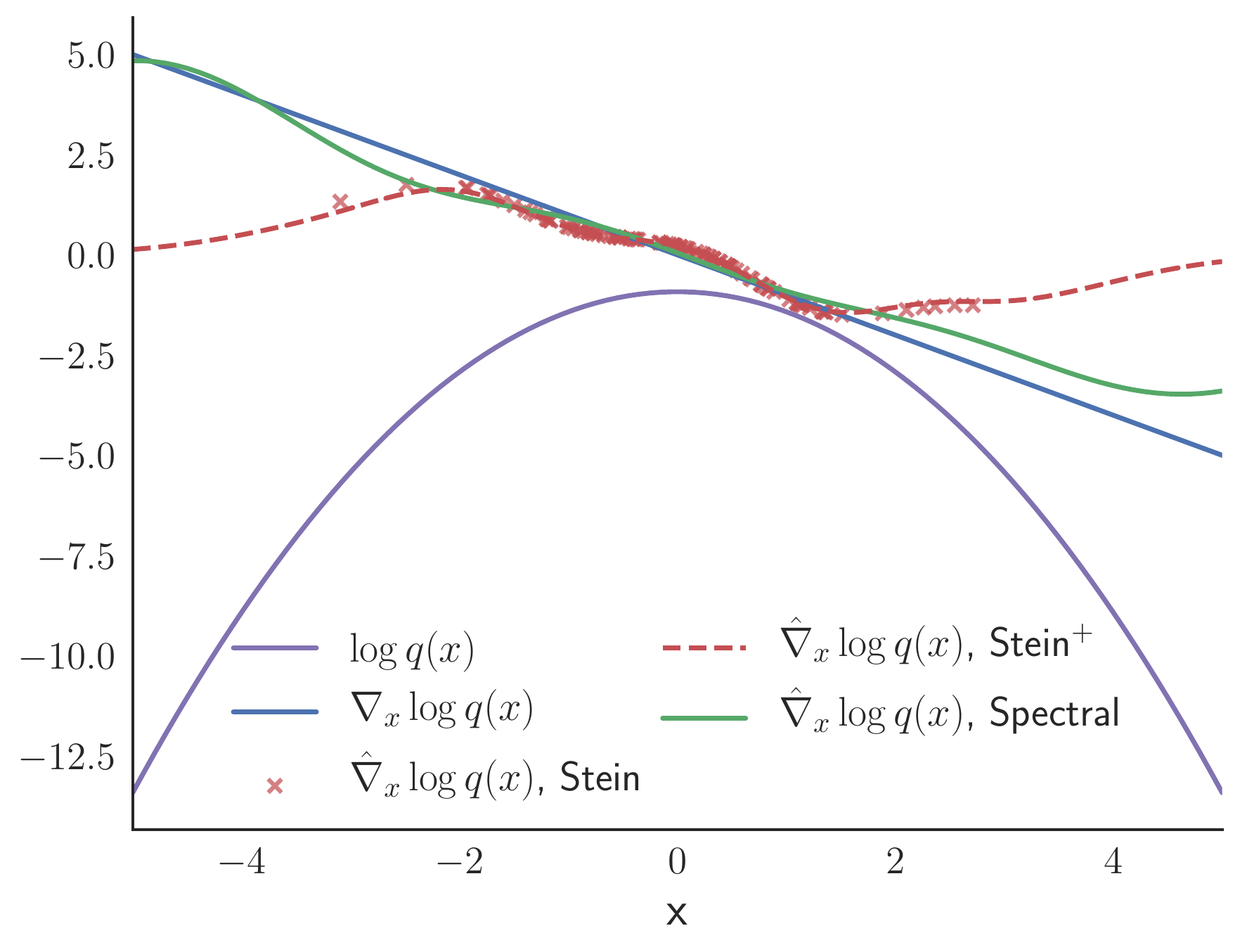}
        \vspace{-.3cm}
        \caption{Gradient estimates of the log density of $\mathcal{N}(0, 1)$.}
        \label{fig:gaussian}
    \end{center}\vspace{-0.8cm}
\end{figure}

As a simple example, we experiment with estimating the gradient function of a 1-D standard Gaussian distribution. The target log density is $\log q(x) = -\frac{1}{2}\log 2\pi -\frac{1}{2}x^2$, and the true gradient function is $\nabla_x\log q(x) = -x$. We draw $M = 100$ i.i.d. samples from $q$ for use in the estimation. In \Cref{fig:gaussian} we plot the gradients estimates produced by the Stein gradient estimator, its out-of-sample extension (Stein$^+$) (see \Cref{sec:stein}), and our approach (SSGE). Since the original Stein estimator only gives estimates at the sample points, we plot them as individual points (in red). For the regularization coefficient $\eta$ in \cref{eq:stein-grad-est}, we searched it in $\{0.001, 0.01, 0.1, 1, 10, 100\}$ and plot the best result at $\eta = 0.1$ \footnote{The criterion for selecting $\eta$ is unclear in \citet{li2018gradient}.}. For SSGE, we set $J=6$. We can see that despite all three estimators produce rather good approximation where the samples are taken densely (e.g., in $[-2, 2]$), the gradient function estimated by SSGE is notably better at the places where samples are less dense.

\begin{figure}[t]
    \begin{subfigure}[b]{0.46\columnwidth}
        \centering
        \includegraphics[width=\columnwidth]{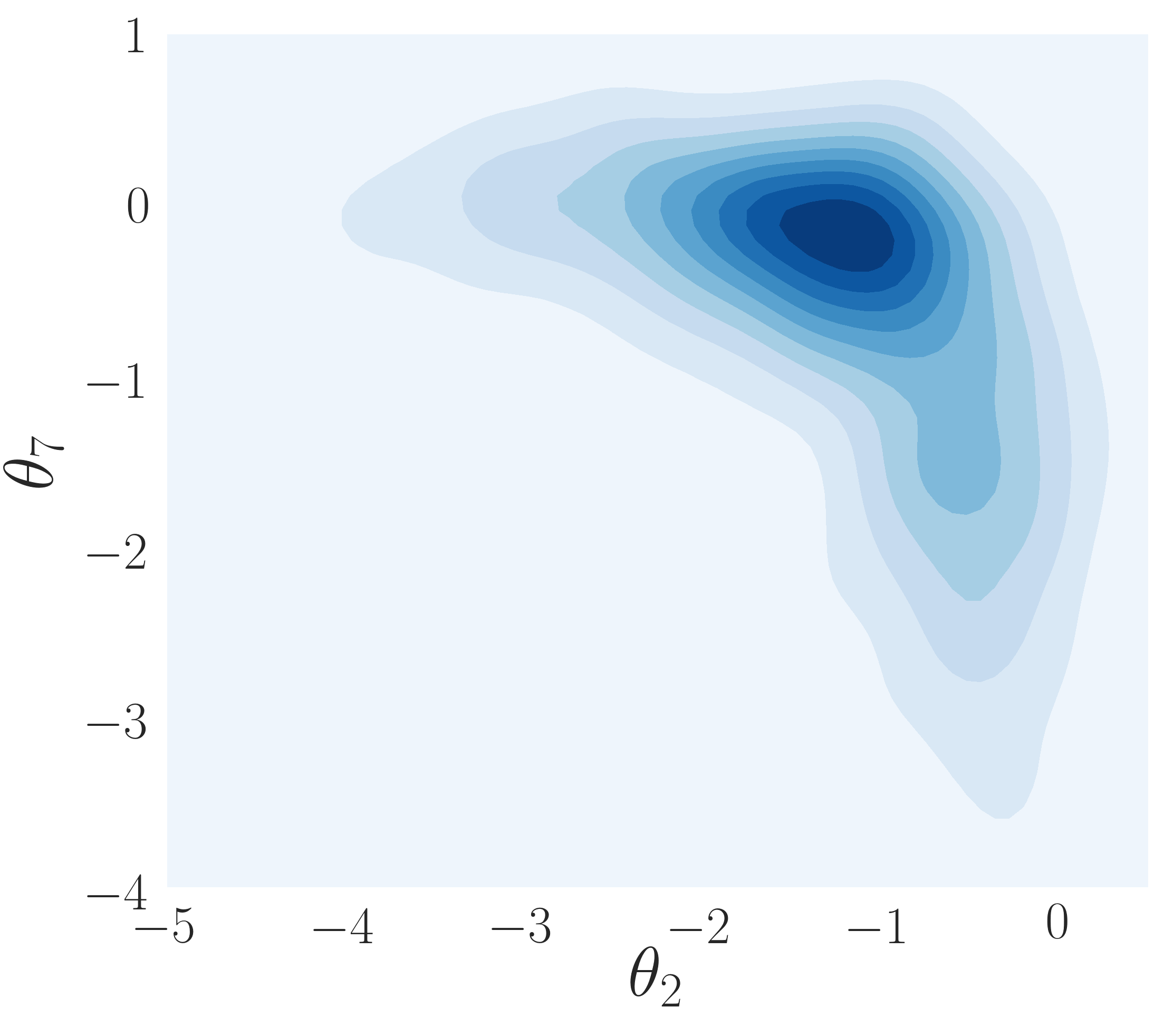}
        %        \vspace{-0.2cm}		
        \caption{}
        \label{fig:gp-hyper-corr}
    \end{subfigure}
%    \hskip 0.05in
    \hfill
    \begin{subfigure}[b]{0.5\columnwidth}
        \centering
        \includegraphics[width=\columnwidth]{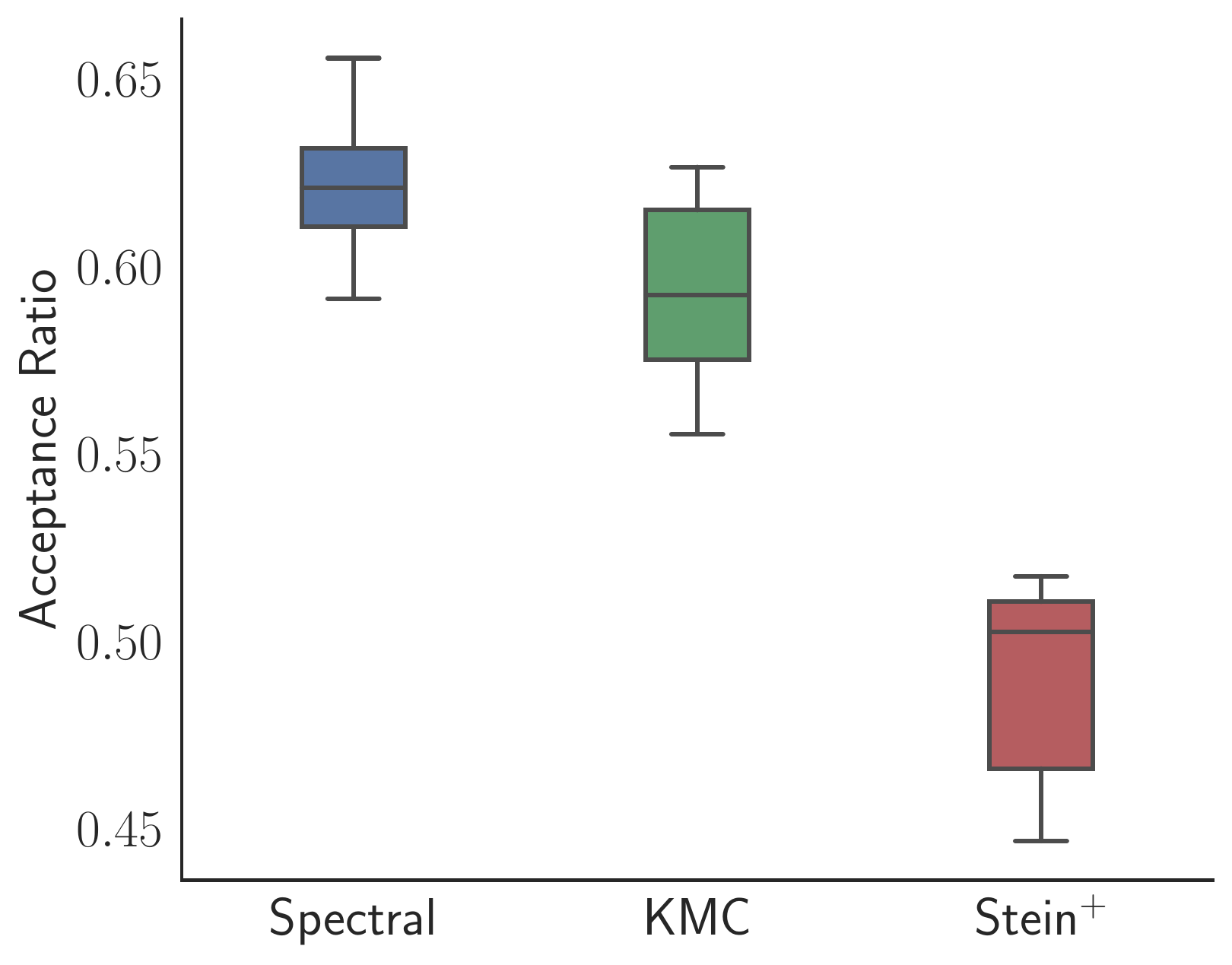}
        \vspace{-0.3cm}
        \caption{}
        \label{fig:gp-acc}
    \end{subfigure}
    \vspace{-.2cm}
    \caption{(a) Dimensions 2 and 7 of the marginal hyperparameter posterior of Gaussian Process classification on the UCI Glass dataset; (b) The average acceptance ratios of gradient-free HMC using SSGE, KMC, and Stein$^+$.}\vspace{-0.2cm}
\end{figure}

\vspace{-0.15cm}
\subsection{Gradient-free Hamiltonian Monte Carlo}
\label{sec:exp-hmc}

In this experiment we investigate the usefulness of SSGE in constructing a gradient-free HMC sampler. We follow the settings in \citet{sejdinovic2014kernel,strathmann2015gradient} and consider a Gaussian Process classification problem on the UCI Glass dataset. The goal is to infer the posterior over hyperparameters under a fully Bayesian treatment. Specifically, consider a Gaussian process whose joint distribution is over latent variables $\bff$, labels $\by$, and hyperparameters $\btheta$:
\begin{equation}
	p(\bff, \by, \btheta) = p(\btheta)p(\bff|\btheta)p(\by|\bff),
\end{equation}
where $\bff|\btheta \sim N(\bzero, \bK_{\btheta})$. $\bK_{\btheta}$ is the Gram matrix formed by the data points $\bx_{1:N} \in \mathbb{R}^D$:
\begin{equation}
	(\bK_{\btheta})_{ij} = \exp\left\{-\sum_{d=1}^D \frac{|x_{i, d} - x_{j, d}|^2}{2\ell_d^2}\right\},
\end{equation}
where we define $\theta_d = \log \ell_d^2$. The problem to consider is a binary classification between window and non-window glasses, so the likelihood is given by a logistic classifier: $p(y_i|f_i) = \frac{1}{1 + \exp(-y_if_i)}, y_i \in \{-1, 1\}$. The posterior over $\btheta$ is highly nonlinear, as shown in \Cref{fig:gp-hyper-corr}. As pointed out in previous works~\citep{murray2010slice,sejdinovic2014kernel}, sampling from the posterior of $\btheta$ is challenging, e.g., Gibbs sampling often gets stuck due to $p(\btheta|\bff, \by)$ is very sharp. One way to address this problem is the pseudo-marginal MCMC~\citep{andrieu2009pseudo}, through which a markov chain can be simulated to directly sample from $p(\btheta|\by)$. Since the likelihood $p(\by|\btheta)$ is intractable, pseudo-marginal MCMC replaces it with an unbiased Monte Carlo estimate using importance sampling:
\begin{equation}
	\hat{p}(\by|\btheta) = \frac{1}{K}\sum_{i=1}^{K}\frac{p(\by|\bff^i)p(\bff^i|\btheta)}{q(\bff^i)},\quad \bff^{1:K}\sim q(\bff).
\end{equation}
In practice $q(\bff)$ is chosen to be the Laplace approximation of $p(\bff|\by, \btheta)$. As for any pseudo-marginal MCMC scheme, the gradient information of the posterior is not available and HMC is not suitable. We have to resort to gradient-free MCMC methods. As mentioned in \Cref{sec:related}, kernel adaptive Metropolis samplers~\citep{sejdinovic2014kernel} were developed and then extended to kernel HMC (KMC)~\citep{strathmann2015gradient}. In this experiment we compare the performance of KMC and HMC with gradients estimated by SSGE and Stein$^+$.

To begin, we run 20 randomly initialized adaptive-Metropolis samplers for 30k iterations, with the first 10k samples discarded. We then keep every 400-th sample in each of the chains, and combine them to get 1k samples. These samples are treated as the ground truth.
Similar to \citet{pmlr-v84-sutherland18a}, our experiment assumes the idealized scenario where a burn-in period for collecting a sufficient number of samples has completed. This is to remove all the other factors that could have an effect on the comparison of acceptance ratios, which then only depends on the accuracy of the gradient estimation of potentials, i.e., $-\nabla_{\btheta} \log p(\btheta|\by)$. So we fit all three estimators on a random subset of $M=200$ of these samples and repeated 10 times.
%and use the remaining $800$ samples to tune the model hyperparameters.
For each fitted estimator, we start from a random initial point from the posterior sketch and run HMC samplers with the gradient estimates for 5k iterations. To be fair in comparing the acceptance ratios, no adaptation of HMC parameters can be used. So we randomly uses between 1 and 10 leapfrog steps of size chosen uniformly in $[0.01, 0.1]$, and a standard Gaussian momentum. The kernel bandwidths ($\sigma$) in all three estimators are determined by median heuristics (i.e., set to the median of the pairwise distances between the $M$ data points). For Stein$^+$, $\eta=0.001$. For SSGE, $\bar{r}=0.95$.

%\begin{figure}[ht]
%%	\vspace{-0.1cm}
%%	\vskip 0.1in
%%\begin{wrapfigure}{r}{.26\textwidth}
%%	\vspace{-0.8cm}
%	\begin{center}
%		\centerline{\includegraphics[height=4.4cm]{figures/gp_acc}}
%\vspace{-0.2cm}
%		\caption{The average acceptance ratios of gradient-free HMC using SSGE, KMC, and Stein$^+$.}
%		\label{fig:gp-acc}
%	\end{center}
%%	\vspace{-0.2cm}
%	\vskip -0.2in
%\end{figure}

The average acceptance ratios over 10 runs are plotted in \Cref{fig:gp-acc}. We can see that SSGE clearly outperforms Stein$^+$ and is even better than the KMC algorithm, which is specially designed as a gradient-free HMC algorithm. Though KMC is carefully designed, its gradients are estimated by first fitting a kernel exponential family as a surrogate and then taking derivatives through it, while SSGE is arguably more direct.

\vspace{-0.15cm}
\subsection{Variational Inference with Implicit Distributions}

As introduced in \Cref{sec:intro}, there have been increasing interests in constructing flexible variational posteriors with implicit distributions. Specifically, for a latent-variable model $p(\bz, \bx)$ where $\bx$ and $\bz$ denote observed and latent variables, respectively, Variational Inference (VI) approximates the posterior $p(\bz|\bx)$ by maximizing the following evidence lower bound (ELBO):
%\vspace{-0.2cm}
\begin{equation} \label{eq:elbo}
\mathcal{L}(\bx;\phi) = \mathbb{E}_{q_{\phi}(\bz)} \log p(\bz, \bx) - \mathbb{E}_{q_{\phi}(\bz)} \log q_{\phi}(\bz),
\end{equation}
where $q_{\phi}(\bz)$ is called the variational distribution. The second term of \cref{eq:elbo} is the entropy of $q$, which is intractable for implicit distributions.  As shown in \Cref{sec:entropy}, SSGE can be used here for estimating gradients of the entropy term, thus allowing VI with implicit distributions. Below we conduct experiments on two examples: Bayesian Neural Networks (BNN) and Variational Autoencoders (VAE). Note that the original Stein gradient estimator can also be used here. In experiments, we find that despite lack of theoretical evidences, the performance of a well-tuned Stein gradient estimator is very close to SSGE when no out-of-sample predictions are required. As the emphasis of this paper is on SSGE, we only focus on verifying the accuracy of SSGE below.

\begin{figure}[t]
	%\vspace{-0.15cm}
	\centering
    \begin{subfigure}[t]{0.46\columnwidth}
        \centering
        \includegraphics[width=\columnwidth]{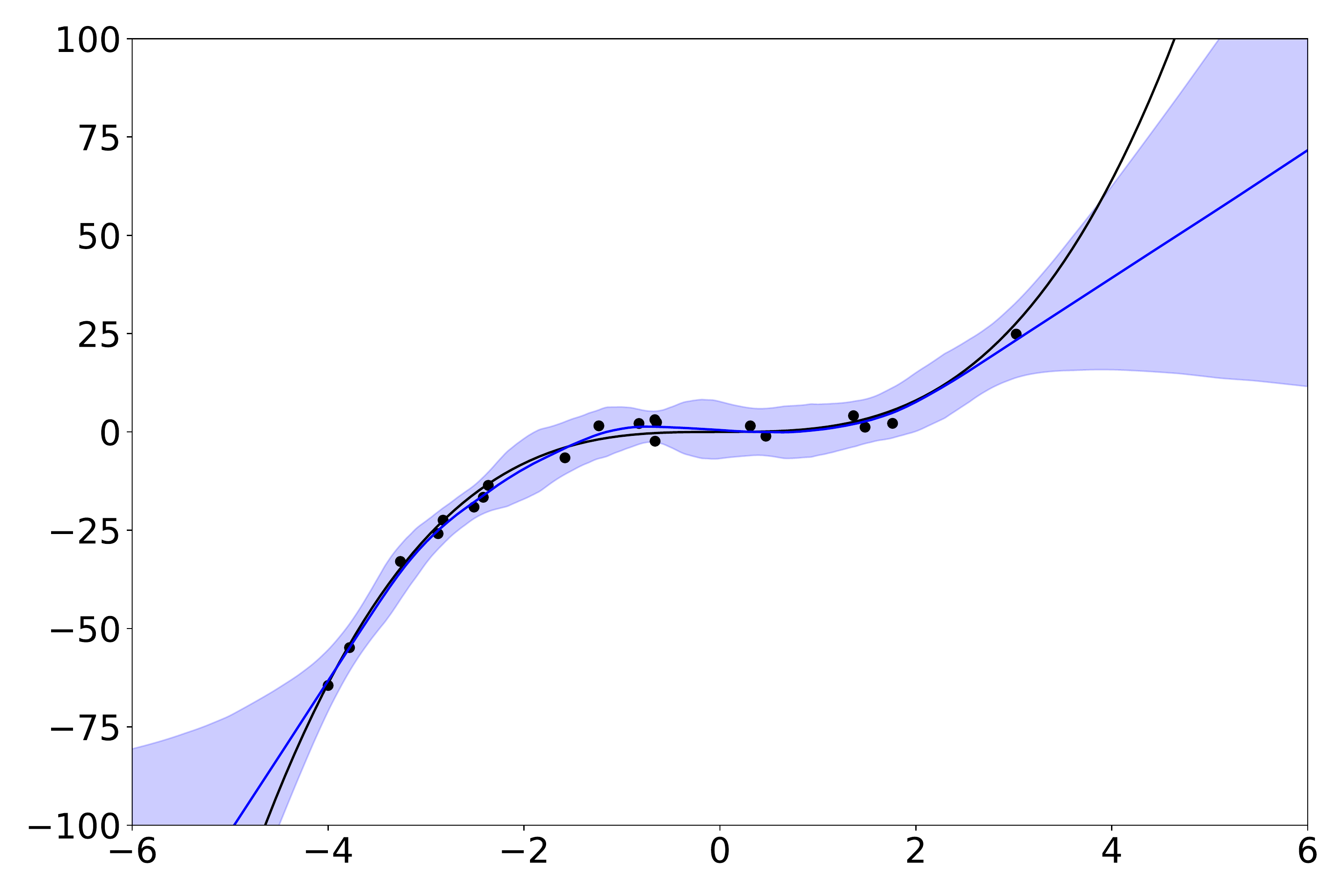}
        \caption{HMC}
        \label{fig:bnn-hmc}
    \end{subfigure}
	%%\subfigure[Stein] {
	%%\includegraphics[width=0.24\textwidth]{figures/x3/margin_normTrue_100units_qtoy3_estimatorstein_GammaprecFalse_quasilossTrue_eigenNone_newinfer_epoch_20000}
	%%}
    \begin{subfigure}[t]{0.46\columnwidth}
        \centering
        \includegraphics[width=\columnwidth]{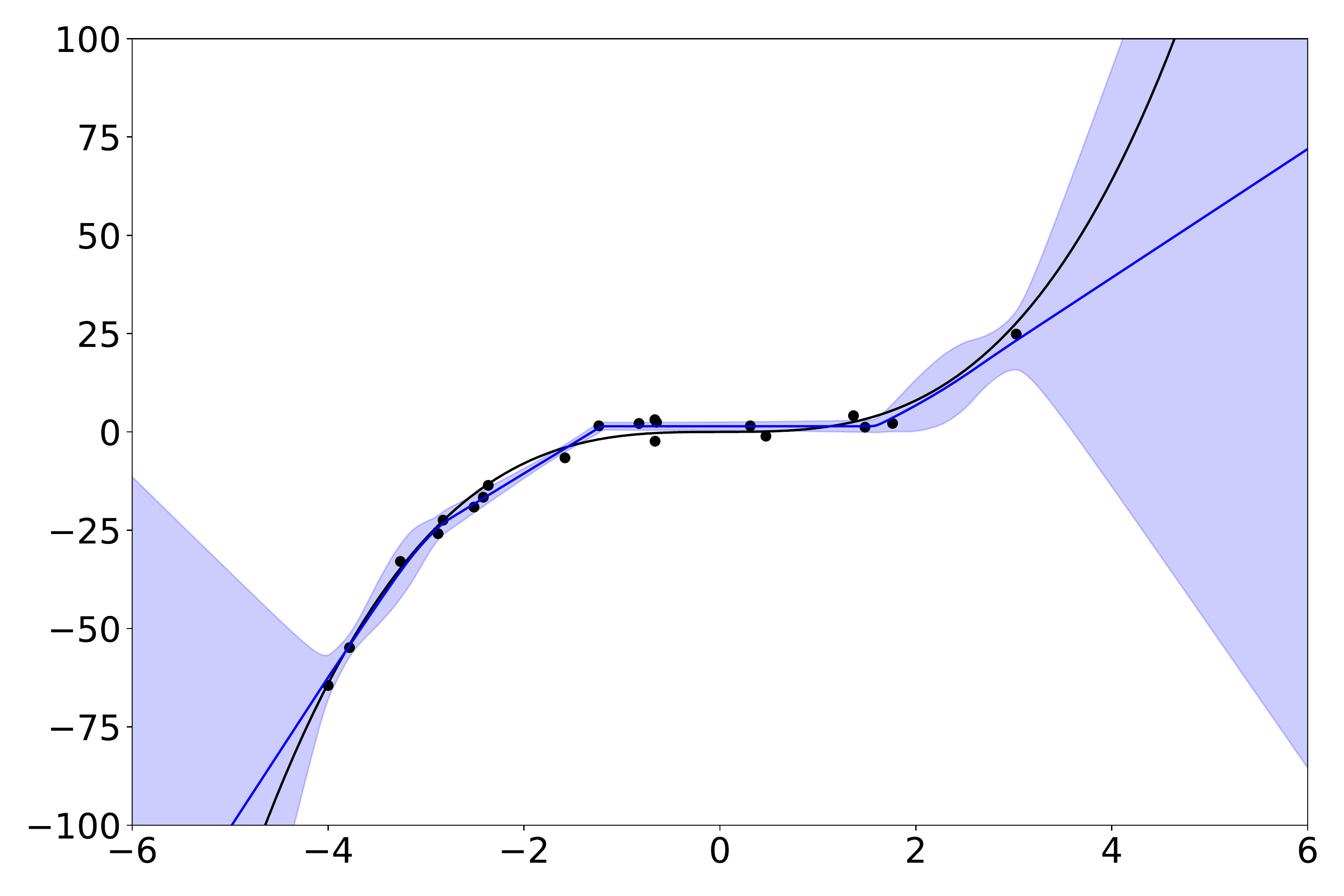}
        \caption{Spectral-Implicit}
        \label{fig:bnn-spectral}
    \end{subfigure}
    \begin{subfigure}[t]{0.46\columnwidth}
        \centering
        \includegraphics[width=\columnwidth]{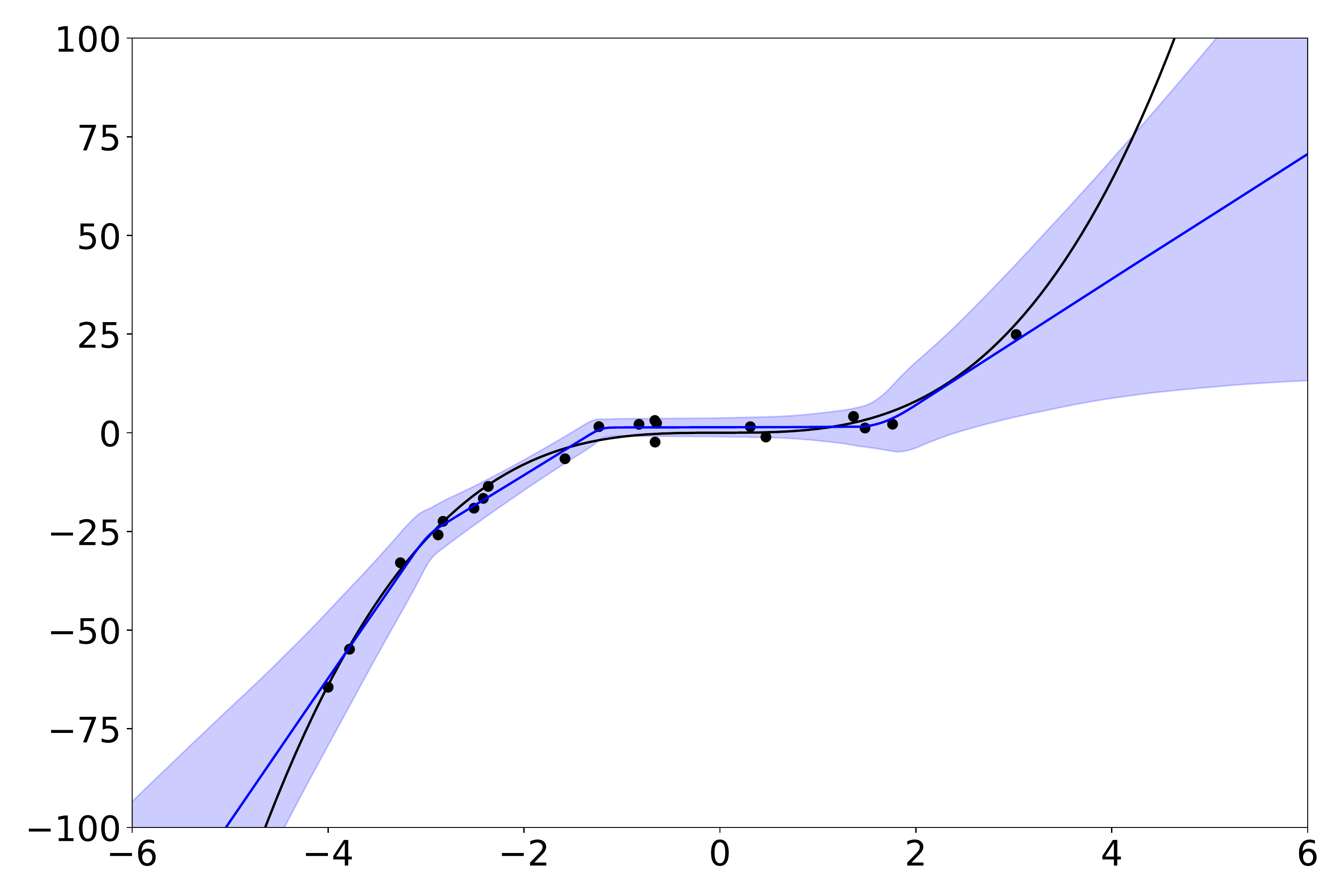}
        \caption{BBB}
        \label{fig:bnn-bbb}
    \end{subfigure}
	%%\hspace{-1em}
	%%\subfigure[Stein] {
	%%\includegraphics[height=0.144\textwidth]{figures/x3/margin_normTrue_20units_qx3_toy_fac_estimatorstein_GammaprecFalse_quasilossTrue_eigenNone_newinfer_epoch_20000}
	%%}
    \begin{subfigure}[t]{0.46\columnwidth}
        \centering
        \includegraphics[width=\columnwidth]{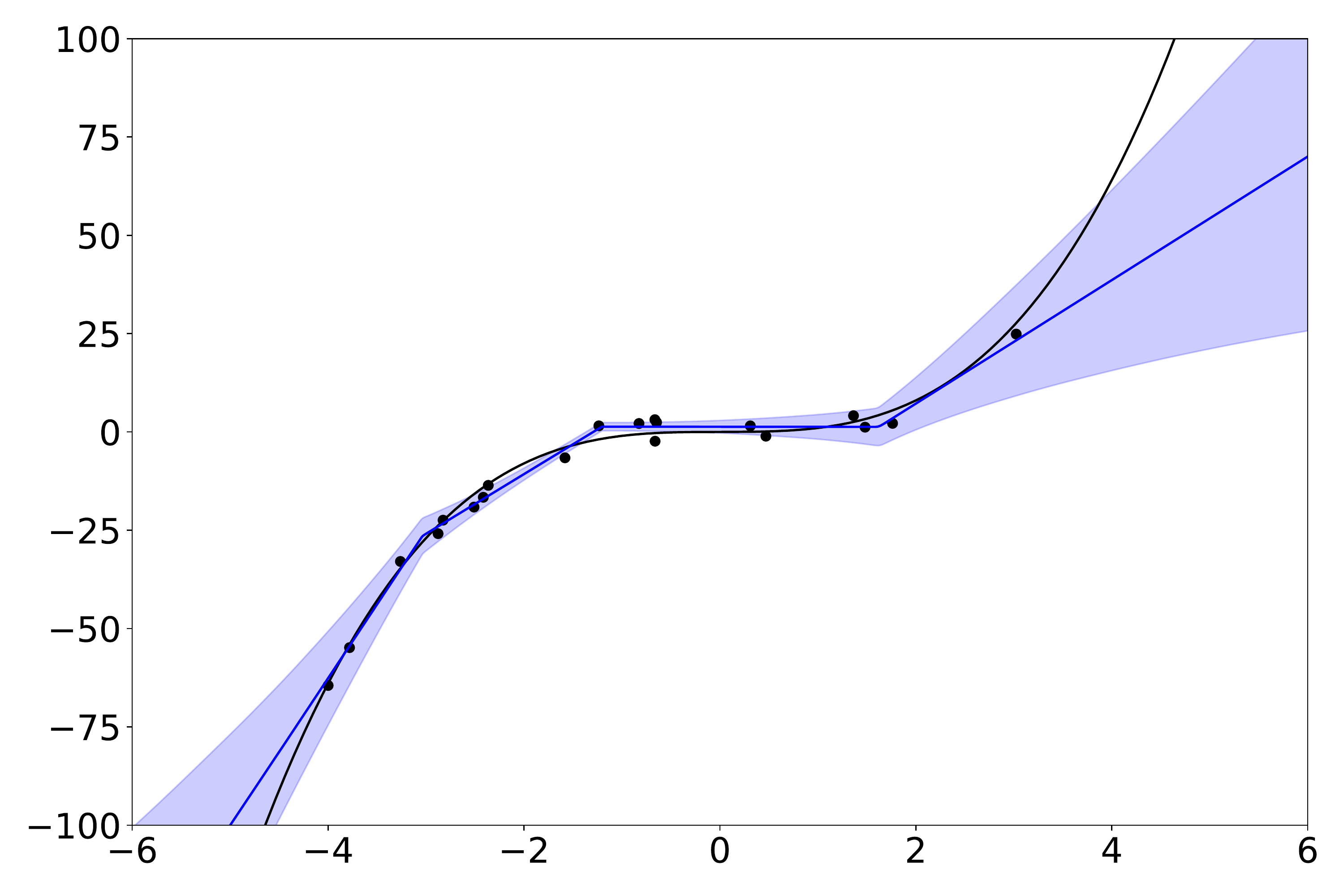}
        \caption{Spectral-Factorized}
        \label{fig:bnn-spectral-factor}
    \end{subfigure}
	%	\vspace{-1em}
	\caption{Prediction results for the 1-D regression experiment. Spectral-Implicit and Spectral-Factorized represent using SSGE to perform VI with implicit posteriors and factorized Gaussian posteriors, respectively. Shaded areas represent 3 times standard deviation.}
	\label{fig:bnn-toy}
	\vspace{-0.3cm}
\end{figure}

\vspace{-0.05cm}
\subsubsection{Bayesian Neural Networks}
%\vspace{-0.05cm}

We evaluate the predictive ability of BNNs with implicit variational posteriors trained by SSGE. To visually access the quality of uncertainty, we choose a 1-D regression problem~\citep{hernandez2015probabilistic, louizos2016structured}. Specifically, $20$ inputs are randomly sampled from $[-4, 4]$, then the target value $y$ is computed with $y=x^3 + \epsilon_n$, $\epsilon_n \sim \mathcal{N}(0, 9)$. We use a BNN with 1 hidden layer and 20 units to model the normalized inputs and targets. We also set the variance of the observation noise to the true value. We compare SSGE with implicit posteriors, Hamiltonian Monte Carlo (HMC) \citep{neal2011mcmc} and Bayes-by-backprop (BBB) \citep{blundell2015weight}. To better demonstrate SSGE's gradient estimation effect, we also test SSGE with a factorized posterior, in comparison to BBB.

We keep 20 chains and run 100k iterations for HMC. All other methods are trained with 100 samples for 20k iterations using Adam optimizer \citep{kingma2014adam}. For SSGE, we set $J=100$. The implicit posteriors we use for weights in both layers are standard normal distributions transformed by fully connected networks with one hidden layer of 100 units. %The results are shown in Figure~\ref{fig:bnn-toy}.

As shown in Fig.~\ref{fig:bnn-toy}, HMC, as the golden standard, smoothly fits the training data and outputs sensible uncertainty estimation. 
HMC not only produces large uncertainty outside the data region, its predictive variance also varies even in regions with training points.
%its predictive variance also reduces when close to the data points. 
This kind of interpolation behavior is hard to be captured by factorized Gaussian posteriors, as shown in Fig.~\ref{fig:bnn-bbb} and \ref{fig:bnn-spectral-factor}.
SSGE with implicit posteriors also has big predictive variances beyond training points, which implies that the BNN trained by SSGE is not overfitting, although it underestimates the uncertainty in the middle region. Also, we observe that SSGE can have similar interpolation behaviors as HMC (see the rightmost two points in Fig.~\ref{fig:bnn-spectral}). 
%outputs slightly varying predictive variance among training points, which is beyond the capacity of simple factorized posteriors.
Besides, SSGE with a factorized posterior has a similar prediction with BBB. Given the network and the variational posterior are both the same, we can attribute this similarity to accurate gradient estimation by SSGE.

%\begin{figure}[t]
%	%\begin{wrapfigure}{r}{.25\textwidth}
%	%\vspace{-.7cm}
%%	\vskip 0.05in
%	\begin{center}
%		\centerline{\includegraphics[width=0.6\columnwidth]{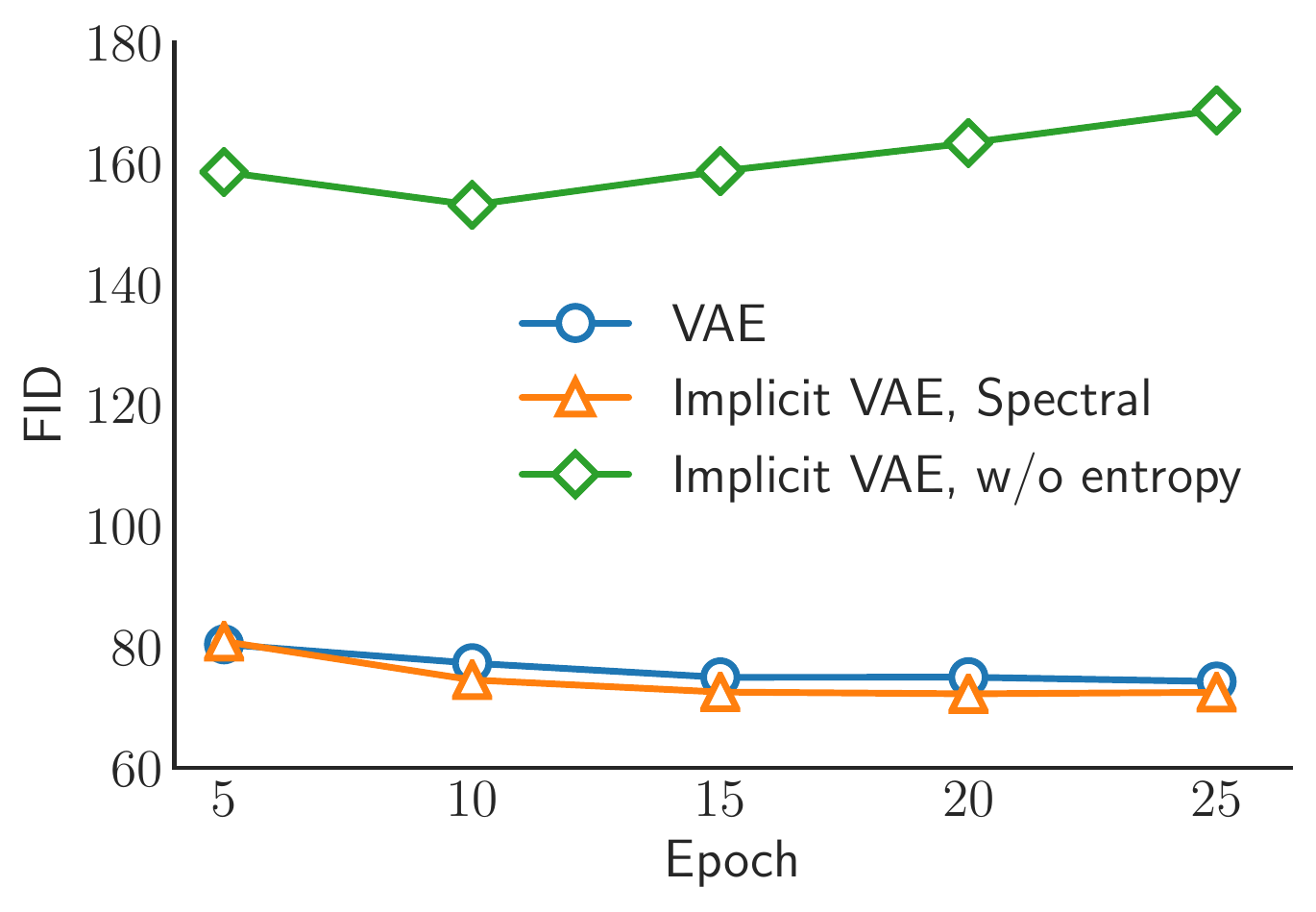}}
%		\vspace{-.2cm}
%		\caption{Fr{\'e}chet Inception Distance (FID)~\citep{heusel2017gans} between random generated images and real images from CelebA.}
%		\label{fig:fid}
%	\end{center}
%	%	\vspace{-0.5cm}
%	\vspace{-.3cm}
%	\vskip -0.2in
%	%\end{wrapfigure}
%\end{figure}

%\vspace{-0.15cm}
\subsubsection{Variational Autoencoders}
%\vspace{-0.05cm}

From the above example we see that SSGE enables variational posteriors parameterized by implicit distributions. To demonstrate that it scales to larger models and datasets, we adopt the settings in \citet{shi2018kernel} and train a deep convolutional VAE with implicit variational posteriors (\emph{Implicit VAE} for short) on the CelebA dataset. As in their work, the latent dimension is $32$, and the network structure of the decoder is chosen to be the same as DCGAN \citep{radford2015unsupervised}. The observation likelihoods are Gaussian distributions with trainable data-independent variances. The implicit posterior is a deep convolutional net symmetric to the decoder, with Gaussian noises injected into hidden layers. Full details of the model structures can be found in \citet{shi2018kernel}.

To examine how accurate the gradient estimates provided by SSGE are, we conduct experiments under three different settings: a plain VAE with normal variational posteriors, an Implicit VAE trained with the entropy term removed from the ELBO, and an Implicit VAE using SSGE's gradient estimates for the entropy. For SSGE, we set $M=100$, and $\bar{r}=0.99$. In \Cref{fig:celeba-vae,fig:celeba-no-entropy,fig:celeba-ssge} we show samples randomly generated from the trained models. We can see that without the entropy term, the Implicit VAE tends to overfit and produces visually bad generations, while if we retain the entropy term and use SSGE to estimate its gradients, the Implicit VAE can produce realistic samples. %similar to the plain VAE, which has no difficulty in computing the entropy gradients.
To quantitatively measure the sample quality, we compare the Fr{\'e}chet Inception Distance (FID)~\citep{heusel2017gans} between real data and random generations from the models. The results are shown in \Cref{fig:fid}. We can see that the Implicit VAE trained by SSGE converges faster and produces samples with slightly better quality than the plain VAE. This is probably due to that implicit posteriors are less likely to overfit~\citep{shi2018kernel}, and SSGE gives accurate gradients for optimizing them. Besides the CelebA experiments, we also tested the models on MNIST dataset and evaluated the test log likelihoods. See \Cref{app:mnist} for details.

\begin{figure*}[t]%\vspace{-\baselineskip}
    \centering
    \begin{subfigure}[b]{.46\columnwidth}
        \centering
        \centerline{\includegraphics[width=.9\columnwidth]{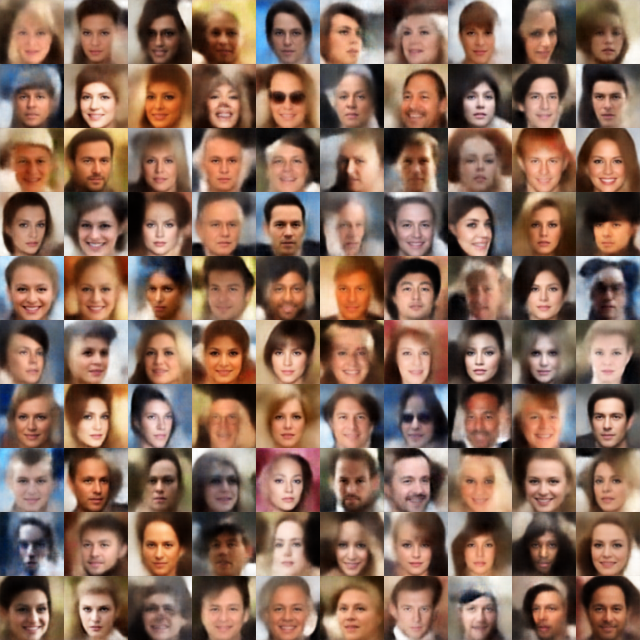}}
        \caption{VAE}
        \label{fig:celeba-vae}
    \end{subfigure}
    \hskip 0.02in
    \begin{subfigure}[b]{.46\columnwidth}
        \centering
        \centerline{\includegraphics[width=.9\columnwidth]{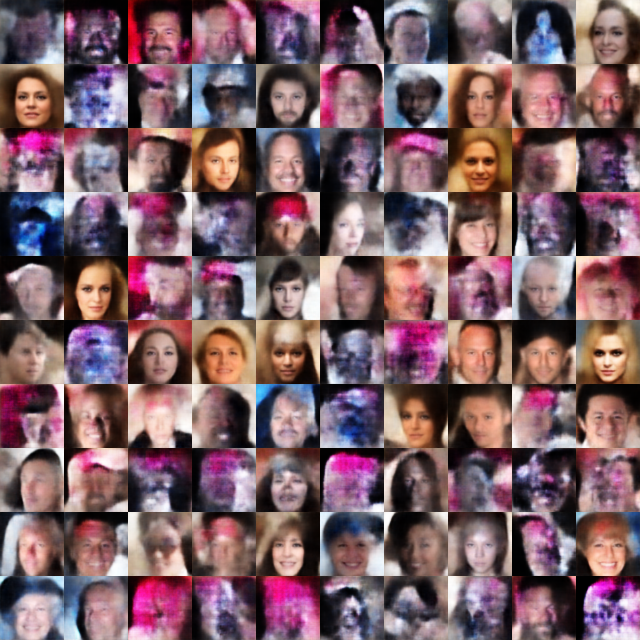}}
        \caption{Implicit VAE, w/o entropy}
        \label{fig:celeba-no-entropy}
    \end{subfigure}
    \hskip 0.02in
    \begin{subfigure}[b]{.46\columnwidth}
        \centering
        \centerline{\includegraphics[width=.9\columnwidth]{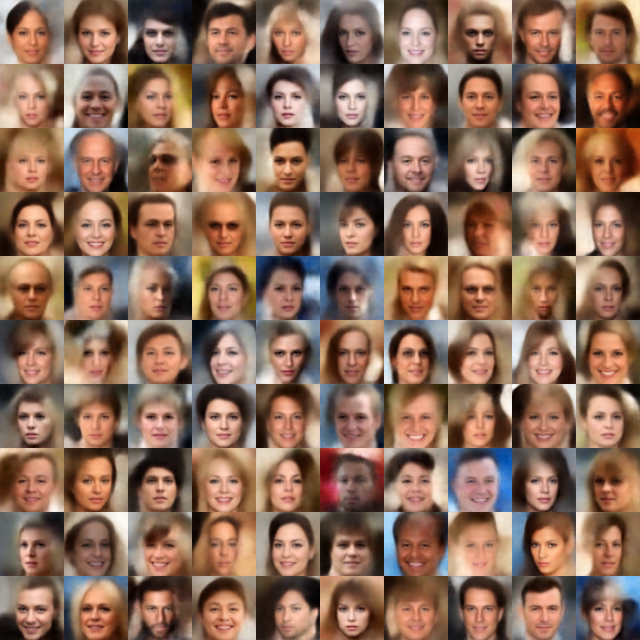}}
        \caption{Implicit VAE, Spectral}
        \label{fig:celeba-ssge}
    \end{subfigure}
    \hskip 0.02in
    \begin{subfigure}[b]{0.64\columnwidth}
        \centering
        \centerline{\includegraphics[height=3.5cm]{figures/fid}}
        \vspace{-0.15cm}
        \caption{}
        \label{fig:fid}
    \end{subfigure}
    %\vspace{-0.2cm}
    \caption{(a)-(c) CelebA samples generated by VAE, Implicit VAE trained without the entropy term, and Implicit VAE trained by SSGE; (d) Fr{\'e}chet Inception Distances (FID)~\citep{heusel2017gans} between random generated images and real images.}
    \label{fig:celeba}\vspace{-0.2cm}
\end{figure*}

%\begin{figure}[t]
%	\vskip 0.1in
%	\begin{center}
%		\centerline{\includegraphics[width=0.4\columnwidth]{figures/vae_epoch_8}}
%%\vspace{-.2cm}
%		\caption{Samples generated by the SSGE-trained VAE}
%		\label{fig:celeba}
%	\end{center}\vspace{-0.5cm}
%\end{figure}

\vspace{-0.15cm}
\section{Discussion}
\textbf{Connection to Kernel PCA}\;
As mentioned in \Cref{sec:nystrom}, the Nystr{\"o}m approximation is closely related to Kernel PCA~\citep{scholkopf1998nonlinear} (KPCA), which is a powerful method for nonlinear dimension reduction. In KPCA, the input data is first projected to a (usually high-dimensional) feature space, where PCA is then applied. The operations in the feature space are handled by the kernel trick. We briefly review the method below.

Given a positive definite kernel $k: \mathcal{X}\times\mathcal{X}\to\mathbb{R}$, we denote the induced RKHS by $\mathcal{H}$ and its corresponding feature map by $\phi: \mathcal{X}\to\mathcal{H}$. Let the data to be analyzed be $\{\bx^i\}_{i=1}^M$, $\bx^i \in \mathcal{X}$. To simplify the derivation, we first assume the data to be centered in the feature space. Then the covariance matrix is formed as
$
\bC = \frac{1}{M}\sum_{i=1}^{M}\phi(\bx^i)\phi(\bx^i)^\top.$
In general, PCA requires to solve the following eigenvalue problem\footnote{We reuse some notations from the above sections (e.g., the eigenvalue $\mu$), and as we shall see, they are closely related.}:
\begin{equation} \label{eq:kpca-eigen}
\bC\bv = \mu\bv.
\end{equation}
 %It is not immediately clear how the kernel trick can be used here since $\bC$ is formed by outer products of the feature vectors instead of inner products. 
%%KPCA solves this problem by noticing 
A key observation of KPCA is that the eigenvectors lie in the span of the feature vectors, since from \cref{eq:kpca-eigen} we have
\begin{equation} \label{eq:v-by-phi}
\bv = \frac{1}{\mu M}\sum_{i=1}^{M}\left[\phi(\bx^i)^\top\bv\right]\phi(\bx^i) = \sum_{i=1}^{M}\alpha_i\phi(\bx^i).
\end{equation}
Here we use $\balpha = [\alpha_1, \dots, \alpha_M]^\top$ to represent the coefficients. This implies that instead of directly dealing with \cref{eq:kpca-eigen} we can consider a set of $n$ projected equations:
$
\phi(\bx^i)^\top C\bv = \mu\phi(\bx^i)^\top\bv,\; i = 1,\dots,M
$.
Plugging \cref{eq:v-by-phi} here and replacing $\phi(\bx^i)^\top\phi(\bx^j)$ with $k(\bx^i, \bx^j)$ (the kernel trick), we get
$\frac{1}{M} \bK\bK\balpha = \mu\bK\balpha,$
which turns out an eigenvalue problem for $\bK$:
%\footnote{Any solution of \cref{eq:kpca-dual} is a solution to $\frac{1}{M} \bK\bK\balpha = \mu\bK\balpha$. Though other solutions can be obtained by adding vectors in the null space of $\bK$, for our purposes it is sufficient to only consider \cref{eq:kpca-dual} since these added vectors do not change the projection results, as we shall see in \cref{eq:kpca-proj}.}
$\frac{1}{M}\bK\balpha = \mu \balpha$. Note that this is exactly the same eigenvalue problem solved in \cref{eq:fredholm-eigen}. As above, we denote by $\bu_1, \dots, \bu_J$ the eigenvectors of $\bK$ that correspond to the $J$ largest eigenvalues $\lambda_1 \geq \dots \geq \lambda_J$, and we have
$\mu_j = \frac{\lambda_j}{M}$.
To determine the $\balpha$s, we set the eigenvectors $\bv$ to have unit lengths: $\bv^\top\bv = \balpha^\top\bK\balpha = \lambda\balpha^\top\balpha = 1.$
So the $\balpha$s should be normalized to have length $\frac{1}{\lambda}$: $
\balpha_j = \frac{1}{\sqrt{\lambda_j}}\bu_j.$ For a new data point $\bx$, KPCA computes the embedding $\xi(\bx)$ (the projection onto the first $J$ eigenvectors) as
$
%\begin{aligned}
\xi(\bx) = [\phi(\bx)^\top \bv_1, \dots, \phi(\bx)^\top \bv_J]^\top
= [\balpha_1^\top \bk_{\bx}, \dots, \balpha_J^\top \bk_{\bx}]^\top,
%\end{aligned}
$
where $\bk_{\bx} = [k(\bx, \bx^1), \dots, k(\bx, \bx^M)]^\top$. 
%Though one might worry that this is an ill-controlled approximation because we cannot ensure that $\phi(\bx)$ lies in the span of $\phi(\bx^1), \dots, \phi(\bx^M)$ anymore, 
It was pointed out by \citet{williams2000effect} to be equivalent to using the well understood Nystr{\"o}m approximation. We can see this by noticing that each component of $\xi(\bx)$ is identical to \cref{eq:nystrom} up to a scaling factor:
\begin{equation} \label{eq:kpca-nystrom}\vspace{-0.1cm}
\xi_j(\bx) = \balpha_j^\top \bk_{\bx} = \sqrt{\frac{\lambda_j}{M}}\;\hat{\psi}_j(\bx).
\end{equation}
Looking back at \cref{eq:grad-est}, we can see that SSGE estimates the gradients by a linear estimator with KPCA embeddings as input features. As KPCA embeddings are known to automatically adapt to the geometry of the samples, given a suitable kernel is chosen, it can reduce the curse of dimensionality when the estimator is applied to high dimensional spaces, which helps explain the effectiveness of SSGE.

\textbf{Connection to Manifold-modeling Dimension Reduction Methods}\;
It has been pointed out in previous works~\citep{williams2001connection,ham2004kernel,bengio2004learning,bengio2004out} that many successful manifold-modeling dimension reduction methods (e.g., MDS, LLE, Laplacian eigenmaps, and Spectral clustering) can be viewed as KPCA with different ways of constructing data-dependent kernels. We believe it is a promising direction to learn a better kernel from a dataset of samples that could improve the manifold modeling behavior of KPCA embeddings, thus further improving the gradient estimator.

\vspace{-0.15cm}
\section{Conclusion}

We propose the Spectral Stein Gradient Estimator (SSGE) for implicit distributions. Unlike previous methods, SSGE directly estimates the gradient function and thus has a principled out-of-sample extension. Future work may include learning kernels or eigenfunctions in the estimator, as indicated by the error bound as well as the connection to dimension reduction methods.

 \section*{Acknowledgements}

We thank anonymous reviewers for insightful feedbacks, and thank the meta-reviewer and Chang Liu for comments on improving \Cref{thm:stein-identity}. This work was supported by NSFC Projects (Nos. 61620106010, 61621136008, 61332007), Beijing NSF Project (No. L172037), Tiangong Institute for Intelligent Computing, NVIDIA NVAIL Program, Siemens and Intel.

% In the unusual situation where you want a paper to appear in the
% references without citing it in the main text, use \nocite
% \nocite{langley00}

\bibliography{example_paper}

\begin{thebibliography}{52}
\providecommand{\natexlab}[1]{#1}
\providecommand{\url}[1]{\texttt{#1}}
\expandafter\ifx\csname urlstyle\endcsname\relax
  \providecommand{\doi}[1]{doi: #1}\else
  \providecommand{\doi}{doi: \begingroup \urlstyle{rm}\Url}\fi

\bibitem[Andrieu et~al.(2009)Andrieu, Roberts, et~al.]{andrieu2009pseudo}
Andrieu, C., Roberts, G.~O., et~al.
\newblock The pseudo-marginal approach for efficient monte carlo computations.
\newblock \emph{The Annals of Statistics}, 37\penalty0 (2):\penalty0 697--725,
  2009.

\bibitem[Baker(1997)]{baker1997numerical}
Baker, C.~T.
\newblock \emph{The Numerical Treatment of Integral Equations}.
\newblock Clarendon Press, Oxford, 1997.

\bibitem[Belkin \& Niyogi(2003)Belkin and Niyogi]{belkin2003laplacian}
Belkin, M. and Niyogi, P.
\newblock Laplacian eigenmaps for dimensionality reduction and data
  representation.
\newblock \emph{Neural computation}, 15\penalty0 (6):\penalty0 1373--1396,
  2003.

\bibitem[Bengio et~al.(2004{\natexlab{a}})Bengio, Delalleau, Roux, Paiement,
  Vincent, and Ouimet]{bengio2004learning}
Bengio, Y., Delalleau, O., Roux, N.~L., Paiement, J.-F., Vincent, P., and
  Ouimet, M.
\newblock Learning eigenfunctions links spectral embedding and kernel {PCA}.
\newblock \emph{Neural computation}, 16\penalty0 (10):\penalty0 2197--2219,
  2004{\natexlab{a}}.

\bibitem[Bengio et~al.(2004{\natexlab{b}})Bengio, Paiement, Vincent, Delalleau,
  Roux, and Ouimet]{bengio2004out}
Bengio, Y., Paiement, J.-f., Vincent, P., Delalleau, O., Roux, N.~L., and
  Ouimet, M.
\newblock Out-of-sample extensions for {LLE}, {I}somap, {MDS}, eigenmaps, and
  spectral clustering.
\newblock In \emph{Advances in Neural Information Processing Systems}, pp.\
  177--184, 2004{\natexlab{b}}.

\bibitem[Blundell et~al.(2015)Blundell, Cornebise, Kavukcuoglu, and
  Wierstra]{blundell2015weight}
Blundell, C., Cornebise, J., Kavukcuoglu, K., and Wierstra, D.
\newblock Weight uncertainty in neural networks.
\newblock In \emph{International Conference on Machine Learning}, pp.\
  1613--1622, 2015.

\bibitem[Borg \& Groenen(2005)Borg and Groenen]{borg2005modern}
Borg, I. and Groenen, P.~J.
\newblock \emph{Modern multidimensional scaling: Theory and applications}.
\newblock Springer Science \& Business Media, 2005.

\bibitem[Burges et~al.(2010)]{burges2010dimension}
Burges, C.~J. et~al.
\newblock Dimension reduction: A guided tour.
\newblock \emph{Foundations and Trends{\textregistered} in Machine Learning},
  2\penalty0 (4):\penalty0 275--365, 2010.

\bibitem[Chwialkowski et~al.(2016)Chwialkowski, Strathmann, and
  Gretton]{chwialkowski2016kernel}
Chwialkowski, K., Strathmann, H., and Gretton, A.
\newblock A kernel test of goodness of fit.
\newblock In \emph{International Conference on Machine Learning}, pp.\
  2606--2615, 2016.

\bibitem[Dai et~al.(2016)Dai, He, Dai, and Song]{dai2015provable}
Dai, B., He, N., Dai, H., and Song, L.
\newblock Provable bayesian inference via particle mirror descent.
\newblock In \emph{International Conference on Artificial Intelligence and
  Statistics}, pp.\  985--994, 2016.

\bibitem[Donahue et~al.(2016)Donahue, Kr{\"a}henb{\"u}hl, and
  Darrell]{donahue2016adversarial}
Donahue, J., Kr{\"a}henb{\"u}hl, P., and Darrell, T.
\newblock Adversarial feature learning.
\newblock \emph{arXiv preprint arXiv:1605.09782}, 2016.

\bibitem[Dumoulin et~al.(2016)Dumoulin, Belghazi, Poole, Lamb, Arjovsky,
  Mastropietro, and Courville]{dumoulin2016adversarially}
Dumoulin, V., Belghazi, I., Poole, B., Lamb, A., Arjovsky, M., Mastropietro,
  O., and Courville, A.
\newblock Adversarially learned inference.
\newblock \emph{arXiv preprint arXiv:1606.00704}, 2016.

\bibitem[Goodfellow et~al.(2014)Goodfellow, Pouget-Abadie, Mirza, Xu,
  Warde-Farley, Ozair, Courville, and Bengio]{goodfellow2014generative}
Goodfellow, I., Pouget-Abadie, J., Mirza, M., Xu, B., Warde-Farley, D., Ozair,
  S., Courville, A., and Bengio, Y.
\newblock Generative adversarial nets.
\newblock In \emph{Advances in Neural Information Processing Systems}, pp.\
  2672--2680, 2014.

\bibitem[Gorham \& Mackey(2015)Gorham and Mackey]{gorham2015measuring}
Gorham, J. and Mackey, L.
\newblock Measuring sample quality with stein's method.
\newblock In \emph{Advances in Neural Information Processing Systems}, pp.\
  226--234, 2015.

\bibitem[Ham et~al.(2004)Ham, Lee, Mika, and Sch{\"o}lkopf]{ham2004kernel}
Ham, J., Lee, D.~D., Mika, S., and Sch{\"o}lkopf, B.
\newblock A kernel view of the dimensionality reduction of manifolds.
\newblock In \emph{International Conference on Machine Learning}, pp.\ ~47,
  2004.

\bibitem[Hern{\'a}ndez-Lobato \& Adams(2015)Hern{\'a}ndez-Lobato and
  Adams]{hernandez2015probabilistic}
Hern{\'a}ndez-Lobato, J.~M. and Adams, R.
\newblock Probabilistic backpropagation for scalable learning of bayesian
  neural networks.
\newblock In \emph{International Conference on Machine Learning}, pp.\
  1861--1869, 2015.

\bibitem[Heusel et~al.(2017)Heusel, Ramsauer, Unterthiner, Nessler, Klambauer,
  and Hochreiter]{heusel2017gans}
Heusel, M., Ramsauer, H., Unterthiner, T., Nessler, B., Klambauer, G., and
  Hochreiter, S.
\newblock Gans trained by a two time-scale update rule converge to a nash
  equilibrium.
\newblock \emph{arXiv preprint arXiv:1706.08500}, 2017.

\bibitem[Husz{\'a}r(2017)]{huszar2017variational}
Husz{\'a}r, F.
\newblock Variational inference using implicit distributions.
\newblock \emph{arXiv preprint arXiv:1702.08235}, 2017.

\bibitem[Hyv{\"a}rinen(2005)]{hyvarinen2005estimation}
Hyv{\"a}rinen, A.
\newblock Estimation of non-normalized statistical models by score matching.
\newblock \emph{Journal of Machine Learning Research}, 6\penalty0
  (Apr):\penalty0 695--709, 2005.

\bibitem[Izbicki et~al.(2014)Izbicki, Lee, and Schafer]{izbicki2014high}
Izbicki, R., Lee, A., and Schafer, C.
\newblock High-dimensional density ratio estimation with extensions to
  approximate likelihood computation.
\newblock In \emph{International Conference on Artificial Intelligence and
  Statistics}, pp.\  420--429, 2014.

\bibitem[Kingma \& Ba(2014)Kingma and Ba]{kingma2014adam}
Kingma, D.~P. and Ba, J.
\newblock Adam: A method for stochastic optimization.
\newblock \emph{arXiv preprint arXiv:1412.6980}, 2014.

\bibitem[Kingma \& Welling(2013)Kingma and Welling]{kingma2013auto}
Kingma, D.~P. and Welling, M.
\newblock Auto-encoding variational bayes.
\newblock \emph{arXiv preprint arXiv:1312.6114}, 2013.

\bibitem[Li \& Turner(2018)Li and Turner]{li2018gradient}
Li, Y. and Turner, R.~E.
\newblock Gradient estimators for implicit models.
\newblock In \emph{International Conference on Learning Representations}, 2018.

\bibitem[Liu \& Feng(2016)Liu and Feng]{liu2016two}
Liu, Q. and Feng, Y.
\newblock Two methods for wild variational inference.
\newblock \emph{arXiv preprint arXiv:1612.00081}, 2016.

\bibitem[Liu \& Wang(2016)Liu and Wang]{liu2016stein}
Liu, Q. and Wang, D.
\newblock Stein variational gradient descent: A general purpose bayesian
  inference algorithm.
\newblock In \emph{Advances In Neural Information Processing Systems}, pp.\
  2370--2378, 2016.

\bibitem[Liu et~al.(2016)Liu, Lee, and Jordan]{liu2016kernelized}
Liu, Q., Lee, J., and Jordan, M.
\newblock A kernelized stein discrepancy for goodness-of-fit tests.
\newblock In \emph{International Conference on Machine Learning}, pp.\
  276--284, 2016.

\bibitem[Louizos \& Welling(2016)Louizos and Welling]{louizos2016structured}
Louizos, C. and Welling, M.
\newblock Structured and efficient variational deep learning with matrix
  gaussian posteriors.
\newblock In \emph{International Conference on Machine Learning}, pp.\
  1708--1716, 2016.

\bibitem[Mescheder et~al.(2017)Mescheder, Nowozin, and
  Geiger]{mescheder2017adversarial}
Mescheder, L., Nowozin, S., and Geiger, A.
\newblock Adversarial variational bayes: Unifying variational autoencoders and
  generative adversarial networks.
\newblock In \emph{International Conference on Machine Learning}, pp.\
  2391--2400, 2017.

\bibitem[Mohamed \& Lakshminarayanan(2016)Mohamed and
  Lakshminarayanan]{mohamed2016learning}
Mohamed, S. and Lakshminarayanan, B.
\newblock Learning in implicit generative models.
\newblock \emph{arXiv preprint arXiv:1610.03483}, 2016.

\bibitem[Murray \& Adams(2010)Murray and Adams]{murray2010slice}
Murray, I. and Adams, R.~P.
\newblock Slice sampling covariance hyperparameters of latent gaussian models.
\newblock In \emph{Advances in Neural Information Processing Systems}, pp.\
  1732--1740, 2010.

\bibitem[Neal et~al.(2011)]{neal2011mcmc}
Neal, R.~M. et~al.
\newblock Mcmc using hamiltonian dynamics.
\newblock \emph{Handbook of Markov Chain Monte Carlo}, 2\penalty0 (11), 2011.

\bibitem[Nystr{\"o}m(1930)]{nystrom1930praktische}
Nystr{\"o}m, E.~J.
\newblock {\"U}ber die praktische aufl{\"o}sung von integralgleichungen mit
  anwendungen auf randwertaufgaben.
\newblock \emph{Acta Mathematica}, 54\penalty0 (1):\penalty0 185--204, 1930.

\bibitem[Parlett \& Scott(1979)Parlett and Scott]{parlett1979lanczos}
Parlett, B.~N. and Scott, D.~S.
\newblock The lanczos algorithm with selective orthogonalization.
\newblock \emph{Mathematics of computation}, 33\penalty0 (145):\penalty0
  217--238, 1979.

\bibitem[Radford et~al.(2015)Radford, Metz, and
  Chintala]{radford2015unsupervised}
Radford, A., Metz, L., and Chintala, S.
\newblock Unsupervised representation learning with deep convolutional
  generative adversarial networks.
\newblock \emph{arXiv preprint arXiv:1511.06434}, 2015.

\bibitem[Ranganath et~al.(2016)Ranganath, Tran, Altosaar, and
  Blei]{ranganath2016operator}
Ranganath, R., Tran, D., Altosaar, J., and Blei, D.
\newblock Operator variational inference.
\newblock In \emph{Advances in Neural Information Processing Systems}, pp.\
  496--504, 2016.

\bibitem[Rosasco et~al.(2010)Rosasco, Belkin, and Vito]{rosasco2010learning}
Rosasco, L., Belkin, M., and Vito, E.~D.
\newblock On learning with integral operators.
\newblock \emph{Journal of Machine Learning Research}, 11\penalty0
  (Feb):\penalty0 905--934, 2010.

\bibitem[Roweis \& Saul(2000)Roweis and Saul]{roweis2000nonlinear}
Roweis, S.~T. and Saul, L.~K.
\newblock Nonlinear dimensionality reduction by locally linear embedding.
\newblock \emph{Science}, 290\penalty0 (5500):\penalty0 2323--2326, 2000.

\bibitem[Sch{\"o}lkopf et~al.(1998)Sch{\"o}lkopf, Smola, and
  M{\"u}ller]{scholkopf1998nonlinear}
Sch{\"o}lkopf, B., Smola, A., and M{\"u}ller, K.-R.
\newblock Nonlinear component analysis as a kernel eigenvalue problem.
\newblock \emph{Neural computation}, 10\penalty0 (5):\penalty0 1299--1319,
  1998.

\bibitem[Sejdinovic \& Gretton()Sejdinovic and Gretton]{sejdinovic2012rkhs}
Sejdinovic, D. and Gretton, A.
\newblock What is an rkhs?

\bibitem[Sejdinovic et~al.(2014)Sejdinovic, Strathmann, Garcia, Andrieu, and
  Gretton]{sejdinovic2014kernel}
Sejdinovic, D., Strathmann, H., Garcia, M.~L., Andrieu, C., and Gretton, A.
\newblock Kernel adaptive metropolis-hastings.
\newblock In \emph{International Conference on Machine Learning}, pp.\
  1665--1673, 2014.

\bibitem[Shi et~al.(2017)Shi, Chen, Zhu, Sun, Luo, Gu, and Zhou]{zhusuan2017}
Shi, J., Chen, J., Zhu, J., Sun, S., Luo, Y., Gu, Y., and Zhou, Y.
\newblock Zhu{S}uan: A library for {B}ayesian deep learning.
\newblock \emph{arXiv preprint arXiv:1709.05870}, 2017.

\bibitem[Shi et~al.(2018)Shi, Sun, and Zhu]{shi2018kernel}
Shi, J., Sun, S., and Zhu, J.
\newblock Kernel implicit variational inference.
\newblock In \emph{International Conference on Learning Representations}, 2018.

\bibitem[Sinha \& Belkin(2009)Sinha and Belkin]{sinha2009semi}
Sinha, K. and Belkin, M.
\newblock Semi-supervised learning using sparse eigenfunction bases.
\newblock In \emph{Advances in Neural Information Processing Systems}, pp.\
  1687--1695, 2009.

\bibitem[Sriperumbudur et~al.(2017)Sriperumbudur, Fukumizu, Gretton,
  Hyv{\"a}rinen, and Kumar]{sriperumbudur2017density}
Sriperumbudur, B., Fukumizu, K., Gretton, A., Hyv{\"a}rinen, A., and Kumar, R.
\newblock Density estimation in infinite dimensional exponential families.
\newblock \emph{The Journal of Machine Learning Research}, 18\penalty0
  (1):\penalty0 1830--1888, 2017.

\bibitem[Stein(1981)]{stein1981estimation}
Stein, C.~M.
\newblock Estimation of the mean of a multivariate normal distribution.
\newblock \emph{The Annals of Statistics}, pp.\  1135--1151, 1981.

\bibitem[Strathmann et~al.(2015)Strathmann, Sejdinovic, Livingstone, Szabo, and
  Gretton]{strathmann2015gradient}
Strathmann, H., Sejdinovic, D., Livingstone, S., Szabo, Z., and Gretton, A.
\newblock Gradient-free hamiltonian monte carlo with efficient kernel
  exponential families.
\newblock In \emph{Advances in Neural Information Processing Systems}, pp.\
  955--963, 2015.

\bibitem[Sutherland et~al.(2018)Sutherland, Strathmann, Arbel, and
  Gretton]{pmlr-v84-sutherland18a}
Sutherland, D., Strathmann, H., Arbel, M., and Gretton, A.
\newblock Efficient and principled score estimation with nyström kernel
  exponential families.
\newblock In \emph{International Conference on Artificial Intelligence and
  Statistics}, pp.\  652--660, 2018.

\bibitem[Tran et~al.(2017)Tran, Ranganath, and Blei]{tran2017hierarchical}
Tran, D., Ranganath, R., and Blei, D.
\newblock Hierarchical implicit models and likelihood-free variational
  inference.
\newblock In \emph{Advances in Neural Information Processing Systems}, pp.\
  5529--5539, 2017.

\bibitem[Weiss(1999)]{weiss1999segmentation}
Weiss, Y.
\newblock Segmentation using eigenvectors: a unifying view.
\newblock In \emph{The proceedings of the seventh IEEE international conference
  on computer vision}, volume~2, pp.\  975--982. IEEE, 1999.

\bibitem[Williams \& Seeger(2000)Williams and Seeger]{williams2000effect}
Williams, C. and Seeger, M.
\newblock The effect of the input density distribution on kernel-based
  classifiers.
\newblock In \emph{International Conference on Machine Learning}, 2000.

\bibitem[Williams(2001)]{williams2001connection}
Williams, C.~K.
\newblock On a connection between kernel {PCA} and metric multidimensional
  scaling.
\newblock In \emph{Advances in Neural Information Processing Systems}, pp.\
  675--681, 2001.

\bibitem[Williams \& Seeger(2001)Williams and Seeger]{williams2001using}
Williams, C.~K. and Seeger, M.
\newblock Using the {N}ystr{\"o}m method to speed up kernel machines.
\newblock In \emph{Advances in Neural Information Processing Systems}, pp.\
  682--688, 2001.

\end{thebibliography}
\bibliographystyle{icml2018}

\onecolumn
\clearpage

\appendix

\icmltitle{A Spectral Approach to Gradient Estimation for Implicit Distributions:\\
Appendix}

\section{Proof of Proposition 1} \label{app:proof-eigen-stein}

We first introduce the following lemmas.

\begin{lem}[\citealt{liu2016kernelized}, Proposition 3.5] \label{lem:rkhs-stein}
	Let $\mathcal{H}$ denote the Reproducing Kernel Hilbert Space (RKHS) induced by kernel $k$. If $k(\cdot, \cdot)$ has continuous second order partial derivatives, and both $k(\bx, \cdot)$ and $k(\cdot, \bx)$ are in the Stein class of $q$ for any fixed $\bx$, then $\forall f\in \mathcal{H}$, $f$ is in the Stein class of $q$.
\end{lem}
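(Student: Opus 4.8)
The plan is to reduce the Stein-class condition \cref{eq:stein-cond} for an arbitrary $f\in\mathcal{H}$ to the assumed Stein-class conditions for the kernel sections, by pushing the RKHS inner product through the gradient and the integral. Throughout I would write $k_{\bz} := k(\cdot, \bz)\in\mathcal{H}$ for the kernel section, so that the reproducing property reads $f(\bz) = \langle f, k_{\bz}\rangle_{\mathcal{H}}$. First I would record the \emph{derivative reproducing property}: since $k$ has continuous second order partial derivatives, each partial $\partial_{z_i} k_{\bz}$ again lies in $\mathcal{H}$ and $\partial_{z_i} f(\bz) = \langle f, \partial_{z_i} k_{\bz}\rangle_{\mathcal{H}}$ for every $f\in\mathcal{H}$; the same smoothness makes $\bz \mapsto k_{\bz}$ and $\bz\mapsto \partial_{z_i}k_{\bz}$ continuous $\mathcal{H}$-valued maps.

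Using this, I would write the $i$-th component of the Stein integral as $\int_{\mathcal{X}} \partial_{z_i}(\langle f, k_{\bz}\rangle_{\mathcal{H}}\,q(\bz))\,d\bz$, and then perform the key step of interchanging the bounded linear functional $\langle f, \cdot\rangle_{\mathcal{H}}$ with both $\partial_{z_i}$ and $\int d\bz$, obtaining
\[
\int_{\mathcal{X}} \partial_{z_i}\big(f(\bz)q(\bz)\big)\,d\bz
= \Big\langle f,\ \int_{\mathcal{X}} \partial_{z_i}\big(k_{\bz}\,q(\bz)\big)\,d\bz \Big\rangle_{\mathcal{H}},
\]
where the inner integral is an $\mathcal{H}$-valued (Bochner) integral. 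To finish I would identify the inner element $F_i := \int_{\mathcal{X}} \partial_{z_i}(k_{\bz}\,q(\bz))\,d\bz \in \mathcal{H}$ as the zero function. Evaluating at an arbitrary $\by$ — evaluation is continuous, hence commutes with the integral — and using symmetry $k(\by,\bz)=k(\bz,\by)$ gives $F_i(\by) = \int_{\mathcal{X}} \partial_{z_i}(k(\bz,\by)q(\bz))\,d\bz$, which is exactly the $i$-th component of the Stein integral of the section $k(\cdot,\by)$ and therefore vanishes by the hypothesis that $k(\cdot,\bx)$ is in the Stein class of $q$ for every fixed $\bx$. Thus $F_i=0$ in $\mathcal{H}$, so $\langle f, F_i\rangle_{\mathcal{H}}=0$ for each $i$, giving $\int \nabla_{\bz}(f(\bz)q(\bz))\,d\bz = \bzero$, i.e. $f$ lies in the Stein class of $q$.

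The main obstacle will be rigorously justifying the interchange of $\langle f,\cdot\rangle_{\mathcal{H}}$, $\partial_{z_i}$, and $\int d\bz$ in the display above. This requires the continuity of the $\mathcal{H}$-valued maps $\bz\mapsto k_{\bz}$ and $\bz\mapsto\partial_{z_i}k_{\bz}$ (from the $C^2$ assumption) together with a domination/integrability bound, e.g. $|\partial_{z_i}(f(\bz)q(\bz))| \le \|f\|_{\mathcal{H}}\big(\sqrt{k(\bz,\bz)}\,|\partial_{z_i}q(\bz)| + \|\partial_{z_i}k_{\bz}\|_{\mathcal{H}}\,q(\bz)\big)$, so that dominated convergence and the standard properties of the Bochner integral apply.

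An alternative route, isolating the same difficulty, is a density argument: the Stein condition holds for every finite combination $\sum_m c_m k(\cdot,\bx^m)$ by linearity and the kernel-section hypothesis, and it extends to all of $\mathcal{H}$ provided the map $f\mapsto\int\nabla_{\bz}(f(\bz)q(\bz))\,d\bz$ is a bounded linear operator on $\mathcal{H}$ — whose boundedness rests on exactly the same integrability estimate. Either way, the analytic crux is this uniform control of the integrand by $\|f\|_{\mathcal{H}}$ times an integrable envelope, which I would check holds under Assumption~\ref{asp:kernel-boundary} (e.g. for RBF kernels, where $k(\bz,\bz)$ is constant and the relevant derivatives decay fast enough).
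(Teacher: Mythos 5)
The paper does not prove this lemma itself: it is imported verbatim from \citet{liu2016kernelized} (their Proposition 3.5), so there is no in-paper argument to compare against. Your proof is correct and is essentially the standard argument behind that cited result --- write $f(\bz)=\langle f,k_{\bz}\rangle_{\mathcal{H}}$, pull the inner product through $\partial_{z_i}$ and the integral, and observe that the resulting $\mathcal{H}$-valued integral vanishes because each kernel section is in the Stein class. You are also right to flag the one genuine technical point, namely that the interchange requires Bochner integrability of $\bz\mapsto\partial_{z_i}(k_{\bz}q(\bz))$, an integrability condition not literally contained in the lemma's hypotheses (and glossed over in the original reference as well); it does hold for the RBF kernels used throughout this paper, so the caveat is harmless here.
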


\begin{lem}[Mercer's theorem] \label{lem:mercer}
	Let $k$ be a continuous kernel on compact metric space $\mathcal{X}$. $q$ is a finite Borel measure on $\mathcal{X}$. Then for $\{\psi_j\}_{j\geq 1}$ that satisfy \cref{eq:fredholm}, $\forall \bx,\by \in \mathcal{X}$:
	\begin{equation*}
	k(\bx, \by) = \sum_{j}\mu_j\psi_j(\bx)\psi_j(\by).
	\end{equation*}
\end{lem}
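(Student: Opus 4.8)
The statement is the classical Mercer theorem, and the plan is to prove it through the spectral theory of the integral operator associated with $k$, supplemented by a uniform-convergence argument. Throughout I assume $k$ is symmetric and positive definite (as holds for the RBF kernels used in the paper) and that $q$ has full support (otherwise one works on $\mathrm{supp}(q)$), which is the setting in which the expansion is meaningful. First I would introduce the integral operator $T:L^2(\mathcal{X},q)\to L^2(\mathcal{X},q)$, $(Tf)(\bx)=\int k(\bx,\by)f(\by)q(\by)\,d\by$, whose eigenpairs are exactly the solutions of \cref{eq:fredholm}. Because $\mathcal{X}$ is a compact metric space and $k$ is continuous, $k$ is bounded and uniformly continuous; together with $q$ finite this makes $T$ Hilbert--Schmidt, hence compact, while symmetry gives self-adjointness and positive-definiteness makes $T$ a positive operator. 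The spectral theorem for compact self-adjoint operators then yields a countable orthonormal system $\{\psi_j\}$ with eigenvalues $\mu_1\geq\mu_2\geq\cdots\geq 0$. Each $\psi_j$ with $\mu_j>0$ can be taken continuous, since $\psi_j=\mu_j^{-1}T\psi_j$ and $T$ maps $L^2(\mathcal{X},q)$ into $C(\mathcal{X})$ by uniform continuity of $k$.

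Next I would control the truncation remainder. Set $k_n(\bx,\by)=\sum_{j=1}^n\mu_j\psi_j(\bx)\psi_j(\by)$ and $r_n=k-k_n$. The operator with kernel $r_n$ equals $\sum_{j>n}\mu_j P_j$, where $P_j$ is the orthogonal projection onto $\psi_j$; this operator is positive semidefinite, and a continuous kernel whose integral operator is positive is itself pointwise positive definite. In particular its diagonal is nonnegative, $r_n(\bx,\bx)=k(\bx,\bx)-\sum_{j=1}^n\mu_j\psi_j(\bx)^2\geq 0$, so the partial sums $\sum_{j\leq n}\mu_j\psi_j(\bx)^2$ increase and are bounded by $k(\bx,\bx)$ and hence converge at every $\bx$. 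To identify the limit with $k(\bx,\bx)$ I would invoke the standard trace formula $\sum_j\mu_j=\int k(\bx,\bx)q(\bx)\,d\bx$ (valid for a continuous positive-definite kernel on a compact space and not itself requiring Mercer), which gives $\int r_n(\bx,\bx)q(\bx)\,d\bx\to 0$; combined with $r_n(\bx,\bx)\geq 0$ and monotonicity this forces $r_n(\bx,\bx)\to 0$ pointwise. Since the diagonal sequence is a monotone decreasing sequence of continuous functions converging to the continuous limit $0$ on the compact set $\mathcal{X}$, Dini's theorem upgrades this to \emph{uniform} convergence.

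Finally I would pass from the diagonal to the whole kernel using the Cauchy--Schwarz inequality for the positive-definite kernel $r_n$, namely $|r_n(\bx,\by)|^2\leq r_n(\bx,\bx)\,r_n(\by,\by)\leq \big(\sup_{\bz}r_n(\bz,\bz)\big)^2$. As the right-hand side tends to $0$ uniformly, the series $\sum_j\mu_j\psi_j(\bx)\psi_j(\by)$ converges to $k(\bx,\by)$ uniformly on $\mathcal{X}\times\mathcal{X}$, which in particular yields the claimed pointwise identity for all $\bx,\by$. The main obstacle is the diagonal step: $L^2$ (Hilbert--Schmidt) convergence of $r_n$ on the product space is immediate from the spectral theorem, but obtaining the identity at \emph{every} point requires the positivity of the remainder kernels, the continuity of the eigenfunctions, the trace identity, and Dini's theorem, none of which can be bypassed. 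Since this is a classical result, an acceptable alternative is simply to cite the standard statement of Mercer's theorem.
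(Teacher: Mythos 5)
The paper offers no proof of this lemma at all: it is stated as a quoted classical result with the one-line proof ``See \citet{sejdinovic2012rkhs}, Theorem 50.'' So your closing remark---that citing the standard statement is acceptable---is precisely what the authors do, and your explicit hypotheses (symmetric positive-definite $k$, $q$ of full support) are in fact \emph{necessary} for the conclusion ``$\forall \bx,\by\in\mathcal{X}$'' to hold as stated; making them explicit tightens the paper's rather loose formulation. Your operator-theoretic setup (compact, self-adjoint, positive $T$; continuity of eigenfunctions via $\psi_j=\mu_j^{-1}T\psi_j$; pointwise positivity of the remainder kernels $r_n$; Cauchy--Schwarz to pass from the diagonal to the whole kernel) is the standard architecture and is sound.

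There is, however, a genuine gap at the diagonal-identification step. From $\int r_n(\bx,\bx)\,dq\to 0$ together with monotonicity and nonnegativity you conclude $r_n(\bx,\bx)\to 0$ \emph{pointwise}, and then call the limit ``the continuous limit $0$'' so as to apply Dini. But a decreasing limit of continuous functions is only upper semicontinuous, and vanishing of the integrals yields only $q$-a.e.\ vanishing; an u.s.c.\ nonnegative function can vanish a.e.\ yet be positive at a point (tent functions decreasing to the indicator of a singleton), so the everywhere-vanishing and continuity of the limit are exactly what must be proved, and Dini cannot be invoked beforehand. Compounding this, the trace identity you lean on is only half free: $\sum_j\mu_j\le\int k(\bx,\bx)\,dq$ follows by integrating your diagonal partial sums, but the reverse inequality is normally a \emph{consequence} of Mercer; Mercer-independent proofs (Lebesgue-point/Brislawn-type arguments) exist but are nontrivial, so asserting it ``does not require Mercer'' without proof risks circularity. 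The standard repair, which your own ingredients nearly contain, bypasses the trace identity entirely: fix $\bx$; since $\sum_j\mu_j\psi_j(\bx)^2$ converges (your step 2), Cauchy--Schwarz with the uniform diagonal bound makes $\tilde k(\bx,\cdot)=\sum_j\mu_j\psi_j(\bx)\psi_j(\cdot)$ uniformly convergent, hence continuous in the second argument; for $f\in\ker T$ the function $Tf$ is continuous and vanishes $q$-a.e., hence everywhere by full support, so $\langle k(\bx,\cdot),f\rangle=(Tf)(\bx)=0$, i.e.\ $k(\bx,\cdot)\perp\ker T$; comparing Fourier coefficients then gives $k(\bx,\cdot)=\tilde k(\bx,\cdot)$ in $L^2(\mathcal{X},q)$, hence $q$-a.e., hence everywhere (two continuous functions, full support), in particular at $\by=\bx$. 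This establishes the diagonal identity pointwise, after which your Dini argument and the final Cauchy--Schwarz step go through verbatim.
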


\begin{proof}
	See \citet{sejdinovic2012rkhs}, Theorem 50.
\end{proof}

\begin{lem}[\citealt{sejdinovic2012rkhs}, Theorem 51] \label{lem:l2-vs-rkhs}
	Let $\mathcal{X}$ be a compact metric space and k: $\mathcal{X}\times\mathcal{X}\to\mathbb{R}$ be a continuous kernel, Define:
	\begin{equation*}
		\mathcal{H} = \left\{f=\sum_{i}a_i\psi_i:\left\{\frac{a_i}{\sqrt{\mu_i}}\right\}\in \ell^2 \right\}.
	\end{equation*}
	Then $\mathcal{H}$ is the same space as the RKHS induced by $k$.
\end{lem}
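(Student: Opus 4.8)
The plan is to exhibit an explicit inner product on the candidate space $\mathcal{H}$, show that with this inner product $\mathcal{H}$ is a Hilbert space of functions for which $k$ is a reproducing kernel, and then invoke the uniqueness half of the Moore--Aronszajn theorem to conclude that $\mathcal{H}$ must coincide with the RKHS induced by $k$. Concretely, for $f = \sum_i a_i\psi_i$ and $g = \sum_i b_i\psi_i$ in $\mathcal{H}$ (where throughout the sums range only over indices with $\mu_i > 0$), I would define
\begin{equation*}
\langle f, g\rangle_{\mathcal{H}} = \sum_i \frac{a_i b_i}{\mu_i},\qquad \|f\|_{\mathcal{H}}^2 = \sum_i \frac{a_i^2}{\mu_i},
\end{equation*}
which is finite precisely because the defining condition $\{a_i/\sqrt{\mu_i}\}\in\ell^2$ holds. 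The map $f \mapsto \{a_i/\sqrt{\mu_i}\}$ is then an isometry from $\mathcal{H}$ onto $\ell^2$, so completeness of $\mathcal{H}$ is inherited from completeness of $\ell^2$.

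Next I would verify the reproducing property using Mercer's decomposition (\Cref{lem:mercer}). For fixed $\bx$, the function $k(\cdot, \bx) = \sum_j \mu_j\psi_j(\bx)\psi_j(\cdot)$ has expansion coefficients $a_j = \mu_j\psi_j(\bx)$, so that
\begin{equation*}
\|k(\cdot,\bx)\|_{\mathcal{H}}^2 = \sum_j \frac{\big(\mu_j\psi_j(\bx)\big)^2}{\mu_j} = \sum_j \mu_j\psi_j(\bx)^2 = k(\bx,\bx) < \infty,
\end{equation*}
which shows $k(\cdot,\bx)\in\mathcal{H}$. Then for any $f = \sum_i a_i\psi_i \in \mathcal{H}$,
\begin{equation*}
\langle f, k(\cdot,\bx)\rangle_{\mathcal{H}} = \sum_i \frac{a_i\,\mu_i\psi_i(\bx)}{\mu_i} = \sum_i a_i\psi_i(\bx) = f(\bx),
\end{equation*}
which is exactly the reproducing identity. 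Once this is established, the uniqueness of the reproducing kernel Hilbert space attached to a given kernel gives $\mathcal{H}$ equal to the RKHS induced by $k$.

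The step I expect to require the most care is establishing that each $f\in\mathcal{H}$ is a genuine, pointwise-defined function and that the evaluation functionals are bounded, so that the formal series manipulations above are legitimate. The key estimate is a Cauchy--Schwarz bound combined with the Mercer identity $\sum_j \mu_j\psi_j(\bx)^2 = k(\bx,\bx)$,
\begin{equation*}
\left|\sum_i a_i\psi_i(\bx)\right| \le \left(\sum_i \frac{a_i^2}{\mu_i}\right)^{1/2}\left(\sum_i \mu_i\psi_i(\bx)^2\right)^{1/2} = \|f\|_{\mathcal{H}}\,\sqrt{k(\bx,\bx)},
\end{equation*}
which simultaneously guarantees pointwise (absolute) convergence of the series on $\mathcal{X}$ and shows that $f\mapsto f(\bx)$ is a bounded linear functional with norm controlled by $\sqrt{k(\bx,\bx)}$. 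Compactness of $\mathcal{X}$ together with continuity of $k$ keeps $k(\bx,\bx)$ uniformly bounded, so the argument is uniform in $\bx$. With these analytic points secured, the remaining verifications are routine, and the proof closes by appealing to the cited result of \citet{sejdinovic2012rkhs}.
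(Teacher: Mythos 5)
Your proposal is correct, and it is essentially the paper's own ``proof'': the paper proves nothing here, delegating entirely to \citet{sejdinovic2012rkhs} (Theorem~51), whose argument is exactly the one you reconstruct --- endow $\mathcal{H}$ with $\langle f,g\rangle_{\mathcal{H}}=\sum_i a_ib_i/\mu_i$, get completeness from the isometry with $\ell^2$, use Mercer's expansion (the paper's \Cref{lem:mercer}) to show $k(\cdot,\bx)\in\mathcal{H}$ with the reproducing property, and conclude by Moore--Aronszajn uniqueness. Your Cauchy--Schwarz estimate $|f(\bx)|\le\|f\|_{\mathcal{H}}\sqrt{k(\bx,\bx)}$ correctly handles the one delicate point (pointwise convergence and bounded evaluation functionals), so nothing essential is missing.
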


Then we prove \Cref{prop:eigen-boundary}.

\begin{proof}
	In \Cref{lem:l2-vs-rkhs} we set $a_j=1$, $a_i=0\;(\forall i\neq j)$, then we have $\psi_j \in \mathcal{H}$. According to \Cref{lem:rkhs-stein}, we can conclude that $\psi_j$ is in the Stein class of $q$	.
\end{proof}

\section{Error Bound of SSGE}
\label{app:theory}

Define
\begin{equation}
	g_i(\bx) = \sum_{j=1}^\infty \beta_{ij}\psi_j(\bx), \quad
	g_{i, J}(\bx) = \sum_{j=1}^J \beta_{ij}\psi_j(\bx), \quad
	\tilde{g}_{i, J}(\bx) = \sum_{j=1}^J \beta_{ij}\hat{\psi}_j(\bx), \quad
	\hat{g}_{i, J}(\bx) = \sum_{j=1}^J \hat{\beta}_{ij}\hat{\psi}_j(\bx),
\end{equation}
which correspond to the major approximations in each step.

%We make the following assumptions:
%
%\begin{asp} \label{asp:kernel-boundary}
%	$k(\bx, \cdot)$ and $k(\cdot, \bx)$ satisfy the boundary condition in \cref{eq:boundary}.
%\end{asp}
%
%%\begin{asp} \label{asp:kernel-grad}
%%	$\forall \bx, \by \in \mathbb{R}^d, |\nabla_{\bx} k(\bx, \by)| \le c < \infty$.
%%\end{asp}
%
%\begin{asp} \label{asp:square-integrable}
%	$\forall i$, $g_i(\bx) \in L^2(\mathcal{X}, q)$, i.e., $$
%	\int g_i(\bx)^2 q(\bx)\;d\bx = \sum_{j=1}^\infty \beta_{ij}^2 \le C < \infty.
%	$$
%\end{asp}
%
%Note that \Cref{asp:kernel-boundary} holds for RBF kernels. \Cref{asp:square-integrable} is necessary for $g_i(\bx)$ being possible to be expanded into the spectral series.

\begin{lem}[\citealt{sinha2009semi, izbicki2014high, rosasco2010learning}] \label{lem:nystrom-vs-eigen}
	For all $1\leq j\leq J$,
	\begin{equation}
		\int \left(\hat{\psi}_j(\bx) - \psi_j(\bx)\right)^2\;dq=O_p\left(\frac{1}{\mu_j\delta_j^2M}\right),
	\end{equation}
	where $\delta_j = \mu_j - \mu_{j +1}$.
\end{lem}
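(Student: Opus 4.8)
The plan is to recast the Nyström approximation as an eigenvector-perturbation problem for integral operators and then transfer the resulting bound from the RKHS $\mathcal{H}$ back to $L^2(\mathcal{X}, q)$. Introduce the inclusion operator $S:\mathcal{H}\to L^2(\mathcal{X},q)$ and its adjoint $S^*$, so that the integral operator $L_k = SS^*$ on $L^2(\mathcal{X},q)$ and the covariance operator $T = S^*S$ on $\mathcal{H}$ share the nonzero eigenvalues $\mu_j$. Writing $u_j\in\mathcal{H}$ for the RKHS-normalized eigenfunction of $T$, one gets the clean identity $\psi_j = \mu_j^{-1/2}Su_j$, and a parallel computation with the empirical covariance operator $T_M = \frac{1}{M}\sum_{m=1}^M k(\cdot,\bx^m)\otimes k(\cdot,\bx^m)$ shows that the Nyström formula \cref{eq:nystrom} is exactly $\hat{\psi}_j = \hat{\mu}_j^{-1/2}S\hat{u}_j$, where $\hat{u}_j$ is the RKHS-normalized eigenfunction of $T_M$ and $\hat{\mu}_j = \lambda_j/M$ (this is consistent with the normalization $\balpha_j=\bu_j/\sqrt{\lambda_j}$ derived in the KPCA discussion). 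This structural observation is what makes the whole argument go through.

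With this setup I would proceed in three steps. First, establish the concentration $\|T - T_M\|_{HS} = O_p(1/\sqrt{M})$: since $T_M$ is an empirical mean of i.i.d.\ bounded, self-adjoint, Hilbert--Schmidt random operators $k(\cdot,\bx^m)\otimes k(\cdot,\bx^m)$ with mean $T$ (boundedness follows from $\sup_{\bx}k(\bx,\bx)<\infty$, which holds for the RBF kernel), a Hoeffding/Bernstein inequality for Hilbert-space-valued averages gives the $O_p(M^{-1/2})$ rate. Second, invoke perturbation theory for simple eigenvalues (\Cref{asp:unique-eigen}): by a Davis--Kahan / resolvent argument, the spectral gap $\delta_j$ separating $\mu_j$ from the rest of the spectrum controls the eigenfunction perturbation, giving $\|\hat{u}_j - u_j\|_{\mathcal{H}} = O_p\!\left(\|T-T_M\|_{HS}/\delta_j\right) = O_p\!\left(1/(\delta_j\sqrt{M})\right)$ once the sign of $\hat{u}_j$ is fixed, while Weyl's inequality gives $|\hat{\mu}_j - \mu_j| = O_p(1/\sqrt{M})$.

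The third step transfers the bound back to $L^2$. Using $\hat{\psi}_j - \psi_j = S\!\left(\hat{\mu}_j^{-1/2}\hat{u}_j - \mu_j^{-1/2}u_j\right)$ together with $\|S\|=\sqrt{\mu_1}$, it suffices to bound $\|\hat{\mu}_j^{-1/2}\hat{u}_j - \mu_j^{-1/2}u_j\|_{\mathcal{H}}$. Splitting this into $\hat{\mu}_j^{-1/2}(\hat{u}_j-u_j)$ and $(\hat{\mu}_j^{-1/2}-\mu_j^{-1/2})u_j$, the first term contributes $O_p(1/(\sqrt{\mu_j}\,\delta_j\sqrt{M}))$, and the second, via $\hat{\mu}_j^{-1/2}-\mu_j^{-1/2} = O_p(\mu_j^{-3/2}M^{-1/2})$, is lower order because $\delta_j\le\mu_j$. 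Squaring the dominant contribution yields exactly $O_p(1/(\mu_j\delta_j^2 M))$.

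The main obstacle, and the reason the $1/\mu_j$ factor appears, is precisely this last normalization transfer: both the eigenvalue scaling ($\hat{\mu}_j$ versus $\mu_j$) and the reference structure encoded in $S$ change at once, and while $S$ contracts $\mathcal{H}$-norms by at most $\sqrt{\mu_1}$, it relates individual eigendirections through the factor $\mu_j^{-1/2}$. Care is needed to fix the eigenfunction phase consistently, to ensure $\hat{\mu}_j^{-1/2}\approx\mu_j^{-1/2}$ does not blow up (which again uses the gap to keep $\hat{\mu}_j$ away from its neighbours), and to verify that the eigenvalue-perturbation term is genuinely dominated. \Cref{asp:unique-eigen} is essential throughout, since it guarantees that the individual eigenfunction $\psi_j$, not merely the eigenspace, is well defined and stable under perturbation.
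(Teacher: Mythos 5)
The paper does not prove \Cref{lem:nystrom-vs-eigen} itself; it imports the bound from the cited references, and your operator-theoretic reconstruction (inclusion operator $S$, identification of the Nystr\"om formula with $\hat{\mu}_j^{-1/2}S\hat{u}_j$, Hilbert--Schmidt concentration at rate $M^{-1/2}$, Davis--Kahan with the gap, and transfer back to $L^2$) is exactly the route those references take, with the normalization bookkeeping done correctly. The only caveat, inherited from the statement itself rather than introduced by you, is that a perturbation bound for the $j$-th eigenvector really requires the two-sided gap $\min(\mu_{j-1}-\mu_j,\;\mu_j-\mu_{j+1})$ rather than $\delta_j=\mu_j-\mu_{j+1}$ alone.
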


\begin{lem}[\citealt{sinha2009semi, izbicki2014high}] \label{lem:nystrom-inner}
	For all $1\leq j\leq J$,
	\begin{equation}
	\int \hat{\psi}_j(\bx)^2\;dq=O_p\left(\frac{1}{\mu_j\Delta_J^2M}\right) + 1,
	\end{equation}
	and for all $1\leq i\leq J, i\neq j$,
	\begin{equation}
	\int \hat{\psi}_i(\bx)\hat{\psi}_j(\bx)\;dq=O_p\left(\left(\frac{1}{\sqrt{\mu_i}}+ \frac{1}{\sqrt{\mu_j}}\right)\frac{1}{\Delta_J\sqrt{M}}\right),
	\end{equation}
	where $\Delta_J = \min_{1\leq j\leq J} \delta_j$.
\end{lem}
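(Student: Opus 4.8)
The plan is to treat the Nystr\"om functions $\hat{\psi}_j$ as the eigenfunctions of the \emph{empirical} integral operator and to transfer spectral control to the population operator. Writing $q_M=\frac1M\sum_m\delta_{\bx^m}$ for the empirical measure, the population operator $(T_qf)(\bx)=\int k(\bx,\by)f(\by)q(\by)d\by$ has the $\psi_j$ as $L^2(q)$-orthonormal eigenfunctions, while the empirical operator $(T_{q_M}f)(\bx)=\frac1M\sum_m k(\bx,\bx^m)f(\bx^m)$ has the $\hat{\psi}_j$ as eigenfunctions with eigenvalues $\hat\mu_j=\lambda_j/M$. Since $T_{q_M}$ is self-adjoint in $L^2(q_M)$, the Nystr\"om functions are \emph{exactly} orthonormal there: $\frac1M\sum_m\hat{\psi}_i(\bx^m)\hat{\psi}_j(\bx^m)=\delta_{ij}$, which is where the leading ``$1$'' (diagonal) and ``$0$'' (off-diagonal) come from. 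Both operators share the nonzero spectrum with the self-adjoint RKHS operator of \citet{rosasco2010learning}, whose empirical and population versions differ by $O_p(1/\sqrt M)$ under a bounded kernel (true for the RBF). Throughout I will take \Cref{lem:nystrom-vs-eigen} as the ready-made eigenfunction distance bound $\|\hat{\psi}_j-\psi_j\|_{L^2(q)}^2=O_p(1/(\mu_j\delta_j^2M))$, with the sign of $\hat{\psi}_j$ fixed to match $\psi_j$, and abbreviate $r_j:=\hat{\psi}_j-\psi_j$.

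The off-diagonal case ($i\neq j$) is the clean part. Expanding, $\int\hat{\psi}_i\hat{\psi}_j\,dq=\int\psi_i\psi_j\,dq+\int\psi_i r_j\,dq+\int r_i\psi_j\,dq+\int r_i r_j\,dq$, where the first term vanishes by orthonormality. By Cauchy--Schwarz, $\|\psi_i\|_{L^2(q)}=1$, and $\delta_j\geq\Delta_J$, \Cref{lem:nystrom-vs-eigen} gives $|\int\psi_i r_j\,dq|\leq\|r_j\|_{L^2(q)}=O_p(\frac{1}{\sqrt{\mu_j}\Delta_J\sqrt M})$, and symmetrically $|\int r_i\psi_j\,dq|=O_p(\frac{1}{\sqrt{\mu_i}\Delta_J\sqrt M})$, while $|\int r_i r_j\,dq|\leq\|r_i\|\|r_j\|=O_p(\frac{1}{\sqrt{\mu_i\mu_j}\Delta_J^2M})$ is higher order. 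Summing reproduces exactly the claimed rate $O_p((\frac{1}{\sqrt{\mu_i}}+\frac{1}{\sqrt{\mu_j}})\frac{1}{\Delta_J\sqrt M})$.

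For the diagonal I would write $\int\hat{\psi}_j^2\,dq=\int\psi_j^2\,dq+2\int\psi_j r_j\,dq+\int r_j^2\,dq=1+2\int\psi_j r_j\,dq+O_p(\frac{1}{\mu_j\Delta_J^2M})$, so the $\|r_j\|^2$ term already matches. The main obstacle is the cross term $\int\psi_j r_j\,dq$: the naive Cauchy--Schwarz bound gives only $O_p(\frac{1}{\sqrt{\mu_j}\Delta_J\sqrt M})$, which is \emph{larger} than the claimed $O_p(\frac{1}{\mu_j\Delta_J^2M})$, so this step cannot be closed by norm bounds alone. The resolution is that the first-order perturbation of a normalized eigenfunction is orthogonal to the unperturbed one: in the analytic expansion $r_j=\sum_{\ell\neq j}\frac{\langle\psi_\ell,E\psi_j\rangle}{\mu_j-\mu_\ell}\psi_\ell+O(\|E\|^2)$, with $E$ the operator perturbation, the leading term has no $\psi_j$-component, so $\langle\psi_j,r_j\rangle_{L^2(q)}=O_p(\|E\|^2)$, of the same order as $\|r_j\|^2$, i.e. $O_p(\frac{1}{\mu_j\Delta_J^2M})$. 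The delicate bookkeeping — and the reason this is the crux — is that $\hat{\psi}_j$ is normalized and $T_{q_M}$ self-adjoint with respect to $q_M$, whereas the target inner product is with respect to $q$; reconciling the two (e.g.\ through the contour-integral/resolvent representation of the spectral projection together with the $O_p(1/\sqrt M)$ operator bound, as in the cited works) is where the second-order cancellation must be made rigorous.
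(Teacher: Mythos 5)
The first thing to note is that the paper itself contains no proof of this lemma: it is imported by citation from \citet{sinha2009semi} and \citet{izbicki2014high}, so your attempt can only be judged on its own merits and against the cited literature. On those terms, your off-diagonal argument is correct and complete given \Cref{lem:nystrom-vs-eigen}: expanding $\hat{\psi}_i\hat{\psi}_j=(\psi_i+r_i)(\psi_j+r_j)$, using $q$-orthonormality of $\{\psi_j\}$ and Cauchy--Schwarz with $\|r_j\|_{L^2(q)}=O_p\bigl(\mu_j^{-1/2}\Delta_J^{-1}M^{-1/2}\bigr)$ reproduces exactly the stated off-diagonal rate, and your observation that the Nystr{\"o}m functions are exactly orthonormal in $L^2(q_M)$ is also right.

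The diagonal case, however, has a genuine gap, and it is precisely the step you deferred as ``delicate bookkeeping'': the cancellation $\langle\psi_j,r_j\rangle_{L^2(q)}=O_p(\|E\|^2)$ is not merely unproven — it is false in general, because first-order orthogonality of the eigenfunction correction holds in the inner product in which $\hat{\psi}_j$ is normalized, namely $L^2(q_M)$, not $L^2(q)$. Transporting the normalization constraint $\|\hat{\psi}_j\|_{L^2(q_M)}=1$ to $L^2(q)$ leaves the first-order term $\langle\psi_j,r_j\rangle_{L^2(q)}\approx-\tfrac12\int\psi_j^2\,d(q_M-q)$, a CLT fluctuation of exact order $M^{-1/2}$ whenever $\mathrm{Var}_q(\psi_j^2)>0$. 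A rank-one kernel makes this concrete: for $k(\bx,\by)=\phi(\bx)\phi(\by)$ with $\int\phi^2\,dq=1$, the Nystr{\"o}m formula gives exactly $\hat{\psi}_1=\phi\,\bigl(\frac1M\sum_m\phi(\bx^m)^2\bigr)^{-1/2}$, so $\int\hat{\psi}_1^2\,dq=\bigl(\frac1M\sum_m\phi(\bx^m)^2\bigr)^{-1}=1+O_p(M^{-1/2})$ with non-degenerate limiting variance, even though the spectral gap is $\delta_1=1$ and \Cref{lem:nystrom-vs-eigen} holds there with $\|r_1\|^2=O_p(1/M)$; in this example your ``naive'' Cauchy--Schwarz bound on the cross term is tight. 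Consequently, no resolvent or second-order perturbation argument can recover the printed $O_p\bigl(\frac{1}{\mu_j\Delta_J^2M}\bigr)$ diagonal deviation from \Cref{lem:nystrom-vs-eigen} alone; what your route actually establishes is $\int\hat{\psi}_j^2\,dq=1+O_p\bigl(\mu_j^{-1/2}\Delta_J^{-1}M^{-1/2}\bigr)$. The saving grace is that this weaker rate is all the paper ever uses: in \Cref{lem:series-coeff-vs} the integral enters only through a factor of the form $[\,\cdot\,+1]^{1/2}$, i.e.\ only its $O_p(1)$ boundedness matters, so the conclusion of \Cref{thm:consistency} survives with the bound you can actually prove.
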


\begin{lem} \label{lem:series-nystrom-vs-true}
	\begin{equation}
	\int \left|\tilde{g}_{i, J}(\bx) - g_{i, J}(\bx)\right|^2\;dq = JO_p\left(\frac{C}{\mu_J\Delta_J^2M}\right).
	\end{equation}
\end{lem}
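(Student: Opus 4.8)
The plan is to expand the difference directly in terms of the per-eigenfunction Nyström errors and then bound it by combining \Cref{asp:square-integrable} with \Cref{lem:nystrom-vs-eigen}. First I would write
\[
\tilde{g}_{i,J}(\bx) - g_{i,J}(\bx) = \sum_{j=1}^J \beta_{ij}\left(\hat{\psi}_j(\bx) - \psi_j(\bx)\right),
\]
so that the two truncations differ only through the replacement of each true eigenfunction $\psi_j$ by its Nyström approximation $\hat{\psi}_j$, with the coefficients $\beta_{ij}$ held fixed. Taking the squared $L^2(q)$ norm of both sides, applying the triangle inequality, and then the Cauchy--Schwarz inequality over the finite index set $j=1,\dots,J$, I would obtain
\[
\int \left|\tilde{g}_{i,J}(\bx) - g_{i,J}(\bx)\right|^2 dq \;\le\; \left(\sum_{j=1}^J \beta_{ij}^2\right)\left(\sum_{j=1}^J \int \left(\hat{\psi}_j(\bx) - \psi_j(\bx)\right)^2 dq\right).
\]

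The first factor is controlled by \Cref{asp:square-integrable}, which gives $\sum_{j=1}^J \beta_{ij}^2 \le \sum_{j=1}^\infty \beta_{ij}^2 \le C$. For the second factor I would invoke \Cref{lem:nystrom-vs-eigen} term by term: for each $1\le j\le J$, $\int (\hat{\psi}_j - \psi_j)^2\, dq = O_p(1/(\mu_j \delta_j^2 M))$. The key simplification comes from the ordering of the eigenvalues and spectral gaps guaranteed by \Cref{asp:unique-eigen}: since $\mu_1 > \cdots > \mu_J > 0$ we have $\mu_j \ge \mu_J$ for $j\le J$, and since $\delta_j \ge \Delta_J = \min_{1\le j\le J}\delta_j$ by definition, each summand satisfies $1/(\mu_j \delta_j^2 M) \le 1/(\mu_J \Delta_J^2 M)$. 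Summing the $J$ uniformly majorized terms yields $\sum_{j=1}^J \int (\hat{\psi}_j - \psi_j)^2\, dq = J\,O_p(1/(\mu_J \Delta_J^2 M))$, and multiplying by the constant $C$ gives exactly $J\,O_p(C/(\mu_J \Delta_J^2 M))$.

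The main obstacle is making the aggregation of the $J$ per-index $O_p$ bounds rigorous, since in general $O_p$ statements do not add term by term unless the implicit stochastic bounds hold uniformly over $j$. Here this is resolved precisely by the uniform majorization $1/(\mu_j\delta_j^2 M)\le 1/(\mu_J\Delta_J^2 M)$, which collapses the $J$ heterogeneous rates into a single worst-case rate scaled by the number of terms; this is the step that must be stated with care, whereas the triangle and Cauchy--Schwarz inequalities and the appeal to \Cref{asp:square-integrable} are routine.
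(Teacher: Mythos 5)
Your proposal is correct and follows essentially the same route as the paper: write the difference as $\sum_{j=1}^J\beta_{ij}(\hat{\psi}_j-\psi_j)$, apply Cauchy--Schwarz over the finite index set, bound $\sum_{j=1}^J\beta_{ij}^2$ by $C$ via Assumption 2, and control each Nystr\"om error term via Lemma 1 with the uniform majorization $1/(\mu_j\delta_j^2 M)\le 1/(\mu_J\Delta_J^2 M)$. The only difference is that you make explicit the uniformity needed to sum the $J$ per-index $O_p$ bounds, a point the paper leaves implicit.
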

\begin{proof}
	By Cauchy-Schwartz inequality, \Cref{asp:square-integrable} and \Cref{lem:nystrom-vs-eigen}:
	\begin{equation*}
		\begin{aligned}
		\int \left|\tilde{g}_{i, J}(\bx) - g_{i, J}(\bx)\right|^2\;dq &= \int \left|\sum_{j=1}^J\beta_{ij}\left(\psi_j(\bx) - \hat{\psi}_j(\bx)\right)\right|^2\;dq \\
		&\leq \left(\sum_{j=1}^J\beta_{ij}^2\right)\left(\sum_{j=1}^J\int\left(\psi_j(\bx) - \hat{\psi}_j(\bx)\right)^2\;dq\right)\\
		&= JO_p\left(\frac{C}{\mu_J\Delta_J^2M}\right).
		\end{aligned}
	\end{equation*}
\end{proof}

\begin{lem} \label{lem:grad-nystrom-vs-grad-eigen}
	For all $1\leq j\leq J$,
	\begin{equation}
		\left(\int \left(\nabla_{x_i}\psi_j(\bx) - \nabla_{x_i}\hat{\psi}_j(\bx)\right)\;dq\right)^2 = O_p\left(\frac{C}{\mu_j\delta_j^2M}\right).
	\end{equation}
	
\end{lem}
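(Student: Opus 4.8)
The plan is to sidestep differentiating the Nyström formula directly, and instead use Stein's identity to trade the gradient for the function itself. The crucial observation is that both $\mathbb{E}_q\nabla_{x_i}\psi_j$ and $\mathbb{E}_q\nabla_{x_i}\hat{\psi}_j$ can be rewritten through Stein's identity, so that their \emph{difference} collapses into an inner product against $g_i$ involving only $\psi_j-\hat{\psi}_j$ rather than its derivatives. This is what allows the gradient error to inherit the already-established $L^2$ rate for $\psi_j-\hat{\psi}_j$.

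First I would observe that $\hat{\psi}_j$, being the finite linear combination $\frac{\sqrt{M}}{\lambda_j}\sum_{m=1}^M u_{jm}k(\cdot,\bx^m)$ of kernel sections, lies in the RKHS $\mathcal{H}$. By \Cref{asp:kernel-boundary} together with \Cref{lem:rkhs-stein}, every element of $\mathcal{H}$ is in the Stein class of $q$; hence both $\psi_j$ (already in the Stein class by \Cref{prop:eigen-boundary}) and $\hat{\psi}_j$ are. Applying the identity \eqref{eq:stein} componentwise to each scalar test function then gives $\mathbb{E}_q[\nabla_{x_i}\psi_j]=-\mathbb{E}_q[g_i\psi_j]$ and $\mathbb{E}_q[\nabla_{x_i}\hat{\psi}_j]=-\mathbb{E}_q[g_i\hat{\psi}_j]$. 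Subtracting yields
\[
\int\left(\nabla_{x_i}\psi_j-\nabla_{x_i}\hat{\psi}_j\right)dq=\int g_i\left(\hat{\psi}_j-\psi_j\right)dq.
\]

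Next I would square both sides and invoke the Cauchy--Schwarz inequality to factor the right-hand side,
\[
\left(\int\left(\nabla_{x_i}\psi_j-\nabla_{x_i}\hat{\psi}_j\right)dq\right)^2\leq\left(\int g_i^2\,dq\right)\left(\int\left(\psi_j-\hat{\psi}_j\right)^2dq\right).
\]
The first factor is at most $C$ by \Cref{asp:square-integrable}, and the second factor is exactly the quantity bounded in \Cref{lem:nystrom-vs-eigen}, namely $O_p\!\left(\frac{1}{\mu_j\delta_j^2M}\right)$. Multiplying the two bounds produces the claimed $O_p\!\left(\frac{C}{\mu_j\delta_j^2M}\right)$.

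The one genuine subtlety, and the step I expect to be the crux, is securing that $\hat{\psi}_j$ belongs to the Stein class so that Stein's identity is legitimately applicable to it; this is handled by membership in $\mathcal{H}$ via \Cref{lem:rkhs-stein}. Once that is in place the derivative on the Nyström approximation disappears from the analysis entirely, and everything else is a routine Cauchy--Schwarz estimate combined with the two cited bounds. Note that it is essential here that the error is measured as the square of the \emph{integral} (the difference of expectations), not the integral of the squared gradient difference — it is precisely this averaging that Stein's identity exploits, and a pointwise gradient-error bound on the Nyström formula would be considerably harder to obtain.
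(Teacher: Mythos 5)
Your proposal is correct and follows essentially the same route as the paper: the paper also establishes that both $\psi_j$ and $\hat{\psi}_j$ satisfy the Stein boundary condition (it applies integration by parts to the difference $\delta = \psi_j - \hat{\psi}_j$ directly, which is exactly Stein's identity applied to each term and subtracted, as you do), converts the integrated gradient difference into $\int \delta\, g_i\, dq$, and finishes with Cauchy--Schwarz, \Cref{asp:square-integrable}, and \Cref{lem:nystrom-vs-eigen}. The only cosmetic difference is that the paper justifies $\hat{\psi}_j$ being in the Stein class directly from \Cref{asp:kernel-boundary} (as a finite linear combination of kernel sections) rather than via RKHS membership and \Cref{lem:rkhs-stein}, but both justifications are valid.
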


\begin{proof}
	Denote $\delta(\bx) = \psi_j(\bx) -  \hat{\psi}_j(\bx) $.
	According to \Cref{asp:kernel-boundary}, it is easy to see that $\hat{\psi}_j(\bx)$ satisfies the boundary condition:
	\begin{equation}
		\int \nabla_{\bx}[\hat{\psi}_j(\bx)q(\bx)] d\bx = \bzero.
	\end{equation}
	And from the proof of \Cref{prop:eigen-boundary}, we know $\psi_j(\bx)$ satisfies the boundary condition. Combining the two, we have:
	\begin{equation} \label{eq:diff-boundary}
	\int \nabla_{\bx} [\delta(\bx) q(\bx)]\;d\bx = \bzero.
	\end{equation}
	By \cref{eq:diff-boundary}, \Cref{lem:nystrom-vs-eigen} and \Cref{asp:square-integrable}, we have
	\begin{equation}
	\begin{aligned}
	\left(\int \nabla_{x_i} \delta(\bx)dq\right)^2 &= \left(\int \nabla_{x_i} [\delta(\bx) q(\bx)] - \delta(\bx) \nabla_{x_i} q(\bx)\;d\bx\right)^2 \\
	&= \left(\int  \delta(\bx) \nabla_{x_i} \log q(\bx)\;dq\right)^2 \\
	&\le \left(\int  \delta(\bx)^2\;dq\right)\left(\int g_i(\bx)^2\;dq\right) \\
	&= O_p\left(\frac{C}{\mu_j\delta_j^2M}\right).
	\end{aligned}
	\end{equation}
\end{proof}

\begin{lem} \label{lem:coeff-bound}
	For all $1\leq j\leq J$,
	\begin{equation}
		(\beta_{ij} - \hat{\beta}_{ij})^2 = O_p\left(\frac{1}{M}\right) +  O_p\left(\frac{C}{\mu_j\delta_j^2M}\right).
	\end{equation}
\end{lem}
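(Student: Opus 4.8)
The plan is to split the coefficient error into a \emph{deterministic} bias term, which is controlled by the gradient bound already established in \Cref{lem:grad-nystrom-vs-grad-eigen}, and a \emph{stochastic} Monte Carlo fluctuation term. Recalling that $\beta_{ij} = -\mathbb{E}_q \nabla_{x_i}\psi_j(\bx)$ and $\hat{\beta}_{ij} = -\frac{1}{M}\sum_{m=1}^M \nabla_{x_i}\hat{\psi}_j(\bx^m)$, I would insert the intermediate quantity $\mathbb{E}_q \nabla_{x_i}\hat{\psi}_j(\bx)$ (the conditional expectation over a fresh draw $\bx\sim q$ with the samples held fixed) and write
\begin{equation*}
\beta_{ij} - \hat{\beta}_{ij} = \mathbb{E}_q\left[\nabla_{x_i}\hat{\psi}_j(\bx) - \nabla_{x_i}\psi_j(\bx)\right] + \left(\frac{1}{M}\sum_{m=1}^M \nabla_{x_i}\hat{\psi}_j(\bx^m) - \mathbb{E}_q \nabla_{x_i}\hat{\psi}_j(\bx)\right).
\end{equation*}
Applying the elementary inequality $(a+b)^2 \le 2a^2 + 2b^2$ then reduces the claim to bounding the two squared pieces separately.

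The first piece is exactly the quantity controlled by \Cref{lem:grad-nystrom-vs-grad-eigen}: since $\mathbb{E}_q[\nabla_{x_i}\hat{\psi}_j - \nabla_{x_i}\psi_j] = \int \left(\nabla_{x_i}\hat{\psi}_j(\bx) - \nabla_{x_i}\psi_j(\bx)\right)\,dq$, its square is $O_p\!\left(\frac{C}{\mu_j\delta_j^2 M}\right)$ with no further work. This term accounts for the second summand in the target bound, so no new estimate is needed here.

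The remaining piece is the deviation of the empirical average of $\nabla_{x_i}\hat{\psi}_j$ from its expectation, which I would argue is a Monte Carlo error whose square is $O_p\!\left(\frac{1}{M}\right)$ (equivalently, an $O_p(M^{-1/2})$ fluctuation), relying on a finite second moment of $\nabla_{x_i}\psi_j$ under $q$, which holds for smooth kernels such as the RBF. The main obstacle lives precisely here: because $\hat{\psi}_j$ is itself constructed from the same samples $\bx^{1:M}$, the average $\frac{1}{M}\sum_m \nabla_{x_i}\hat{\psi}_j(\bx^m)$ is not a textbook i.i.d. average and its conditional expectation is not literally an independent integral. I would resolve this by invoking the standard Nystr\"om concentration behind \Cref{lem:nystrom-vs-eigen,lem:nystrom-inner} \citep{sinha2009semi,izbicki2014high}: as $M\to\infty$, $\hat{\psi}_j$ concentrates around its fixed limit $\psi_j$, whose gradient has bounded variance, so the summands have uniformly bounded variance and the sample dependence of $\hat{\psi}_j$ perturbs the fluctuation only at lower order—consistent with the fact that this term carries neither a $C$ nor a $\mu_j^{-1}$ factor in the stated bound. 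Combining the two squared pieces gives $O_p\!\left(\frac{1}{M}\right) + O_p\!\left(\frac{C}{\mu_j\delta_j^2 M}\right)$, as claimed.
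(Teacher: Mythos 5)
Your decomposition into a bias term controlled by \Cref{lem:grad-nystrom-vs-grad-eigen} plus an $O_p(1/\sqrt{M})$ Monte Carlo fluctuation is essentially the paper's own argument; the paper merely inserts the intermediate quantity $\frac{1}{M}\sum_{m=1}^M \nabla_{x_i}\psi_j(\bx^m)$ rather than $\mathbb{E}_q\nabla_{x_i}\hat{\psi}_j$, which isolates one genuinely i.i.d.\ fluctuation before splitting off the same integral that \Cref{lem:grad-nystrom-vs-grad-eigen} bounds. The remaining sample-dependent fluctuation (of the average of $\nabla_{x_i}\psi_j - \nabla_{x_i}\hat{\psi}_j$) is asserted to be $O_p(1/M)$ in the paper without comment, so your explicit acknowledgment of the dependence of $\hat{\psi}_j$ on $\bx^{1:M}$, and your heuristic for why it only perturbs at lower order, is if anything more careful than the source.
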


\begin{proof}
%	Firstly we show that $\nabla_{x}\hat{\psi}_j(\bx)$ is bounded according to \Cref{asp:kernel-grad}.
%	\begin{equation}
%	    |\nabla_{x}\hat{\psi}_j(\bx)| =
%	    |\frac{\sqrt[]{M}}{\lambda_j} \sum_{i=1}^M u_{ji} \nabla_{x} k(\bx, \bx^i)| \le \frac{c \; \sqrt[]{M} }{|\lambda_j|} \sum_{i=1}^M |u_{ji}| = \frac{c \; \sqrt[]{M}}{|\lambda_j|} \sum_{i=1}^M \frac{1}{\sqrt[]{M}} = \frac{c}{\hat{\mu}_j} = \frac{c}{\mu_j}
%	\end{equation}
%	Where the last equation refers to the fact of convergence of $\hat{\mu}_j$
	
%	Next we show that $|\nabla_{x}\psi_j(\bx)| \le \frac{c}{\mu_j}$ is still bounded according to convergence.  \\
%	If we assume $\exists \bx, |\nabla_{x}\psi_j(\bx)| > \frac{c}{\mu_j}$, then according to definition of gradient, we can find $h \in \mathbb{R}$ such that $\psi_j(\bx+h) - \psi_j(\bx-h) =c_1 * 2h > \frac{c}{\mu_j} * 2h$. Then according to convergence, $\exists n\in \mathbb{N}$, such that $\hat{\psi_j}^n(\bx+h) - \hat{\psi_j}^n(\bx-h) = c_2 * 2h > \frac{c}{\mu_j} * 2h$. However,
%	\begin{equation}
%	    |\hat{\psi_j}^n(\bx+h) - \hat{\psi_j}^n(\bx-h)| = |\int_{t=-h}^h \nabla_{\bx+t} \hat{\psi_j}^n(\bx+t) dt | \le \int_{t=-h}^h \frac{c}{\mu_j} dt = \frac{c}{\mu_j} * 2h
%	\end{equation}
%	Contradiction.
%	Therefore, $|\nabla_{x}\psi_j(\bx)| \le \frac{c}{\mu_j}$ is still bouned.
	
%    Finally we show convergence of $\beta$	
	\begin{equation}
	\begin{aligned}
	    \frac{1}{2}(\beta_{ij} - \hat{\beta}_{ij})^2 &\le \left(\beta_{ij} - \frac{1}{M}\sum_{m=1}^M \nabla_{x_i} \psi_j(\bx^m)\right)^2 + \left( \frac{1}{M}\sum_{m=1}^M \left(\nabla_{x_i} \psi_j(\bx^m) - \nabla_{x_i} \hat{\psi}_j(\bx^m)\right)\right)^2 \\
	    &\le   O_p\left(\frac{1}{M}\right) + 2\left[ \frac{1}{M}\sum_{m=1}^M \left(\nabla_{x_i} \psi_j(\bx^m) - \nabla_{x_i} \hat{\psi}_j(\bx^m)\right) - \int \left(\nabla_{x_i} \psi_j(\bx) - \nabla_{x_i} \hat{\psi}_j(\bx) \right)\;dq\right]^2 \\
	    & \quad + 2\left[\int \left(\nabla_{x_i} \psi_j(\bx) - \nabla_{x_i} \hat{\psi}_j(\bx) \right)\;dq\right]^2 \\
	    &= O_p\left(\frac{1}{M}\right) + 2 O_p\left(\frac{1}{M}\right) + 2\left(\int\left(\nabla_{x_i} \psi_j(\bx) - \nabla_{x_i} \hat{\psi}_j(\bx)\right)dq\right)^2.
	\end{aligned}
	\end{equation}
Therefore, by \Cref{lem:grad-nystrom-vs-grad-eigen} we have
\begin{equation}
    (\beta_{ij} - \hat{\beta}_{ij})^2 = O_p\left(\frac{1}{M}\right) +  O_p\left(\frac{C}{\mu_j\delta_j^2M}\right).
\end{equation}

\end{proof}

\begin{lem} \label{lem:series-coeff-vs}
	\begin{equation}
	\int \left|\tilde{g}_{i, J}(\bx) - \hat{g}_{i, J} (\bx)\right|^2 dq = J^2 \left(O_p\left(\frac{1}{M}\right) +  O_p\left(\frac{C}{\mu_j\Delta_j^2M}\right)\right)
	\end{equation}
\end{lem}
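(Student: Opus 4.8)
The plan is to expand the squared $L^2(\mathcal{X},q)$ norm of the difference and reduce everything to the per-coefficient bound of \Cref{lem:coeff-bound} together with the norm bound of \Cref{lem:nystrom-inner}. The key structural observation is that $\tilde{g}_{i,J}$ and $\hat{g}_{i,J}$ are expanded in the \emph{same} basis functions $\hat{\psi}_j$ and differ only in their coefficients, so writing $c_j := \beta_{ij} - \hat{\beta}_{ij}$ we have $\tilde{g}_{i,J}(\bx) - \hat{g}_{i,J}(\bx) = \sum_{j=1}^J c_j\hat{\psi}_j(\bx)$. First I would apply the triangle inequality in $L^2(\mathcal{X},q)$ and then the elementary Cauchy--Schwarz inequality $(\sum_{j=1}^J a_j)^2 \le J\sum_{j=1}^J a_j^2$, giving
\begin{equation*}
\int \left|\tilde{g}_{i,J}(\bx) - \hat{g}_{i,J}(\bx)\right|^2 dq \;\le\; J\sum_{j=1}^J c_j^2 \int \hat{\psi}_j(\bx)^2\,dq .
\end{equation*}
This step produces the first factor of $J$, and crucially it avoids having to control the $J(J-1)$ cross terms $\int \hat{\psi}_j\hat{\psi}_l\,dq$ directly.

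Next I would substitute the two available bounds. By the first (diagonal) part of \Cref{lem:nystrom-inner}, $\int \hat{\psi}_j(\bx)^2\,dq = 1 + O_p\!\left(1/(\mu_j\Delta_J^2 M)\right) = O_p(1)$ for each fixed $j\le J$ as $M\to\infty$, so these norms contribute only a bounded multiplicative factor. By \Cref{lem:coeff-bound}, $c_j^2 = O_p(1/M) + O_p\!\left(C/(\mu_j\delta_j^2 M)\right)$. Multiplying a vanishing $O_p$ rate by an $O_p(1)$ factor leaves the rate unchanged, so each summand $c_j^2\int\hat{\psi}_j^2\,dq$ is again $O_p(1/M) + O_p\!\left(C/(\mu_j\delta_j^2 M)\right)$.

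To collapse the dependence on $j$, I would invoke the ordering in \Cref{asp:unique-eigen} together with the definition $\Delta_J = \min_{1\le j\le J}\delta_j$: since $\mu_j \ge \mu_J$ and $\delta_j \ge \Delta_J$ for all $j\le J$, we have $1/(\mu_j\delta_j^2) \le 1/(\mu_J\Delta_J^2)$, so every summand is bounded above by $O_p(1/M) + O_p\!\left(C/(\mu_J\Delta_J^2 M)\right)$. Summing the $J$ terms supplies the second factor of $J$, and combining it with the prefactor yields the claimed $J^2\left(O_p(1/M) + O_p(C/(\mu_J\Delta_J^2 M))\right)$ (the $\mu_j,\Delta_j$ appearing in the displayed statement should read $\mu_J,\Delta_J$, matching \Cref{thm:consistency}).

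The only genuine subtlety I anticipate is the $O_p$ bookkeeping: I must confirm that a finite sum of $O_p$ terms is $O_p$ of their worst-case rate and that the product of an $O_p(1)$ quantity with a vanishing $O_p$ rate preserves that rate. Both facts are standard but should be stated explicitly. As an alternative I could expand the square as $\sum_{j,l} c_jc_l\int\hat{\psi}_j\hat{\psi}_l\,dq$ and use \emph{both} parts of \Cref{lem:nystrom-inner} to conclude that all Gram entries are $O_p(1)$ (the off-diagonal ones in fact vanishing); bounding $|c_jc_l|\le \tfrac12(c_j^2+c_l^2)$ then recovers the same $J^2$ rate. I would nevertheless favour the triangle-inequality route, since it needs only the diagonal bound and makes the origin of the two separate factors of $J$ transparent.
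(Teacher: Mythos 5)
Your proposal is correct and follows essentially the same route as the paper's proof: Minkowski's inequality in $L^2(\mathcal{X},q)$ to separate the $J$ terms, the per-coefficient bound of \Cref{lem:coeff-bound} combined with the diagonal part of \Cref{lem:nystrom-inner}, and the uniform bound $1/(\mu_j\delta_j^2) \le 1/(\mu_J\Delta_J^2)$ to collapse the sum into the stated $J^2$ rate. Your intermediate use of $(\sum_{j=1}^J a_j)^2 \le J\sum_{j=1}^J a_j^2$ is only a cosmetic rearrangement of the paper's direct bounding of the squared sum, and your observation that the displayed $\mu_j,\Delta_j$ should read $\mu_J,\Delta_J$ matches the paper's own final line.
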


\begin{proof}
	By applying Minkowski inequality, Cauchy-Schwartz inequality, \Cref{lem:coeff-bound} and \Cref{lem:nystrom-inner}, we have
	\begin{equation}
	\begin{aligned}
	    &\int \left|\tilde{g}_{i, J}(\bx) - \hat{g}_{i, J} (\bx)\right|^2 dq = \int \left|\sum_{j=1}^J \beta_{ij}\hat{\psi}_j(\bx)  - \sum_{j=1}^J \hat{\beta}_{ij}\hat{\psi}_j(\bx)\right|^2 dq = \int \left|\sum_{j=1}^J (\beta_{ij} - \hat{\beta}_{ij})\hat{\psi}_j(\bx) \right|^2 dq \\
	    &\le \left\{\sum_{j=1}^J \left[\int \left| (\beta_{ij} - \hat{\beta}_{ij})\hat{\psi}_j(\bx)  \right|^2dq\right]^{\frac{1}{2}}\right\}^2
	    \le  \left\{\sum_{j=1}^J \left[\int \left| (\beta_{ij} - \hat{\beta}_{ij}) \right|^2dq \int \hat{\psi}_j^2(\bx) \;dq \right]^{\frac{1}{2}}\right\}^2 \\
%	    &=  \{\sum_{j=1}^J [\int | (\beta_j - \hat{\beta}_j) |^2 dq  ]^{0.5}\}^2
%	    = \{\sum_{j=1}^J [ O_p(\frac{1}{M}) +  O_p\left(\frac{C}{\mu_j\delta_j^2M}\right) ]^{0.5}\}^2 \\
		&= \left\{\sum_{j=1}^J \left[O_p\left(\frac{1}{M}\right) +  O_p\left(\frac{C}{\mu_j\delta_j^2M}\right)\right]^{\frac{1}{2}}\left[O_p\left(\frac{1}{\mu_j\Delta_J^2M}\right) + 1\right]^{\frac{1}{2}}\right\}^2 \\
	    &= J^2 \left(O_p\left(\frac{1}{M}\right) +  O_p\left(\frac{C}{\mu_J\Delta_J^2M}\right)\right)
	\end{aligned}
	\end{equation}
%	\sjx{TODO: improve to O(J)}
\end{proof}

\begin{thm}[Estimation Error] \label{lem:est-error}
	\begin{equation}
		\int \left|\hat{g}_{i, J}(\bx) - g_{i, J}(\bx)\right|^2\;dq = J^2 \left(O_p\left(\frac{1}{M}\right) +  O_p\left(\frac{C}{\mu_J\Delta_J^2M}\right)\right) + JO_p\left(\frac{C}{\mu_J\Delta_J^2M}\right)
	\end{equation}
\end{thm}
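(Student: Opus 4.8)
The plan is to bound the estimation error $\int |\hat{g}_{i,J}(\bx) - g_{i,J}(\bx)|^2\,dq$ by inserting the intermediate function $\tilde{g}_{i, J}$ and applying the triangle inequality in $L^2(\mathcal{X}, q)$. Recall the three objects built from the truncated series: $g_{i,J}$ uses the true coefficients and true eigenfunctions, $\tilde{g}_{i,J}$ uses the true coefficients but the Nystr\"om eigenfunctions, and $\hat{g}_{i,J}$ uses both the estimated coefficients and the Nystr\"om eigenfunctions. The point of $\tilde{g}_{i,J}$ is that it isolates the two distinct sources of estimation error: replacing $\psi_j$ by $\hat\psi_j$ (the gap $\tilde{g}_{i,J} - g_{i,J}$) and replacing $\beta_{ij}$ by $\hat\beta_{ij}$ (the gap $\hat{g}_{i,J} - \tilde{g}_{i,J}$). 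Each of these has already been controlled by \Cref{lem:series-nystrom-vs-true} and \Cref{lem:series-coeff-vs}, respectively, so the theorem should follow by simply gluing them together.

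Concretely, I would first write, using the $L^2(q)$ norm and the elementary inequality $(a+b)^2 \le 2a^2 + 2b^2$,
\begin{equation*}
\int \left|\hat{g}_{i, J}(\bx) - g_{i, J}(\bx)\right|^2 dq \le 2\int \left|\hat{g}_{i, J}(\bx) - \tilde{g}_{i, J}(\bx)\right|^2 dq + 2\int \left|\tilde{g}_{i, J}(\bx) - g_{i, J}(\bx)\right|^2 dq.
\end{equation*}
Then I would substitute the bound from \Cref{lem:series-coeff-vs} into the first integral and the bound from \Cref{lem:series-nystrom-vs-true} into the second, yielding
\begin{equation*}
\int \left|\hat{g}_{i, J}(\bx) - g_{i, J}(\bx)\right|^2 dq \le 2J^2\left(O_p\left(\frac{1}{M}\right) + O_p\left(\frac{C}{\mu_J\Delta_J^2 M}\right)\right) + 2J\,O_p\left(\frac{C}{\mu_J\Delta_J^2 M}\right).
\end{equation*}
Finally, the constant factors of $2$ are absorbed into the $O_p$ terms, which gives exactly the claimed bound.

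Because the decomposition is linear and both constituent lemmas are assumed, the proof of this theorem is essentially routine and I expect no real obstacle here. The genuine work lives upstream: \Cref{lem:series-coeff-vs} requires the coefficient bound (\Cref{lem:coeff-bound}) together with the norm control on the Nystr\"om eigenfunctions (\Cref{lem:nystrom-inner}) via Minkowski's inequality, which is where the $J^2$ factor arises from summing $J$ terms each of order $J$; and \Cref{lem:series-nystrom-vs-true} depends on the Cauchy--Schwartz step combined with \Cref{asp:square-integrable} and the per-eigenfunction Nystr\"om error bound (\Cref{lem:nystrom-vs-eigen}). The only subtlety to watch for in the present proof is bookkeeping with the $O_p$ notation, i.e., confirming that the triangle-inequality cross terms do not introduce a worse rate and that the two lemmas' rates (with $\delta_j$ replaced by the uniform $\Delta_J$) combine cleanly into the stated expression.
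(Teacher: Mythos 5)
Your proof is correct and follows essentially the same route as the paper: insert the intermediate quantity $\tilde{g}_{i,J}$, bound the two gaps by \Cref{lem:series-nystrom-vs-true} and \Cref{lem:series-coeff-vs}, and absorb constants into the $O_p$ notation. You are in fact slightly more careful than the paper, which writes the squared-$L^2$ triangle inequality without the factor of $2$ that your $(a+b)^2 \le 2a^2 + 2b^2$ step correctly supplies (harmless here since the constants vanish into $O_p$).
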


\begin{proof}
	By \cref{lem:series-nystrom-vs-true} and \cref{lem:series-coeff-vs}.
	\begin{equation}
		\begin{aligned}
			\int \left|\hat{g}_{i,J}(\bx) - g_{i,J}(\bx)\right|^2\;dq &\leq \int \left|\tilde{g}_{i,J}(\bx) - g_{i,J}(\bx)\right|^2\;dq + \int \left|\tilde{g}_{i,J}(\bx) - \hat{g}_{i,J} (\bx)\right|^2 dq \\
			&= J^2 \left(O_p\left(\frac{1}{M}\right) +  O_p\left(\frac{C}{\mu_J\Delta_J^2M}\right)\right) + JO_p\left(\frac{C}{\mu_J\Delta_J^2M}\right)
		\end{aligned}
	\end{equation}
\end{proof}

\begin{thm}[Approximation Error]\label{lem:trunc-error}
	\begin{equation}
		\int \left|g_{i, J}(\bx) - g_i(\bx)\right|^2 dq = \|g_i\|^2_{\mathcal{H}}O(\mu_J)
	\end{equation}
\end{thm}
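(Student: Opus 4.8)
The plan is to recognize the approximation error as the tail of the spectral series and then to control that tail using the RKHS-norm characterization supplied by \Cref{lem:l2-vs-rkhs}. First I would observe that, by the definitions in \cref{eq:grad-series}, the difference is simply the truncated tail, $g_{i,J}(\bx) - g_i(\bx) = -\sum_{j=J+1}^\infty \beta_{ij}\psi_j(\bx)$. Since $\{\psi_j\}_{j\geq 1}$ is orthonormal in $L^2(\mathcal{X}, q)$ by \cref{eq:eigenfun-orthon}, all cross terms integrate to zero and Parseval's identity gives
\[
\int \left|g_{i,J}(\bx) - g_i(\bx)\right|^2\,dq = \sum_{j=J+1}^\infty \beta_{ij}^2.
\]
So the entire task reduces to bounding this tail by $\|g_i\|_{\mathcal{H}}^2\,O(\mu_J)$.

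Next I would invoke \Cref{lem:l2-vs-rkhs}, which identifies the RKHS as $\mathcal{H}=\{\sum_i a_i\psi_i : \{a_i/\sqrt{\mu_i}\}\in\ell^2\}$. Together with \Cref{lem:mercer}, this yields the induced inner product $\langle f, g\rangle_{\mathcal{H}} = \sum_i a_i b_i/\mu_i$, and in particular, expanding $g_i = \sum_j \beta_{ij}\psi_j$, the RKHS norm is $\|g_i\|_{\mathcal{H}}^2 = \sum_{j=1}^\infty \beta_{ij}^2/\mu_j$. The main (and only nontrivial) step is then to factor $\mu_j$ out of each tail coefficient and exploit the strict ordering of eigenvalues from \Cref{asp:unique-eigen}. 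Writing $\beta_{ij}^2 = \mu_j\cdot(\beta_{ij}^2/\mu_j)$ and using $\mu_j \le \mu_{J+1} \le \mu_J$ for every $j\ge J+1$, I obtain
\[
\sum_{j=J+1}^\infty \beta_{ij}^2 \;\le\; \mu_{J+1}\sum_{j=J+1}^\infty \frac{\beta_{ij}^2}{\mu_j} \;\le\; \mu_J \sum_{j=1}^\infty \frac{\beta_{ij}^2}{\mu_j} \;=\; \mu_J\,\|g_i\|_{\mathcal{H}}^2,
\]
which is exactly $\|g_i\|_{\mathcal{H}}^2\,O(\mu_J)$, completing the argument.

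The step I expect to require the most care is the passage to the RKHS norm: strictly speaking it presumes $g_i \in \mathcal{H}$, i.e. $\sum_j \beta_{ij}^2/\mu_j < \infty$. When this fails the right-hand side is infinite and the claimed bound holds vacuously, so the substantive content is confined to the case $g_i \in \mathcal{H}$; I would state this dichotomy explicitly. A secondary point to verify is the legitimacy of interchanging summation and integration when applying Parseval, which is justified by \Cref{asp:square-integrable} ($\sum_j \beta_{ij}^2 \le C < \infty$) guaranteeing $L^2$ convergence of the series. Everything else is a routine application of monotonicity of the eigenvalues, so once the norm formula from \Cref{lem:l2-vs-rkhs} is in hand the bound follows in a single line.
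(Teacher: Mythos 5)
Your proposal is correct and follows essentially the same route as the paper's proof: identify the truncation error as the tail $\sum_{j>J}\beta_{ij}^2$ via orthonormality, write each term as $\mu_j\cdot(\beta_{ij}^2/\mu_j)$, and bound $\mu_j\le\mu_J$ to pull out $\mu_J\|g_i\|_{\mathcal{H}}^2$ using the RKHS-norm characterization from \Cref{lem:l2-vs-rkhs}. Your added remarks on the implicit hypothesis $g_i\in\mathcal{H}$ (without which the bound is vacuous) and on justifying Parseval are careful points the paper leaves unstated, but they do not change the argument.
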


\begin{proof}
	\begin{equation}
		\int \left|g_{i, J}(\bx) - g_i(\bx)\right|^2 dq = \sum_{j> J}\beta_{ij}^2 = \sum_{j>J} \frac{\beta_{ij}^2}{\mu_j}\mu_j \leq \mu_J\sum_{j>J}\frac{\beta_{ij}^2}{\mu_j} = \mu_J\|g_i\|^2_\mathcal{H}
	\end{equation}
\end{proof}

\begin{thm}[Error Bound of SSGE]
	\begin{equation}
		\int \left(\hat{g}_{i, J}(\bx) - g_i(\bx)\right)^2\;dq = J^2 \left(O_p\left(\frac{1}{M}\right) +  O_p\left(\frac{C}{\mu_J\Delta_J^2M}\right)\right) + JO_p\left(\frac{C}{\mu_J\Delta_J^2M}\right) + \|g_i\|^2_{\mathcal{H}}O(\mu_J).
	\end{equation}
\end{thm}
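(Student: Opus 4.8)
The plan is to obtain the total error by combining the two component bounds already established in the appendix, namely the Estimation Error theorem (\Cref{lem:est-error}) and the Approximation Error theorem (\Cref{lem:trunc-error}), via a single triangle-inequality decomposition. The natural intermediate quantity is the truncated series with the \emph{true} eigenfunctions, $g_{i,J}(\bx) = \sum_{j=1}^J \beta_{ij}\psi_j(\bx)$, which separates the two distinct sources of error: replacing the true coefficients and eigenfunctions by their Nystr{\"o}m estimates ($\hat{g}_{i,J}$ vs.\ $g_{i,J}$), and truncating the infinite series ($g_{i,J}$ vs.\ $g_i$).

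Concretely, I would write $\hat{g}_{i,J}(\bx) - g_i(\bx) = \big(\hat{g}_{i,J}(\bx) - g_{i,J}(\bx)\big) + \big(g_{i,J}(\bx) - g_i(\bx)\big)$ and apply the pointwise inequality $(a+b)^2 \le 2a^2 + 2b^2$ before integrating against $q$, giving
\begin{equation*}
\int \big(\hat{g}_{i,J}(\bx) - g_i(\bx)\big)^2 dq \le 2\int \big(\hat{g}_{i,J}(\bx) - g_{i,J}(\bx)\big)^2 dq + 2\int \big(g_{i,J}(\bx) - g_i(\bx)\big)^2 dq .
\end{equation*}
Using $(a+b)^2 \le 2a^2+2b^2$ rather than Minkowski's inequality conveniently dispenses with the cross term entirely. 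I would then invoke \Cref{lem:est-error} to bound the first integral by $J^2\big(O_p(1/M) + O_p(C/(\mu_J\Delta_J^2 M))\big) + J\,O_p(C/(\mu_J\Delta_J^2 M))$ and \Cref{lem:trunc-error} to bound the second integral by $\|g_i\|_{\mathcal{H}}^2\,O(\mu_J)$. Finally I would absorb the two factors of $2$ into the $O_p$ and $O$ notation and add the two bounds to reach the claimed expression.

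There is essentially no substantive obstacle at this final assembly step, since all of the analytical effort lives in the preceding lemmas (in particular the Nystr{\"o}m eigenfunction bounds \Cref{lem:nystrom-vs-eigen,lem:nystrom-inner} feeding into \Cref{lem:coeff-bound,lem:series-nystrom-vs-true,lem:series-coeff-vs}). The only points I would verify explicitly are that a finite sum of terms each bounded in probability is again bounded in probability at the stated rate, so that the sum of the estimation-error $O_p$ terms and the deterministic approximation-error $O(\mu_J)$ term is legitimately an $O_p$ statement overall; and, had I instead kept the cross term from Minkowski, that $2\|\hat g_{i,J}-g_{i,J}\|_{L^2}\,\|g_{i,J}-g_i\|_{L^2}$ is of no larger order than the two squared terms by AM-GM (the product of $O_p(J/\sqrt{M})$-type and $O(\sqrt{\mu_J})$-type factors is dominated by $J^2/M + \mu_J$). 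With these routine checks in place the bound follows immediately.
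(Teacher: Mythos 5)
Your proposal is correct and follows essentially the same route as the paper: decompose through the truncated series $g_{i,J}$ with the true eigenfunctions, bound the two pieces by the Estimation Error and Approximation Error theorems, and absorb constants into the $O_p$ and $O$ notation. Your use of $(a+b)^2 \le 2a^2 + 2b^2$ is in fact slightly more careful than the paper, which writes the squared-error triangle inequality without the factor of $2$ (harmless here since it is absorbed by the asymptotic notation).
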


\begin{proof}
	By \cref{lem:est-error} and \cref{lem:trunc-error}, we have
	\begin{equation}
	\begin{aligned}
	\int \left(\hat{g}_{i, J}(\bx) - g_i(\bx)\right)^2\;dq &\leq \int \left|\hat{g}_{i,J}(\bx) - g_{i, J}(\bx)\right|^2\;dq + \int \left|g_{i, J}(\bx) - g_i(\bx)\right|^2 dq \\
	&= J^2 \left(O_p\left(\frac{1}{M}\right) +  O_p\left(\frac{C}{\mu_J\Delta_J^2M}\right)\right) + JO_p\left(\frac{C}{\mu_J\Delta_J^2M}\right) + \|g_i\|^2_{\mathcal{H}}O(\mu_J)
	\end{aligned}
	\end{equation}
\end{proof}

\section{Derivation of Gradient Estimates for Entropy}
First, we decompose the gradients into two terms:
\label{app:entropy-grad}
\begin{equation*}
\begin{aligned}
\nabla_{\phi}\mathbb{H}(q) = -\nabla_{\phi}\mathbb{E}_{q_{\phi}}\log q_{\phi}(\bx) 
= -\nabla_{\phi}\mathbb{E}_q \log q_{\phi}(\bx) - \nabla_{\phi}\mathbb{E}_{q_{\phi}} \log q(\bx).
\end{aligned}
\end{equation*}
Then it is easy to see that the first term is zero:
\begin{equation*}
    \nabla_{\phi}\mathbb{E}_q \log q_{\phi}(\bx) = \int q(\bx)\nabla_{\phi}\log q_{\phi}(\bx) \;d\bx
    = \int \nabla_{\phi}q_{\phi}(\bx)\;d\bx
    = \nabla_{\phi}\int q_{\phi}(\bx)\;d\bx = 0.
\end{equation*}
So we have
\begin{equation} \label{eq:entropy-second}
    \nabla_{\phi}\mathbb{H}(q) = -\nabla_{\phi}\mathbb{E}_{q_{\phi}}\log q(\bx).
\end{equation}
A low-variance Monte-Carlo estimate of \cref{eq:entropy-second} can	 be obtained when $q(\bx)$ is continuous and reparameterizable~\citep{kingma2013auto}. For example, if there exists a random variable $\beps\sim \mathcal{N}(\bzero, \bI)$, so that $\bx = f(\beps; \phi)$ follows the same distribution as $q(\bx)$, we have
\begin{equation*}
\begin{aligned}
\nabla_{\phi}\mathbb{H}(q) &= -\nabla_{\phi}\mathbb{E}_{\beps\sim \mathcal{N}(\bzero, \bI)}\log q(f(\beps;\phi)) \\
&= -\mathbb{E}_{\beps\sim \mathcal{N}(\bzero, \bI)}\nabla_{\phi}\log q(f(\beps;\phi)) \\
&= -\mathbb{E}_{\beps\sim \mathcal{N}(\bzero, \bI)}\nabla_{\bx}\log q(f(\beps;\phi))\nabla_{\phi}f(\beps;\phi),
\end{aligned}
\end{equation*}
where the intractable term $\nabla_{\bx}\log q(f(\beps;\phi))$ can be easily estimated by SSGE.

\section{MNIST Results}
\label{app:mnist}

We did an MNIST experiment on a VAE with an 8-dim latent space. In \Cref{fig:mnist-vae,fig:mnist-noent,fig:mnist-spectral} we plot random generations by a plain VAE, an Implicit VAE with the entropy term removed, and an Implicit VAE trained by SSGE, all with the same decoder structures. In \Cref{tab:mnist} we show the log likelihoods of the trained models (VAE and the Implicit VAE with SSGE) on 2048 test images using Annealed Importance Sampling (AIS). The results are averaged over 10 runs. For reference, we also include the results of 8-dim VAEs from the AVB paper~\citep{mescheder2017adversarial}. Though their decoder structure is not the same as ours (the structure is even not the same for their three models), we can see our method is slightly better than plain VAE without other tricks, while AVB has to rely on Adaptive Contrast (AC) and different decoder structures.

\begin{figure}[h]
    \begin{subfigure}[b]{.23\columnwidth}
        \centering
        \centerline{\includegraphics[height=3.5cm]{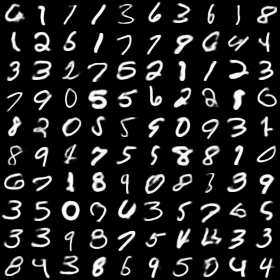}}
        \caption{VAE}
        \label{fig:mnist-vae} %\vspace{-0.5cm}
    \end{subfigure}
    \hskip 0.02in
    \begin{subfigure}[b]{.23\columnwidth}
        \centering
        \centerline{\includegraphics[height=3.5cm]{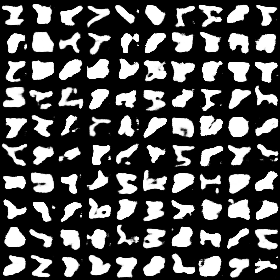}}
        \caption{Implicit VAE, w/o entropy}
        \label{fig:mnist-noent} %\vspace{-0.5cm}
    \end{subfigure}
    \hskip 0.02in
    \begin{subfigure}[b]{.23\columnwidth}
        \centering
        \centerline{\includegraphics[height=3.5cm]{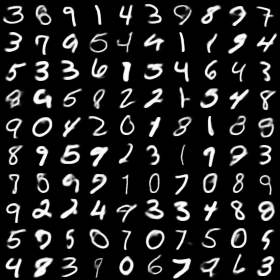}}
        \caption{Implicit VAE, Spectral}
        \label{fig:mnist-spectral} %\vspace{-0.5cm}
    \end{subfigure}
    \hskip 0.02in
    \begin{subtable}[b]{0.28\columnwidth}
        \centering
        \begin{sc}                
            {\footnotesize
                \begin{tabular}{lr}
                    \toprule
                    Method & Test LL. \\
                    \midrule
                    VAE  		  & -89.5 $\pm$ 0.6  \\
                    Spectral 	  & -89.4 $\pm$ 0.7  \\
                    \midrule
                    VAE & -90.9 $\pm$ 0.6 \\
                    AVB  & -91.2 $\pm$ 0.6 \\
                    AVB + AC & -89.6 $\pm$ 0.6  \\
                    \bottomrule
            \end{tabular}}
        \end{sc}
        \caption{}
        \label{tab:mnist}
    \end{subtable}
    \vspace{-0.2cm}
    \caption{Results on the MNIST dataset: (a)-(c) Samples generated by VAE, Implicit VAE trained without the entropy term, and Implicit VAE trained by SSGE; (d) Test log likelihoods evaluated by AIS.}\vspace{-0.2cm}
\end{figure}

\end{document}